\newtheorem{theorem}{Theorem}[]
\newtheorem{lemma}[theorem]{Lemma}
\newcommand{\etal}{{et al}.~}
\newcommand{\ie}{\textit{i}.\textit{e}.~}
\newcommand{\eg}{\textit{e}.\textit{g}.~}
\newcommand{\centered}[1]{\begin{tabular}{l} #1 \end{tabular}}
\begin{document}
% The file aaai.sty is the style file for AAAI Press 
% proceedings, working notes, and technical reports.
%
\title{Spectral Analysis for Semantic Segmentation with Applications on Feature Truncation and Weak Annotation}
\author{Li-Wei Chen$^\dagger$, Wei-Chen Chiu$^\dagger$, Chin-Tien Wu$^{\dagger \ddagger}$ \\
$\dagger$ Department of Computer Science, National Yang Ming Chiao Tung University, Hsin-Chu, Taiwan \\
$\ddagger$ Department of Applied Mathematics, National Yang Ming Chiao Tung University, Hsin-Chu, Taiwan \\
{lwchen6309.cs07@nycu.edu.tw, walon@cs.nctu.edu.tw, ctw@math.nctu.edu.tw}
}

\maketitle

\begin{abstract}
% \begin{quote}
It is well known that semantic segmentation neural networks (SSNNs) produce dense segmentation maps to resolve the objects' boundaries while restrict the prediction on down-sampled grids to alleviate the computational cost. A striking balance between the accuracy and the training cost of the SSNNs such as U-Net exists. We propose a spectral analysis to investigate the correlations among the resolution of the down sampled grid, the loss function and the accuracy of the SSNNs. By analyzing the network back-propagation process in frequency domain, we discover that the traditional loss function, cross-entropy, and the key features of CNN are mainly affected by the low-frequency components of segmentation labels. Our discoveries can be applied to SSNNs in several ways including (i) determining an efficient low resolution grid for resolving the segmentation maps (ii) pruning the networks by truncating the high frequency decoder features for saving computation costs, and (iii) using block-wise weak annotation for saving the labeling time. Experimental results shown in this paper agree with our spectral analysis for the networks such as DeepLab V3+ and Deep Aggregation Net (DAN).

% \end{quote}
\end{abstract}

\section{Introduction}\label{sec:intro}
Semantic segmentation, which densely assigns semantic labels to each image pixel, is one of the important topics in computer vision. Recently we have witnessed that the CNNs based on the encoder-decoder architecture~\cite{long2015fully,chen2018encoder,badrinarayanan2017segnet,zhao2017psp,chen2014semantic,yang8578486,noh7410535,yu2015multi} achieve striking performance on several segmentation benchmarks~\cite{mottaghi2014role,everingham2015pascal,cordts2016cityscapes,zhou2017ade,caesar2018coco}. 
The existing works, \eg Fully Convolutional Neural Network (FCN)~\cite{long2015fully}, U-Net \cite{ronneberger2015u}, and DeepLab models~ \cite{chen2014semantic,chen2017rethinking,chen2018encoder}, utilize encoder-decoder architecture to resolve the dense map. 
Generally, an encoder-decoder architecture consists of an encoder module that gradually reduces the spatial resolution of features for extracting context information and a decoder module that aggregates the information from the encoder and recovers the spatial resolution of the dense segmentation map. On the other hand, these networks predict the dense segmentation map on a \textbf{low-resolution grid (LRG)}, \eg $\frac{1}{4}$ and $\frac{1}{8}$ of the original image resolution, which is then up-sampled to the original image resolution, to save the computational cost~\cite{chen2018encoder,long2015fully}.
Moreover, to save the labeling cost, some networks can learn sufficient semantic contents from the weak annotations such as the annotation with coarse contours of objects~\cite{papandreou2015weakly,dai2015boxsup,khoreva2017simple} or those annotations derived from the image level ~\cite{ahn2018learning,jing2019coarse,zhou2019wails,shimoda2019self,sun2019fully,nivaggioli2019weakly}. 
Despite the labeling inaccuracy near object boundaries, the existing works demonstrate that networks can still achieve comparable accuracy with those networks that learn from the pixel-wise groundtruth annotation. 
Weak annotations such as those based on the coarse contour are indeed similar to the annotation obtained by down-sampling the pixel-wise groundtruth annotation onto the LRG. 
Regardless of the success of using LRG for prediction and the learning from weak annotations for cost saving, it yet remains unclear how the LRG and weak annotations affect the accuracy of SSNNs in theory. 

Recently, ~\cite{rahaman2018spectral,ronen2019convergence,luo2019theory,yang2019fine,xu2019frequency} have demonstrated that the trained networks tend to learn the low-frequency component of target signals in the regression of the uniformly distributed data. Such tendency is the so-called spectral bias. In SSNNs, LRG and weak annotation resemble the low-frequency component of segmentation maps. On the other hand, the labeling inaccuracy near object boundaries is more likely to be in the high-frequency range. Is it true that the SSNNs also have spectral bias? Is the spectral bias the reason why the SSNNs can maintain their accuracy when the LRG and weak annotation are applied? The answers to these questions are not trivial especially since the distribution of segmentation annotation is not necessarily uniformly distributed along frequency regime. In this work, we investigate the correlations among the frequency distributions of ground truth annotation, network output segmentation, and the evaluation metric (\eg intersection-over-union (IoU) score) in frequency domain. We present a theoretical analysis on the influence of the objective function (\eg cross-entropy (CE)), CNN features, and the LRG on the network prediction by using the spectral analysis.

\noindent From our theoretical results, we identify important features of the segmentation network and summarize the following key observations:
\begin{itemize}
\item The cross-entropy (CE) can be explicitly decomposed into the summation of frequency components. We find that CE is mainly contributed by the low-frequency component of segmentation maps.
\item Spectral analysis of IoU score reveals its close relation to the CE. These results justify that the segmentation networks are trained upon CE while its performance is
evaluated upon IoU. Additionally, we discover boundary IoU, a metric specialized for object boundary, is also mainly contributed by the low-frequency component of segmentation maps.
\item The correlation between the segmentation logits and the features within CNNs in the frequency domain shows that the segmentation logits of a specific frequency are mainly affected by the features within the same frequency.
\item Based on the findings above, truncating high frequency components of smooth features does not interfere the performance of semantic segmentation networks.
\end{itemize}
\noindent Our findings above contribute to the semantic segmentation networks in the following two objectives:
\begin{enumerate}
    \item \textbf{Feature truncation for segmentation networks.} One can determine the proper size of LRG from spectral analysis. The features in the decoder, which are generally smoother than these in the encoder, can then be truncated to reduce the computational cost. This truncation method can be integrated with the commonly-used pruning approaches~\cite{liu2018rethinking,he2019asymptotic} for further cost reduction. We save nearly 70\% cost of SSNNs in the optimal case of our experiments. 
    \item \textbf{Block-wise annotation.} For semantic segmentation, it is easier to collect a weak annotation that keeps the low-frequency information of the full pixel-wise ground truth annotation. Here, we propose a block-wise annotation that emulates the weak annotations, where only the coarse contours of the instances in the segmentation map~\cite{papandreou2015weakly,khoreva2017simple} are used, and show that the segmentation networks trained via these block-wise annotations are still efficient and accurate. 
\end{enumerate}
This paper is organized as follows. We present the proposed analysis and validate our observations in section~\ref{sec:method}. Applications shown in section~\ref{sec:experiments} demonstrate the contributions of our spectral analysis to SSNNs. The experimental results are expected in our analysis. We relegate the review of related work such as segmentation network, \ie SSNN, network pruning, and spectral bias in section~\ref{ssec:related work} of the appendix. Finally, we draw our conclusions in section ~\ref{sec:conclusion}

\section{Proposed Spectral Analysis}\label{sec:method}

To clearly depict the efficacy of LRG, in section \ref{ssec:method_spectral_ce}, we analyze the formalism of cross-entropy (CE) objective function and the intersection-over-union (IoU) evaluation metric in frequency domain. The detailed analysis of IoU can be seen in section~\ref{ssec:method_spectral_ioU} of appendix.
Our results show that CE can be decomposed into the components of frequencies and positively correlate to IoU score; this justifies the usage of CE as objective function and IoU as evaluation metric in the network training framework. 
Moreover, we investigate the learning mechanism in convolutional layers by deducing the gradient propagation in frequency domain. We demonstrate the correlation between the segmentation output and the features in CNNs in section \ref{ssec:method_spec_grad}.  
Our results suggest that the high-frequency components of the features and annotations have less influence on the performance of the segmentation networks due to the band limit introduced by the LRG. 
\\
\noindent\textbf{Notation}. 
The notations in this section are defined as follows. In general, the upper case letters, \eg $X, Y, Z, B$, denote the functional in the spatial domain while the lower case letters \eg $x, y, z, b$, denote the corresponding spectrum in the frequency domain $\nu$. For example, the spectrum $y(\nu)=\mathcal{F}(Y(t))$ where  $\mathcal{F}$ is the Fourier transform operator. The rest of notations will be defined whenever they appear.

\subsection{Spectral Analysis on Cross-Entropy and IoU}\label{ssec:method_spectral_ce}
Let $Y(t,c)$ denote the segmentation logits produced by a semantic segmentation network and $B(t,c)$ denote the groundtruth annotation, in which $c$ and $t$ are indexes of the object class and image pixel of the input. respectively. The commonly-used objective function for learning semantic segmentation, cross-entropy (CE), can be written as
\begin{equation}
    \begin{aligned}
         \mathcal{L}_{CE} & = -  { \sum _{c}} {\int} B(t,c) {log} (   { \frac {e^{Y(t,c)}}{{ \sum _{c}} e^{Y(t,c)}}}) dt 
        \\ & = - { \sum _{c}} {\int} B(t,c)(Y(t,c) - {Y_{p}}(t))dt,
    \end{aligned}
    \label{Eq:ce}
\end{equation}
where ${Y_{p}}(t)={log}({ \sum _{c}} e^{Y(t,c)})$. Transforming this integral into frequency domain $\nu$ gives theorem~\ref{theorem:ce}. See proof of theorem~\ref{theorem:ce} in section~\ref{ssec:proof_spectral_ce} of appendix.
\begin{restatable}[Spectral decomposition of Cross-Entropy]{theorem}{ce}
\label{theorem:ce}
% \begin{theorem}\label{theorem:ce}
Given $Y$ the segmentation logits and $B$ the groundtruth annotation, the cross-entropy $\mathcal{L}_{CE}$ can be decomposed as 
% $\mathcal{L}_{CE} = {\sum _{\nu}} \mathcal{L}_{ce}(\nu)$, 
\begin{equation}
    \begin{aligned}
        \\ \mathcal{L}_{CE} & = { \sum _{\nu}} { \sum_{c} }{b}(-\nu, c)({y_{p}}(\nu) - y(\nu, c))
        \\ & = { \sum _{\nu}} \mathcal{L}_{ce}(\nu),
        \\ \mathcal{L}_{ce}(\nu) &= {\sum_{c}}{b}(-\nu, c)({y_{p}}(\nu) - y(\nu, c))
    \end{aligned}
    \label{Eq:spec_ce_component}
\end{equation}
where $b, y,$ and $y_{p}$ are the spectra of $B, Y,$ and $Y_{p}$, respectively; $\mathcal{L}_{ce}(\nu)$ is defined as the \textbf{frequency components of CE}. 
% \end{theorem}
\end{restatable}

The contribution from each frequency component to CE can thus be evaluated. Let $\widehat{y}(\nu,c) = {y_{p}}(\nu) - y(\nu, c)$. It follows immediately that $\mathcal{L}_{ce}(\nu)$ is small when either $\widehat{y}(\nu,c)$ or $b(\nu,c)$ is small. Recalls that, as mentioned in section~\ref{sec:intro}, $Y(t,c)$ is generally predicted upon the low-resolution grid. Hence, $\widehat{y}(\nu_i,c)$ as well as $\mathcal{L}_{ce}(\nu_i)$ should be small at high-frequency $\nu_i$. 
Besides, the magnitude of high-frequency components of the ground truth annotation is generally small dues to the intrinsic smoothness of segmentation map~\footnote{Segmentation map usually consists of several segments for each class. All the internal region of the segment is flat while only the regions near the segmentation boundaries contain high-frequency components.}.
One can therefore conclude that $\mathcal{L}_{ce}$ is mainly contributed by the low-frequency components. This conclusion will be validated later in section~\ref{ssec:exp_spec_ce_grad:ce} and leads to our \textcolor{green}{first key observation}.

Furthermore, by extending the definition of IoU to a continuous space, the IoU can be represented in frequency domain as 
\begin{equation}
    \begin{aligned}
    IoU({s,b}) =  \frac{1}{{\frac{{ {s(0) + b(0)} }}{{{\int} s(\nu)b(- \nu)d\nu}} - 1}},
    \end{aligned}
    \label{Eq:iou_ft_brief}
\end{equation}
where $S(t,c)={ \frac {e^{Y(t,c)}}{{ \sum _{\tilde{c}}} e^{Y(t,\tilde{c})}}}$ and $s$ is the spectra of $S$; we skip $c$ for simplicity. The detailed derivation of Eq~\ref{Eq:iou_ft_brief} are illustrated in section~\ref{ssec:method_spectral_ioU} of appendix. 
There, we demonstrate that the $IoU(s,b)$ is positively correlates to the CE in Eq.~\ref{Eq:ce}. Since $\mathcal{L}_{ce}$ is mainly contributed by low-frequency components as mentioned above, so does the $IoU(s,b)$. Moreover, we also analyze the boundary IoU~\cite{cheng2021boundary} in section~\ref{ssec:method_spectral_bioU} of appendix. The boundary IoU focuses on the segmentation evaluation measure near object boundaries. Our analytic result suggests that the boundary IoU is also mainly contributed by low-frequency components of the segmentation map $S$. 
This further demonstrates the boundary IoU is not only sensitive to object boundary, as highlighted in~\cite{cheng2021boundary}, but also also sensitive to smooth region, \ie low-frequency component, thus provides an comprehensive measure of segmentation map.
Hence, we conclude that IoU and boundary IoU are mainly contributed by low-frequency components. This corresponds to our \textcolor{green}{second key observation}. In section ~\ref{ssec:exp_blk_annot}, we validate this observation by showing how the IoU is changed by using LRG and WA which truncate the high-frequency components indeed.

From our first and second observations, the common learning procedure (the networks are trained by the CE objective function while being validated by the IoU scores) for SSNNs is rationalized. Later, we adopt the decomposition of CE to study the frequency response and take IoU as a reasonable metric for evaluation in our experiments and analysis. On the other hand, training the SSNNs using WA, which drops the high-frequency component of segmentation maps, seems to be a cost-effective approach compared to the training using expensive pixel-wise groundtruth annotation.
Most recent works in studying WA ~\cite{papandreou2015weakly,dai2015boxsup,khoreva2017simple,GUO2021108063,LU2021107924,ahn2018learning,jing2019coarse,zhou2019wails,shimoda2019self,sun2019fully,nivaggioli2019weakly} demonstrate efficacy of WA by experiments on benchmark datasets. Here, our spectral analysis provides a theoretical justification for the usage of WA.

\subsection{Spectral Gradient of Convolutional Layers}\label{ssec:method_spec_grad}
In this section, we further reveal how $\mathcal{L}_{ce}(\nu)$ updates the network by deducing the gradient propagation of $\mathcal{L}_{ce}(\nu)$ within CNNs, especially for the gradient in low-frequency regime. 
Hereafter, we refer the gradient propagation with respect to input feature $X$ in frequency domain as the \textbf{spectral gradient (SpG)}. With SpG, we would like to analyze how the low-frequency feature of SSNNs affect the network performance. For simplicity, we deduce the SpG for a single convolutional layer, including a convolution and an activation function
\begin{equation}
    \begin{aligned}
        Z(t) & = K(t) \otimes X(t)
        \\ Y(t) & =\sigma(Z(t)),
    \end{aligned}
    \label{Eq:conv_layer}
\end{equation} 
where $K$ is the kernel, $X$ is the the input feature, $Y$ is the output of convolutional layer and $\sigma$ is the differentiable soft-plus activation function $\sigma(Z(t)) = \log ({1+{e^{Z(t)}}})$. 
The SpG for a convolutional layer is given as below.
\begin{restatable}[The spectral gradient for a convolutional layer]{theorem}{ce_grad}
\label{theorem:ce_grad}
Let $B$ be the groundtruth annotation and $X$, $K$ and $Y$ be as in Eq.~\ref{Eq:conv_layer}. Assume $K(t)$ is small and $|X(t)| < 1$ \footnote{These assumptions rely on the fact that the numeric scale of feature and kernel are usually limited to a small range of value for the numeric stability of networks.} 
The spectral gradient of the output $y$ is
\begin{equation}
    \begin{aligned}
         \frac{\partial y(\nu_i)}{\partial x(\nu_j)} \simeq \frac{1}{2}k( \nu_j){{\delta}_{\nu_j}(\nu_i)},
    \end{aligned}
\label{Eq:convlayer_spec_grad}
\end{equation}
and the spectral gradient of $\mathcal{L}_{ce}(\nu)$ is 
\begin{equation}
    \begin{aligned}
        \frac{{\partial \mathcal{L}_{ce}(\nu_i)}}{{\partial x(\nu_j)}} 
         \simeq & \sum _c \frac{1}{2}k(\nu_j,c)[D_{0}(\nu_i) s(-\nu_j, c) 
         \\ &- \delta_{\nu_j}(\nu_i)b(-\nu_i,c)],
    \end{aligned}
\label{Eq:ce_spec_grad}
\end{equation}
where $y(\nu) = {\cal F}(Y(t))$, $b(\nu) = {\cal F}(B(t))$, $k(\nu) = {\cal F}(K(t))$, $x(\nu) = {\cal F}(X(t))$, and $s(\nu) = {\cal F}(S(t))$ the spectrum of segmentation output. ${\delta}_{\nu_j}(\nu_i)$ is the Kronecker delta function, which is here after abbreviated as the delta function, and $D_{0}(\nu_i)$ is the Dirac delta function. 
\end{restatable}
We relegate the proof of Eq.~\ref{Eq:convlayer_spec_grad} and Eq.~\ref{Eq:ce_spec_grad} to lemma~\ref{lemma:conv_grad} in appendix~\ref{ssec:sgd_conv} and lemma~\ref{lemma:ce_grad} in appendix~\ref{ssec:sgd_ce}, respectively.
Clearly from the SpG in Eq.~\ref{Eq:convlayer_spec_grad}, $y(\nu_i)$ is affected only by $x(\nu_i)$ with the same frequency. \textcolor{green}{This is our third key observation.}\\
Next, let us further analyze how the variation of feature $ x(\nu_j)$ affect $\mathcal{L}_{ce}(\nu_i)$ based on Eq.~\ref{Eq:ce_spec_grad}. For simplicity, we consider the case $\nu_i \neq 0$ and $\nu_i = 0$ separately. 
For $\nu_i \neq 0$ and $ \nu_j \neq 0$, the SpG of $\mathcal{L}_{ce}$ becomes 
\begin{equation}
    \begin{aligned}
        \frac{{\partial \mathcal{L}_{ce}(\nu_i)}}{{\partial x(\nu_j)}} 
         \simeq & -\sum _c \frac{1}{2}k(\nu_j,c) \delta_{\nu_j}(\nu_i)b(-\nu_i,c), 
    \end{aligned}
\label{Eq:ce_spec_grad_ineq0}
\end{equation}
which indicates that $\mathcal{L}_{ce}(\nu_i)$ is affected only by the CNN feature $x(\nu_j)$ with same frequency. 
For $\nu_i = 0$, the gradient consists of an additional term $\sum _c \frac{1}{2}k(\nu_j,c)D_{0}(\nu_i) s(-\nu_j, c)$. 
Recalls that segmentation map $S(t, c)$ is usually predicted upon the LRG as mentioned in section \ref{sec:intro}. As a result, $s(\nu, c)$ and $s(-\nu_j, c)$ should therefore be small when $\nu_j$ is large. Hence, we have 
\begin{equation}
    \begin{aligned}
        \frac{{\partial \mathcal{L}_{ce}(\nu_i)}}{{\partial x(\nu_j)}} 
         \ll 1, \text{if~} \nu_j \text{~is large} \text{~and~} \nu_i = 0. 
    \end{aligned}
\label{Eq:ce_spec_grad_ieq0}
\end{equation}
It follows from Eq.~\ref{Eq:ce_spec_grad_ineq0} and Eq.~\ref{Eq:ce_spec_grad_ieq0} that $\mathcal{L}_{ce}(\nu_i)$ is affected only by the CNN feature $x(\nu_j)$ with near frequency. 
Thus, removing the features $x(\nu_j)$ at high-frequency $\nu_j$ does not effect $\mathcal{L}_{ce}(\nu_i)$ at low-frequency $\nu_i$. \textcolor{green}{This is our fourth key observation.} 
We provide the numerical validation of third and fourth observations in section~\ref{ssec:exp_spec_ce_grad:grad}.
With these observations, it becomes possible to keep the performance of SSNNs while reducing the feature size, as well as the high-frequency components, of the decoder in SSNNs. 

% \subsection{Discussion for Spectral Analysis}\label{ssec:method_discussion}
% In this section, we summarize the discussion for the spectral analysis. Recalls the segmentation map is usually predicted upon the LRG of the original image. Here we aim to determine a LRG that efficiently reduce the cost while preserve prediction accuracy of SSNNs.
% Following the decomposition of CE in Eq.~\ref{Eq:spec_ce_final_discrete}, we define the truncated CE as the frequency components of CE filtered by LRG
% \begin{equation}
%     \begin{aligned}
%         \widehat{\mathcal{L}}_{CE}({\nu}_{max}) = { \sum _{\nu}^{{\nu}_{max}}} \mathcal{L}_{ce}(\nu),
%     \end{aligned}
%     \label{Eq:spectral_ce_bandlimit}
% \end{equation}
% where ${\nu}_{max}$ is the band limit that is half of the LRG size. Further, we define
% \begin{equation}
%     R_{ce}({\nu}_{max}) = |1 - \frac{\widehat{\mathcal{L}}_{CE}(\nu_{max})}{\mathcal{L}_{CE}}|
%     \label{Eq:r_max}
% \end{equation}
% as the losses of $\mathcal{L}_{CE}$ due to the truncation at band limit $\nu_{max}$. Hence, an efficient LRG can be defined when $R_{ce}({\nu}_{max})$ is negligible. The cost on groundtruth annotation can also be alleviated by using WA based on the LRG. In addition, the LRG can be applied to truncate features in CNNs to further save computational cost. In the following section, we shall validate the positive correlation between the the loss of CE and the loss of IoU score caused by LRG and determine an efficient LRG. Then, we shall apply the efficient LRG on the features of CNN and the groundtruth annotation. 

\section{Validation and Applications}\label{sec:experiments}
This section aims to validate the spectral analysis in section~\ref{sec:method} and further propose some applications such as the feature truncation and block-wise annotation. 
In section~\ref{ssec:exp_spec_ce_grad}, we show that  %the frequency components of CE in Eq.~\ref{Eq:spec_ce_component} and the spectral gradient in Eq.~\ref{Eq:ce_spec_grad}. 
(i) the frequency components of CE, $\mathcal{L}_{ce}(\nu)$, is dominated by the low-frequency component as mentioned in section~\ref{sec:method} and (ii) the SpG,  $\frac{{\partial \mathcal{L}_{ce}(\nu_i)}}{{\partial x(\nu_j)}}$, can be well approximated as the delta function as shown in Eq.~\ref{Eq:ce_spec_grad_ineq0}.
Based on these numeric validations, we identify the efficient LRGs and apply the grids onto the features in CNNs and the groundtruth annotation. This leads us to two applications : (1) Feature truncation and (2) Block-wise annotation that down-sample the segmentation maps into LRGs. Experiments for these applications are  detailed in section~\ref{ssec:exp_feat_trunc} and section~\ref{ssec:exp_blk_annot}, respectively. 
\\
\noindent\textbf{Datasets and Segmentation Networks.} 
We examine the experiments upon the following three semantic segmentation datasets: PASCAL semantic segmentation benchmark \cite{everingham2015pascal}, DeepGlobe land-cover classification challenge \cite{demir2018deepglobe} and Cityscapes pixel-level semantic labeling task \cite{cordts2016cityscapes} (denoted as PASCAL, DeepGlobe and Cityscapes respectively). For segmentation networks, we utilize DeepLab v3+~\cite{chen2018encoder} and Deep Aggregation Net (Dan)~\cite{kuo2018dan}. Implementation details of experiments can be seen in  appendix~\ref{ssec:implement}. 

\subsection{Validation of Spectral Analysis}\label{ssec:exp_spec_ce_grad}

\subsubsection{Spectral Decomposition of CE.}\label{ssec:exp_spec_ce_grad:ce}
This section aims to demonstrate that CE is mainly contributed by the low-frequency component, as discussed in section~\ref{ssec:method_spectral_ce}, and investigate the efficacy of using the LRG for prediction by the modern networks. 
% Original section 2.3
Following Eq.~\ref{Eq:spec_ce_component}, we define the truncated CE as the frequency components of CE filtered by LRG
\begin{equation}
    \begin{aligned}
        \widehat{\mathcal{L}}_{ce}({\nu}_{max}) = { \sum _{\nu}^{{\nu}_{max}}} \mathcal{L}_{ce}(\nu),
    \end{aligned}
    \label{Eq:spectral_ce_bandlimit}
\end{equation}
where ${\nu}_{max}$ is the band limit that is half of the LRG size. Further, we define the relative loss rate of CE 
\begin{equation}
    R_{ce}({\nu}_{max}) = |1 - \frac{\widehat{\mathcal{L}}_{ce}({\nu}_{max})}{\mathcal{L}_{CE}}|
    \label{Eq:r_max}
\end{equation}
as a measurement to the loss rate of $\mathcal{L}_{CE}$ due to the band limit ${\nu}_{max}$ of segmentation maps. Hence, an efficient LRG can be defined when $R_{ce}({\nu}_{max})$ is negligible. 
% In addition, the LRG can be applied to truncate features in CNNs to further save computational cost. In the following section, we shall validate the positive correlation between the the loss of CE and the loss of IoU score caused by LRG and determine an efficient LRG. Then, we shall apply the efficient LRG on the features of CNN and the groundtruth annotation. 
We evaluate $|b(\nu)|$, $|\widehat{y}(\nu)|$, ${\mathcal{L}}_{ce}(\nu)$ and $R_{ce}({\nu}_{max})$ of DeepLab v3+ and Dan on various datasets (PASCAL, DeepGlobe and Cityscapes) where  $|b(\nu)|$ and $|\widehat{y}(\nu)|$ are the averaged power spectrum of $b(\nu,c)$ and $\widehat{y}(\nu,c)$ over all semantic classes $c$ respectively. 
The results are shown in the first three rows of Fig.~\ref{Fig:spectral_model_stat}.~\footnote{The profiles of $|b(\nu)|$ and $|\widehat{y}(\nu)|$ are normalized with respect to their corresponding maximal values for better comparison over datasets.} To monitor the training progress, the evaluation of ${\mathcal{L}}_{ce}(\nu)$ at both the initial and final stages of network training are further shown  in the forth row of the Fig.~\ref{Fig:spectral_model_stat}. Clearly, the first two rows of Fig.~\ref{Fig:spectral_model_stat} indicate that both ground truth annotation $|b(\nu)|$ and prediction $|\widehat{y}(\nu)|$ consist of more low-frequency power. Moreover,  $|\widehat{y}(\nu)|$ is indeed very small in the high-frequency region thus leads to small ${\mathcal{L}}_{ce}(\nu)$, as shown in the third row of the Fig.~\ref{Fig:spectral_model_stat}  
\begin{figure}[h]
    \centering
    \includegraphics[width=1.0\columnwidth]{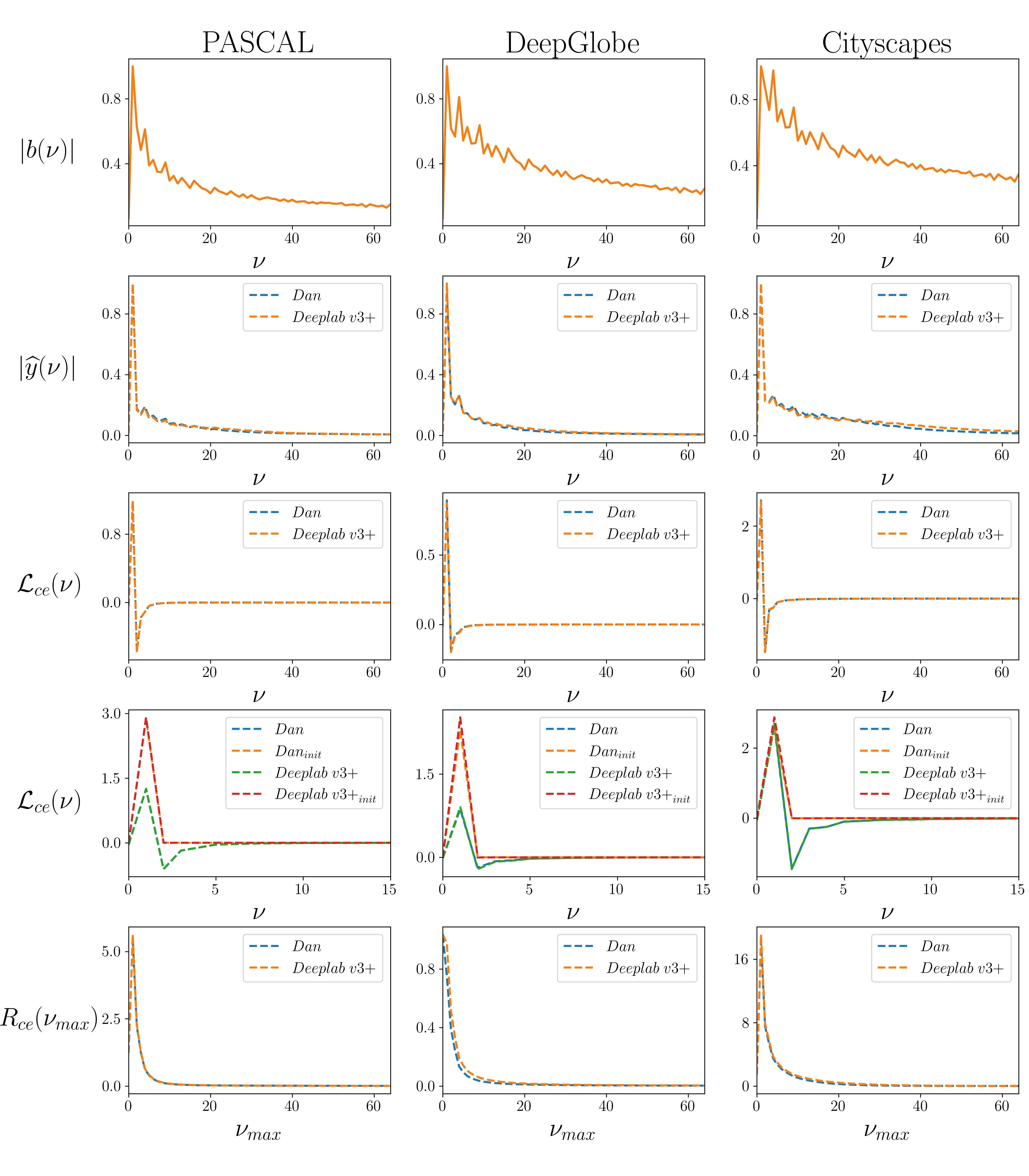} %dc
    \caption{The magnitude of the spectrum $b(\nu)$, $\widehat{y}(\nu)$, ${\mathcal{L}}_{ce}(\nu)$ and the relative loss rate $R_{ce}({\nu}_{max})$ of CE are evaluated based on DeepLab v3+ and Dan.  The comparison of ${\mathcal{L}}_{ce}(\nu)$ for both initial and final training stages are illustrated in the forth row; the suffix $_{init}$ additionally denotes the profiles of the network at initial stage.
    }
    \label{Fig:spectral_model_stat}
\end{figure}
%
% \begin{figure}[h]
%     \centering
%     \includegraphics[width=1.\columnwidth]{} % dc
%     % \includegraphics[width=0.9\columnwidth]{figures/training_stat.png} % sc
%     \caption{Spectral Decomposition of CE for DeepLab v3+ and Dan at both initial and final training stages. The notation follows Fig.~\ref{Fig:spectral_model_stat}. The suffix $_{init}$ additionally denotes the profiles of the network at initial stage.}
%     \label{Fig:spectral_training_stat}
% \end{figure}
Furthermore, the forth row of the Fig.~\ref{Fig:spectral_model_stat} reveals that the low-frequency components of ${\mathcal{L}}_{ce}(\nu)$  decreases more aggressively in training SSNNs. In other word, the SSNNs learn to capture low-frequency components more effectively.   
These results support our first key observation that CE is mainly contributed by the low-frequency components and the spectral bias indeed exists for the SSNNs as well.
\begin{table}[h]
\centering
\caption{$R_{ce}({\nu}_{max})$ the relative loss rate of CE under various band limit ${{\nu}_{max}}$, evaluated on the PASCAL, DeepGlobe and Cityscapes datasets.}
\label{table:spectral_stat}
\resizebox{1.0\columnwidth}{!}{%sc
\begin{tabular}{ccccccccc}
\multicolumn{1}{c|}{Dataset \textbackslash ${\nu}_{max}$} & \multicolumn{1}{c|}{256} & \multicolumn{1}{c|}{64} & \multicolumn{1}{c|}{48} & \multicolumn{1}{c|}{40} & \multicolumn{1}{c|}{32} & \multicolumn{1}{c|}{24} & \multicolumn{1}{c|}{16} & 8 \\ \hline
\multicolumn{9}{c}{DeepLab v3+} \\ \hline
\multicolumn{1}{c|}{PASCAL} & \multicolumn{1}{c|}{0} & \multicolumn{1}{c|}{0.008} & \multicolumn{1}{c|}{0.011} & \multicolumn{1}{c|}{0.014} & \multicolumn{1}{c|}{\textbf{0.017}} & \multicolumn{1}{c|}{0.023} & \multicolumn{1}{c|}{0.040} & \textbf{0.184} \\
\multicolumn{1}{c|}{DeepGlobe} & \multicolumn{1}{c|}{0} & \multicolumn{1}{c|}{0.003} & \multicolumn{1}{c|}{0.004} & \multicolumn{1}{c|}{0.005} & \multicolumn{1}{c|}{\textbf{0.007}} & \multicolumn{1}{c|}{0.009} & \multicolumn{1}{c|}{0.017} & \textbf{0.053} \\
\multicolumn{1}{c|}{Cityscape} & \multicolumn{1}{c|}{0} & \multicolumn{1}{c|}{0.014} & \multicolumn{1}{c|}{0.020} & \multicolumn{1}{c|}{0.030} & \multicolumn{1}{c|}{\textbf{0.065}} & \multicolumn{1}{c|}{0.175} & \multicolumn{1}{c|}{0.517} & \textbf{1.782} \\ \hline
\multicolumn{9}{c}{Dan} \\ \hline
\multicolumn{1}{c|}{PASCAL} & \multicolumn{1}{c|}{0} & \multicolumn{1}{c|}{0.008} & \multicolumn{1}{c|}{0.011} & \multicolumn{1}{c|}{0.014} & \multicolumn{1}{c|}{\textbf{0.017}} & \multicolumn{1}{c|}{0.023} & \multicolumn{1}{c|}{0.045} & \textbf{0.197} \\
\multicolumn{1}{c|}{DeepGlobe} & \multicolumn{1}{c|}{0} & \multicolumn{1}{c|}{0.004} & \multicolumn{1}{c|}{0.006} & \multicolumn{1}{c|}{0.007} & \multicolumn{1}{c|}{\textbf{0.010}} & \multicolumn{1}{c|}{0.014} & \multicolumn{1}{c|}{0.027} & \textbf{0.084} \\
\multicolumn{1}{c|}{Cityscape} & \multicolumn{1}{c|}{0} & \multicolumn{1}{c|}{0.017} & \multicolumn{1}{c|}{0.040} & \multicolumn{1}{c|}{0.075} & \multicolumn{1}{c|}{\textbf{0.153}} & \multicolumn{1}{c|}{0.328} & \multicolumn{1}{c|}{0.723} & \textbf{2.004}
\end{tabular}
}
\end{table}

Next, to investigate the efficacy of LRGs, we plot the $R_{ce}({\nu}_{max})$ with respect to the change of the band limit ${\nu}_{max}$ in the last row of Fig.~\ref{Fig:spectral_model_stat}. 
For comparison, we examine the numeric value of $R_{ce}({\nu}_{max})$ in Table~\ref{table:spectral_stat}. Note that the band limit ${\nu}_{max}$ of our baseline grid is 256 (i.e. image size 513). We have $\mathcal{L}_{CE}=\widehat{\mathcal{L}}_{ce}(256)$. Obviously, one can see that the loss rate of $\mathcal{L}_{ce}(\nu)$ changes rapidly when $\nu > 8$ and decreases by about a factor of 10 at the band limit ${\nu}_{max}=32$. These empirical evidences suggest that the LRGs with $ 16 \leq {\nu}_{max} \leq 32$ could efficiently sample the segmentation map without significant information loss. The efficiency of these LRGs is further demonstrated in our proposed applications, \ie feature truncation and block-wise annotation, in sections~\ref{ssec:exp_feat_trunc} and section~\ref{ssec:exp_blk_annot}, respectively. We shall expect a better efficacy of the feature truncation and the block-wise annotation on the PASCAL and DeepGlobe datasets since $R_{ce}({\nu}_{max})$ on the Cityscapes dataset is significantly larger than those on the PASCAL and DeepGlobe datasets.  

\subsubsection{Spectral Gradient.}\label{ssec:exp_spec_ce_grad:grad}
We now turn to investigate the SpGs introduced in section~\ref{ssec:method_spec_grad}, including $\frac{\partial y(\nu_i)}{\partial x(\nu_j)}$ in Eq.~\ref{Eq:convlayer_spec_grad} and $\frac{{\partial {\mathcal{L}}_{ce}(\nu_i)}}{{\partial x(\nu_j)}}$ in Eq.~\ref{Eq:ce_spec_grad} for DeepLab v3+ and Dan. The SpGs $\frac{{\partial {\mathcal{L}}_{ce}(\nu_i)}}{{\partial x(\nu_j)}}$ are evaluated on three datasets (PASCAL, DeepGlobe, and Cityscapes datasets), where $x(\nu_j)$ is the spectra of ASPP (atrous spatial pyramid pooling) features. The images of these SpGs are shown in Fig.~\ref{Fig:spec_grad_deeplab} and Fig.~\ref{Fig:spec_grad_dan}. 
Moreover, we also validate the SpGs $\frac{\partial y(\nu_i)}{\partial x(\nu_j)}$ for operations including the convolution, ReLU, and bilinear up-sampling within the decoder module of these networks; where $x(\nu_j)$ and $y(\nu_i)$ are the input spectra and output spectra of the operations, respectively. The images of the SpGs are shown in Fig.~\ref{Fig:spec_grad}. Only the SpGs of the frequency $\nu_i=(0, \frac{M}{4}, \frac{M}{2})$ are shown as examples in  Fig.~\ref{Fig:spec_grad_deeplab}, Fig.~\ref{Fig:spec_grad_dan} and Fig.~\ref{Fig:spec_grad} where $M$ is the size of input features and $M=33$ in this experiment.

\begin{figure}[ht]
    \captionsetup[subfigure]{}
    \centering
    % \begin{subfigure}[]{0.9\columnwidth} %dc
    \begin{subfigure}[]{0.45\columnwidth}
        \centering
        \includegraphics[width=1.0\columnwidth]{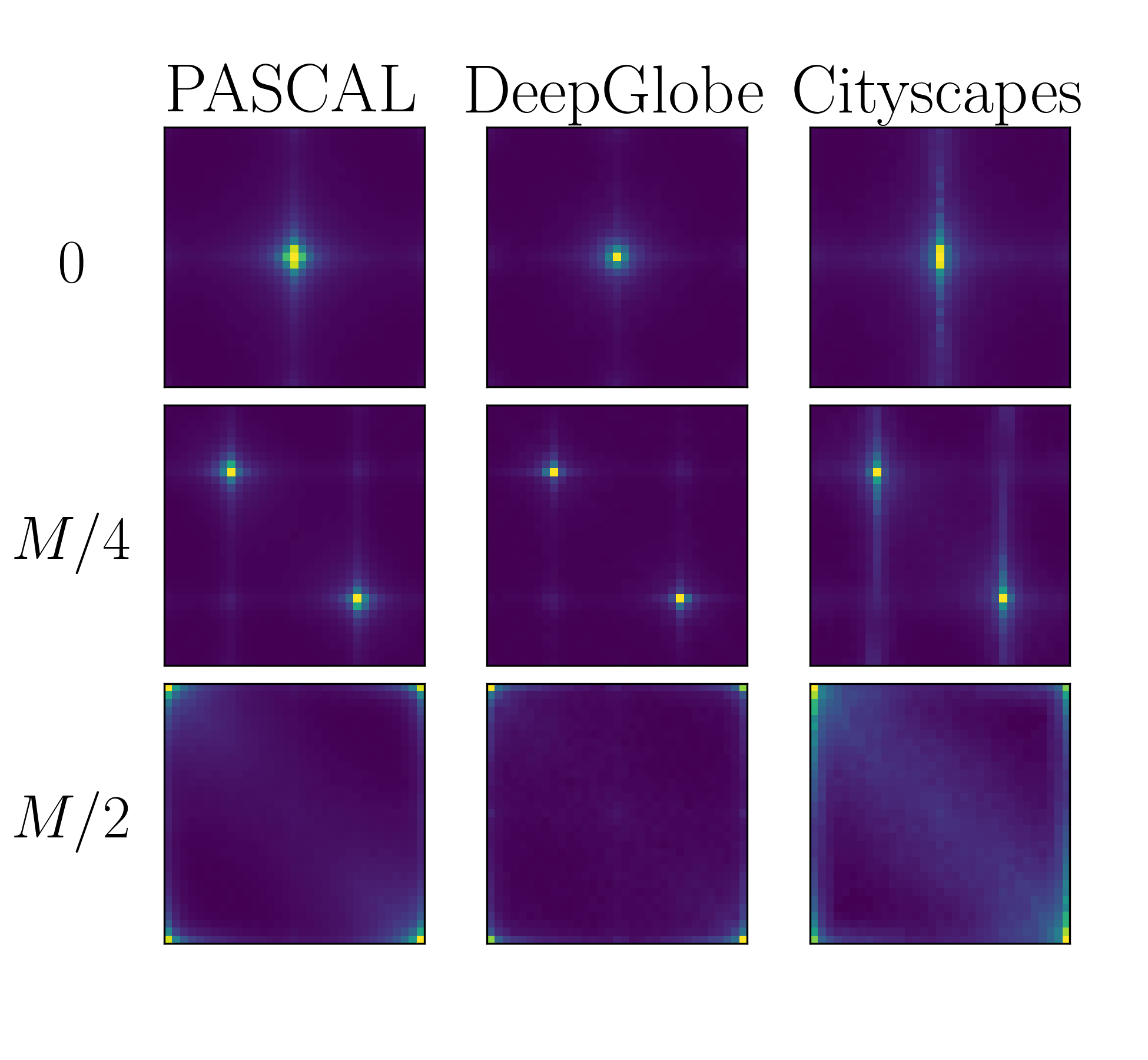} %sc
        \caption{DeepLab v3+}
        \label{Fig:spec_grad_deeplab}
    \end{subfigure}
    % \begin{subfigure}[]{0.9\columnwidth} %dc
    \begin{subfigure}[]{0.45\columnwidth}
        \centering
        \includegraphics[width=1.0\columnwidth]{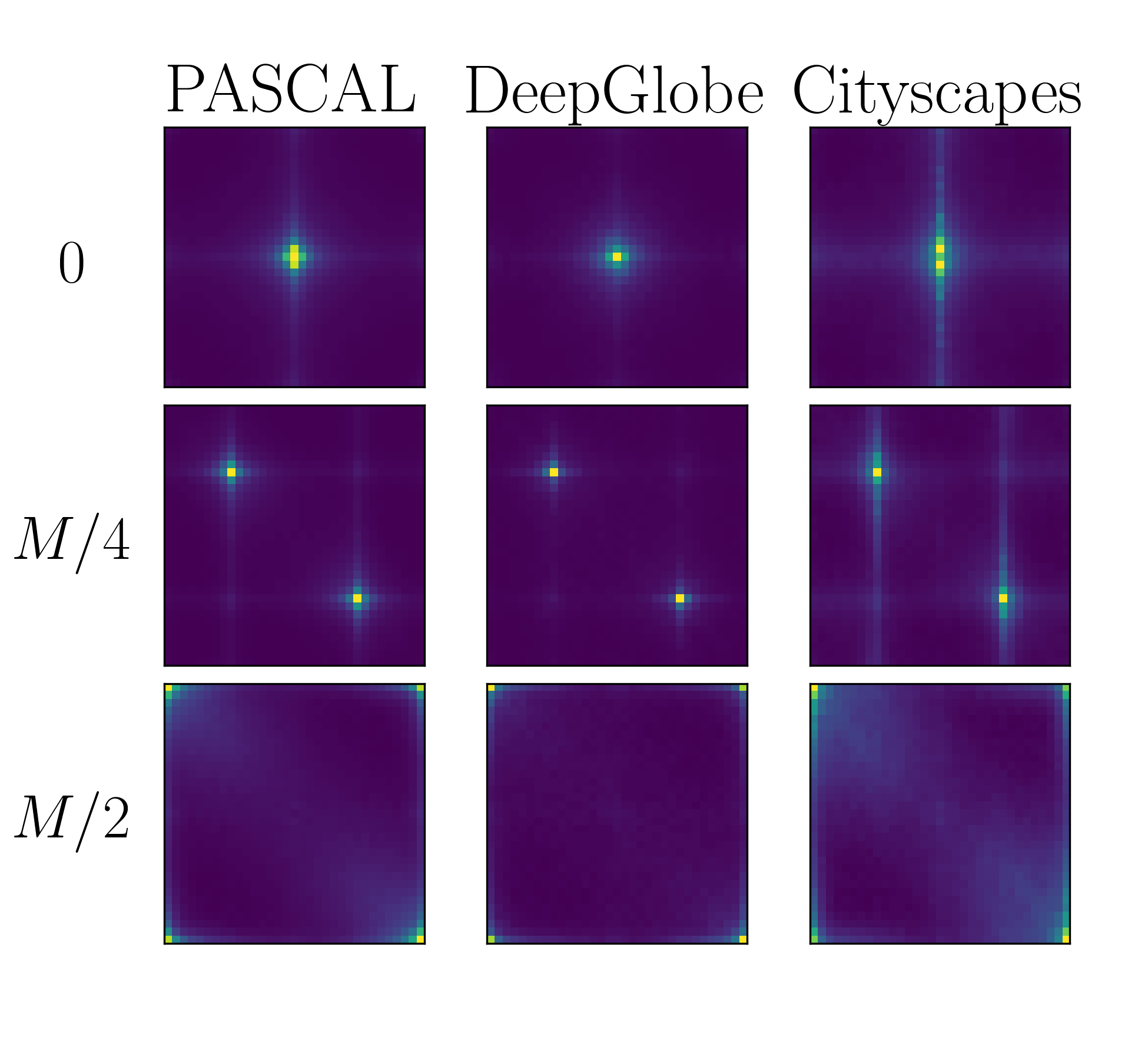}
        \caption{Dan}
        \label{Fig:spec_grad_dan}    
    \end{subfigure}
    \caption{The SpG ${{\partial {\mathcal{L}}_{ce}(\nu_i)}}/{{\partial x(\nu_j)}}$ of frequency $\nu_i=0$, $\frac{M}{2}$ and $\frac{M}{2}$ for DeepLab v3+ and Dan evaluated on PASCAL, DeepGlobe and Cityscapes datasets.}
\end{figure}
\begin{figure}[ht]
    \centering
    \includegraphics[width=0.45\columnwidth]{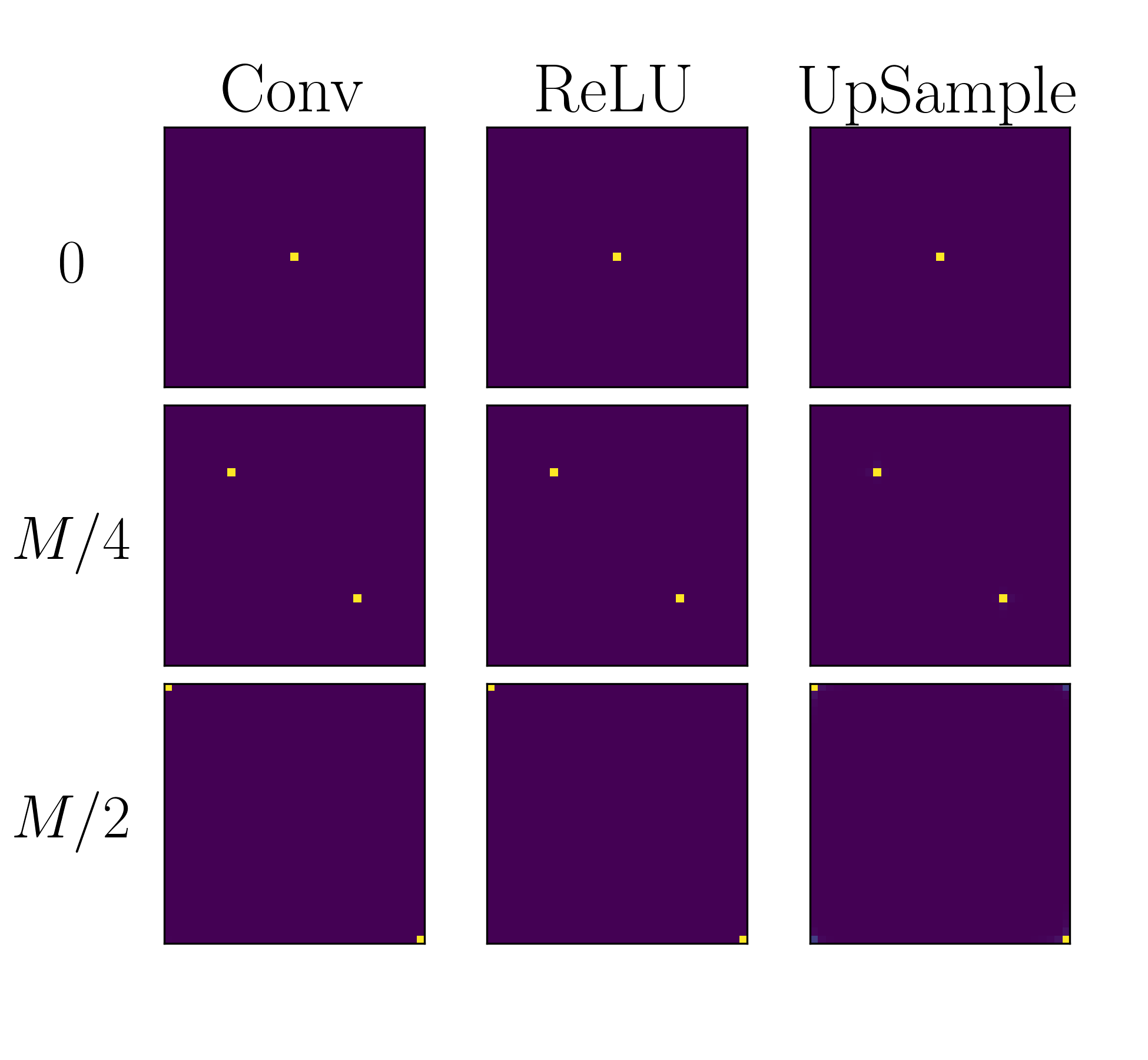}
    \caption{The SpGs ${{\partial y(\nu_i)}}/{{\partial x(\nu_j)}}$ of frequency $\nu_i=0$, $\frac{M}{2}$ and $\frac{M}{2}$ for the convolution, ReLU, and bilinear up-sampling operations denoted as Conv, ReLU, Upsample, respectively.}
    \label{Fig:spec_grad}
\end{figure}

Obviously, one can see that the SpG $\frac{{\partial y(\nu_i)}}{{\partial x(\nu_j)}}$ of the convolution, ReLU and bilinear up-sampling is a delta function, for all $\nu_i$, from Fig.~\ref{Fig:spec_grad}. These results are consistent with our discussion for Eq.~\ref{Eq:convlayer_spec_grad}. 
Moreover, clearly from Fig.~\ref{Fig:spec_grad_deeplab} and Fig.~\ref{Fig:spec_grad_dan}, the SpG  $\frac{{\partial {\mathcal{L}}_{ce}(\nu_i)}}{{\partial x(\nu_j)}}$ that accumulating all the operations in the decoder module can also be well approximated by the delta function. These results verify the approximation of Eq.~\ref{Eq:ce_spec_grad} and agrees with our discussion in section~\ref{ssec:method_spec_grad} that the feature component at frequency $\nu_i$ only affects ${\mathcal{L}}_{ce}$ at the frequency near $\nu_i$. 
Together with observations that the SSNNs capture the low-frequency features more effectively and $\mathcal{L}_{ce}$ is mainly contributed by low frequencies, the above results inspire us to further consider the removal of redundant high-frequency features in section~\ref{ssec:exp_feat_trunc}.

\subsection{Application on Feature Truncation}\label{ssec:exp_feat_trunc}
We propose feature truncation as a model reduction method that reduces the cost of SSNNs without degrading the performance. Our feature truncation method is intuitively performed by removing the high-frequency components of the decoder features $x(\nu)$. 
Furthermore, from the discussion of SpG in section~\ref{ssec:exp_spec_ce_grad:grad}, we now can directly apply the analysis of $R_{ce}({\nu}_{max})$ in section~\ref{ssec:exp_spec_ce_grad:ce} to estimate the LRG not only for segmentation map but also for the decoder feature. In the section, we hereafter refer to the feature size for the decoder feature as the \textbf{LRG size}.
We further adopt the Soft Filter Punning (SFP) method~\cite{he2019asymptotic} and combine it with our feature truncation. The experiments are done based on the following conditions: For SFP, we set pruning rates (PR) to 20\%, 40\%, and 60\% for the encoder and 20\% for the decoder, since the number of parameters in the encoder is much larger than that in the decoder and is potentially over-parameterized. On the other hand, we apply 
feature truncation on the decoder by down-sampling the decoder features from the original prediction grid with size 129 to the efficient LRG sizes from 97 to 17.
Note that the LRG size with $ 32 \geq {\nu}_{max} \geq 16$ (i.e. LRG size from 65 to 33) are expected to reach optimal efficacy of feature truncation as suggested in section~\ref{ssec:exp_spec_ce_grad}. 
% 65$\times$65 and 33$\times$33 (i.e. ${\nu}_{max}=16 \mbox{~and~} 32$), respectively, as suggested in section~\ref{ssec:exp_spec_ce_grad}. 
In our experiments, all the down-sampling are done via bilinear interpolation. 

The feature truncation effectively reduce the FLOPs for models in our experiments. The FLOPs of DeepLab v3+ and Dan for various pruning rates and feature sizes can be seen in Table~\ref{table:feat_trunc_flops:flops}.
Obviously from the table, the FLOPs decrease as the feature size decreases from 129 to 17 or as PR increases from baseline to 60\%, for both DeepLab v3+ and Dan. 
Note that DeepLab v3+ has much more parameters than Dan; the number parameters of the decoder are 1.3 and 0.4 million for DeepLab v3+ and Dan, respectively. Calculating from Table~\ref{table:feat_trunc_flops:flops}, DeepLab v3+ requires 29\% to 60\% more flops than Dan without feature truncation when the PR grows from baseline to 60\%. However, the flops converge almost to the same values for both models when features are truncated at size 33, indicating a general yet effective cost reduction of the decoder. 
Combining the traditional pruning method (SFP) and feature truncation, our approach effectively reduces the cost of both the encoder and the decoder of SSNNs.

\begin{table}[h]
    \centering
    \resizebox{0.85\columnwidth}{!}{
    \begin{tabular}{cccccccc}
        \hline \multicolumn{8}{c}{DeepLab v3+} \\ \hline
        SFP setups \textbackslash LRG size & 129 & 97 & 81 & 65 & 49 & 33 & 17 \\ \hline
        \multicolumn{1}{c|}{Baseline} & 139 & 120 & 113 & 107 & 102 & \textbf{98} & 96 \\
        \multicolumn{1}{c|}{20\% PR for encoder} & 100 & 86 & 81 & 76 & 73 & \textbf{70} & 69 \\
        \multicolumn{1}{c|}{40\% PR for encoder} & 77 & 63 & 58 & 53 & 50 & \textbf{47} & 46 \\
        \multicolumn{1}{c|}{60\% PR for encoder} & 59 & 45 & 40 & 35 & 32 & \textbf{29} & 28 \\ \hline
        \multicolumn{8}{c}{Dan} \\ \hline
        SFP setups \textbackslash LRG size & 129 & 97 & 81 & 65 & 49 & 33 & 17 \\ \hline
        \multicolumn{1}{c|}{Baseline} & 107 & 102 & 100 & 98 & 97 & \textbf{96} & 96 \\
        \multicolumn{1}{c|}{20\% PR for encoder} & 78 & 73 & 72 & 70 & 69 & \textbf{69} & 68 \\
        \multicolumn{1}{c|}{40\% PR for encoder} & 55 & 50 & 49 & 47 & 46 & \textbf{46} & 45 \\
        \multicolumn{1}{c|}{60\% PR for encoder} & 37 & 32 & 31 & 29 & 28 & \textbf{28} & 27
    \end{tabular}
    }
    \caption{FLOPs for the models with SFP and feature truncation; denoted in unit of $10^9$ FLOPs}
    \label{table:feat_trunc_flops:flops}
\end{table}

\begin{figure}[h]
\captionsetup[subfigure]{}
\centering
\begin{subfigure}[]{1.0\columnwidth} %dc
    \centering
    \includegraphics[width=1.\columnwidth]{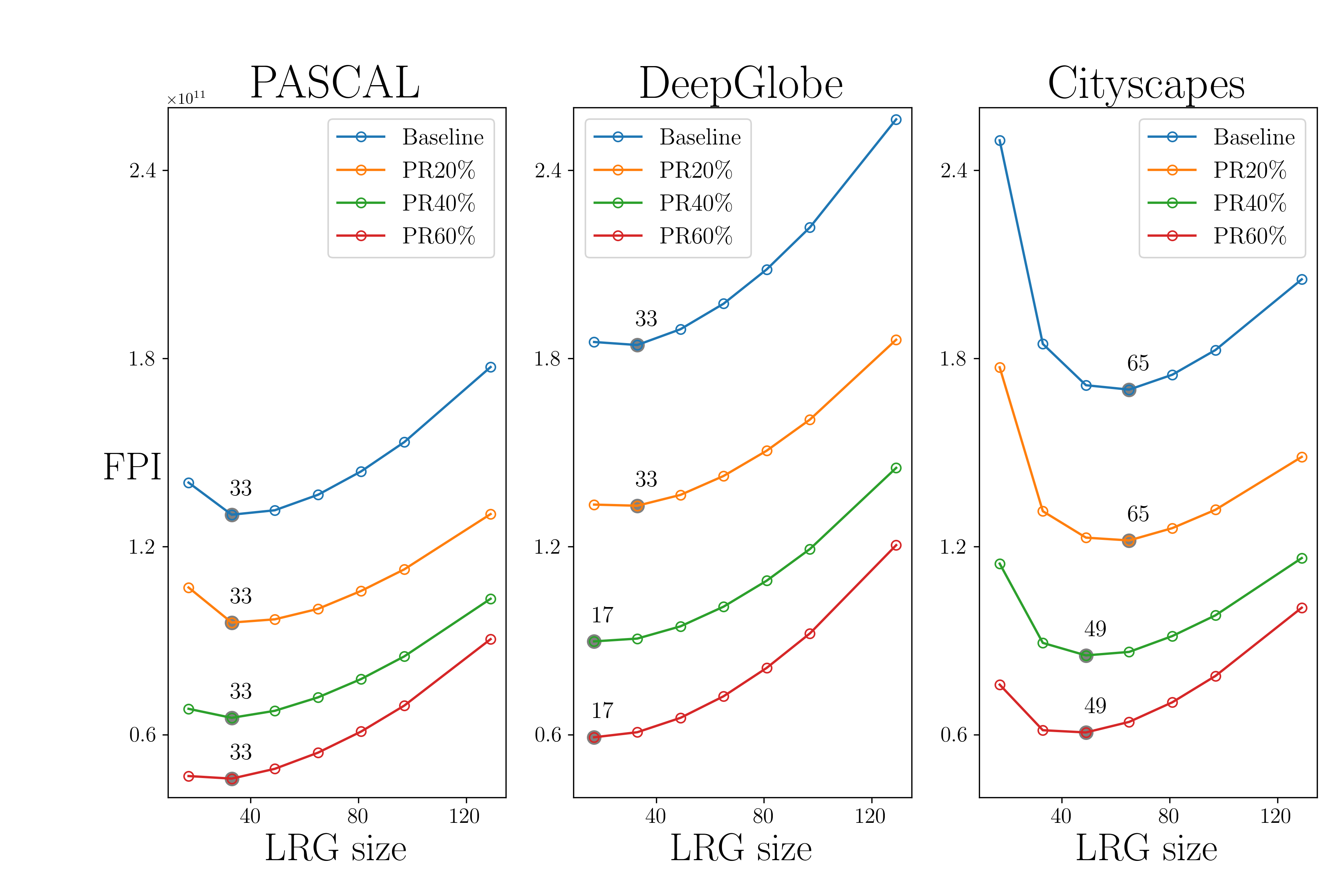}  
    \caption{DeepLab v3+}
    \label{Fig:flops_per_iou_deeplab}
\end{subfigure}
\begin{subfigure}[]{1.0\columnwidth} %dc
    \centering
    \includegraphics[width=1.\columnwidth]{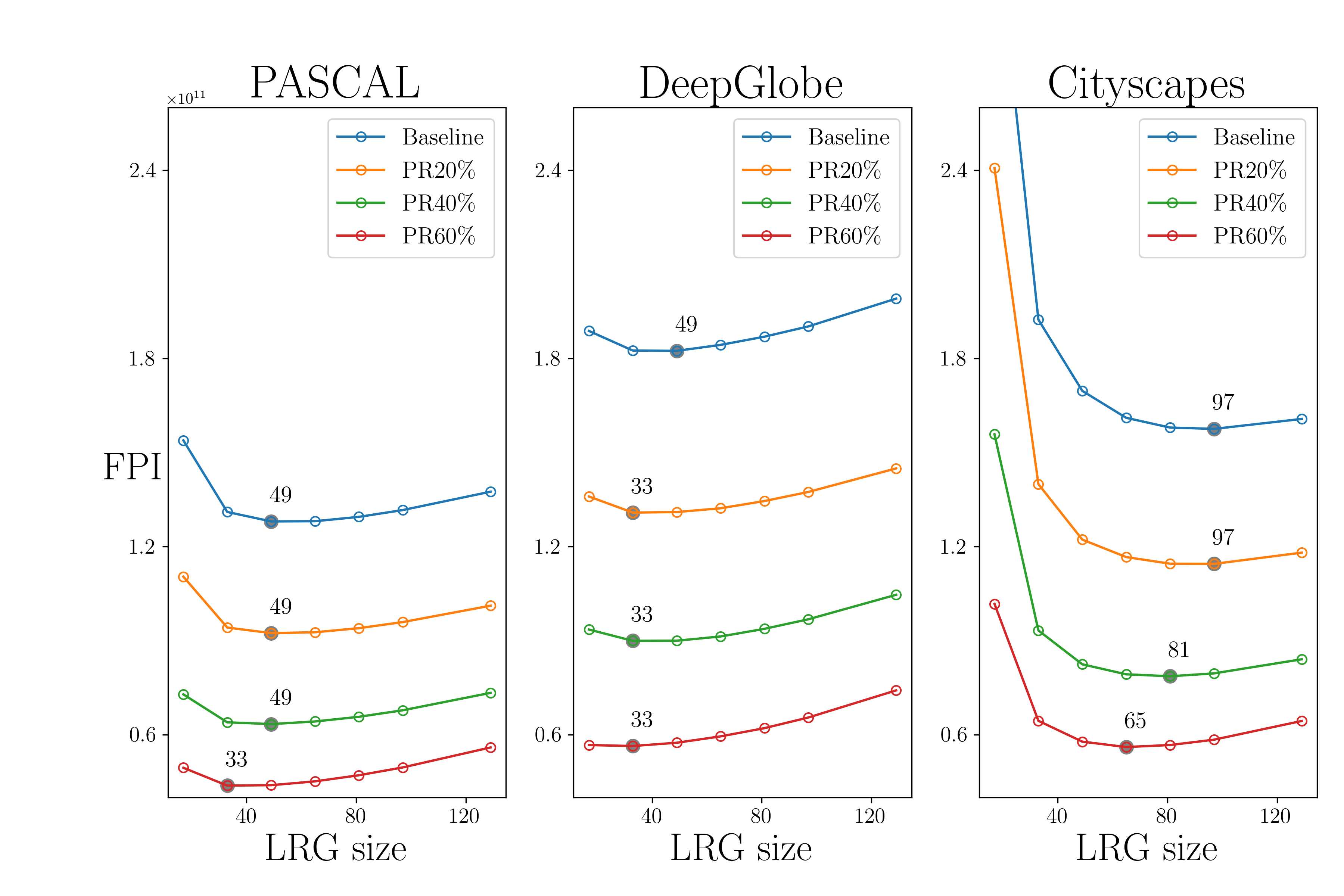}
    \caption{Dan}
    \label{Fig:flops_per_iou_dan}
\end{subfigure}
\caption{ 
The FPI for DeepLab v3+ and Dan with various LRG sizes and prune rates of SFP for the feature truncation. The LRG with minimal FPI is annotated with a closed circle for each line.}
\label{Fig:flops_per_iou}
\end{figure}

\begin{figure}[h]
    \begin{subfigure}[]{1.0\columnwidth}
        \centering
        \includegraphics[width=1.\columnwidth]{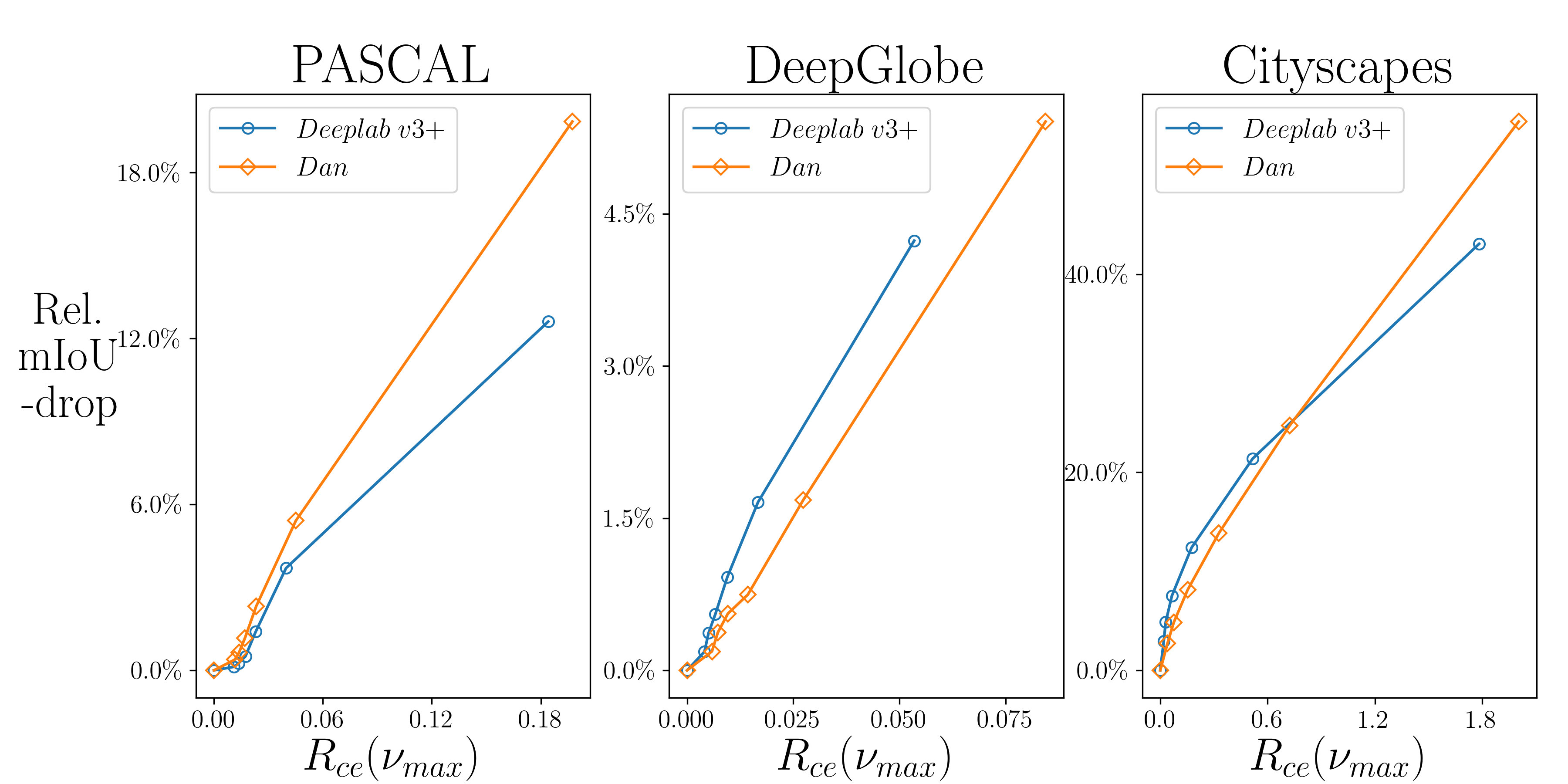} %dc
        \caption{Result separately for the three datasets. Each line consists of 7 data with LRG sizes 129,97,81,65,49,33, and 17; the larger the LRG size, the lower the $R_{ce}({\nu}_{max})$.}
        \label{Fig:feature_truncation_iou_R:sep}
    \end{subfigure}
    \begin{subfigure}[]{1.0\columnwidth}
        \centering
        \includegraphics[width=1.\columnwidth]{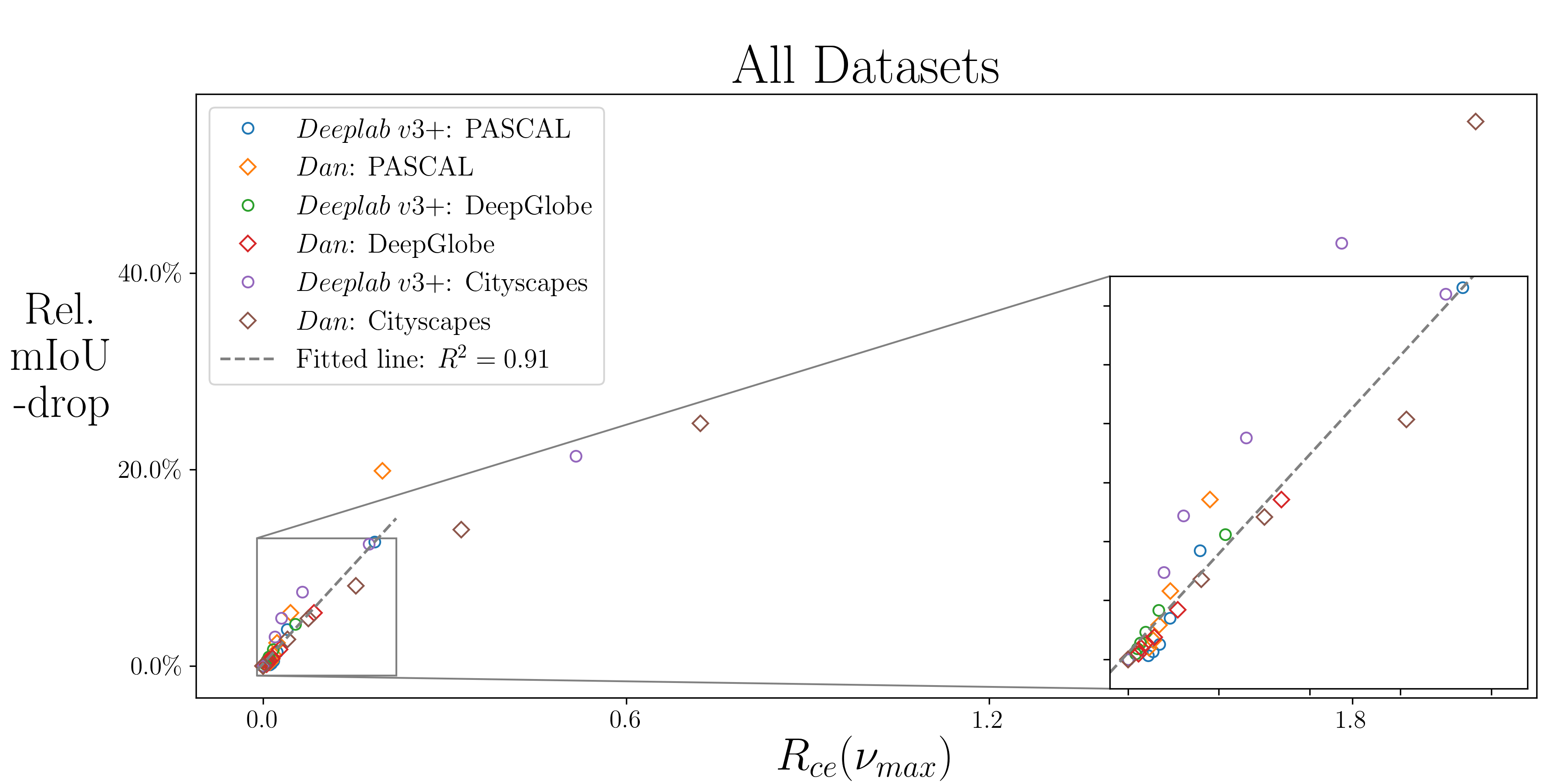}
        \caption{Result jointly for the three datasets with fitted line at zoom-in region along with the $R^2$ value. The fitted line is $0.6725R_{ce}({\nu}_{max})+ 0.0022$}
        \label{Fig:feature_truncation_iou_R:joint}
    \end{subfigure}
        \caption{The correlation between relative mIoU-drop and the $R_{ce}({\nu}_{max})$ for the feature truncation. The results are evaluated based on PASCAL, DeepGlobe, and Cityscapes datasets with DeepLab v3+ and Dan models.}
    \label{Fig:feature_truncation_iou_R}
\end{figure}

On the other hand, degradation of the model performance is inevitable when a huge cost reduction is gained by using SFP and feature truncation. 
We evaluate DeepLab v3+ and Dan upon the PASCAL, DeepGlobe, and Cityscapes datasets and relegate the experimental details in appendix~\ref{ssec:feat_detail}, where Fig.~\ref{Fig:iou_flops} illustrates the IoU score of these models with respect to the FLOPs while the numeric details are provided in Table~\ref{table:feat_trunc:deeplab} and Table~\ref{table:feat_trunc:dan}. 
Besides, we define \textbf{FLOPs per IoU score (FPI)} as FLOPs/mIoU to estimate the efficiency of feature truncation for these models that simultaneously includes the cost and performance. The lower the FPI, the better the model's efficiency which minimizes the performance drop while maximizing the cost reduction of inference. The optimal LRG size is thus determined by minimizing FPI. 
The FPI of these models is illustrated in Fig.~\ref{Fig:flops_per_iou}, where the optimal LRG is denoted for each setup. 
From the figure, it can be seen that most of the optimal LRGs lie in the range between 33 and 65, which is consistent with our analysis on $R_{ce}({\nu}_{max})$, except for the optimal LRG sizes with DeepLab model upon the DeepGlobe dataset and those with Dan upon the Cityscapes dataset; the former case has an optimal LRG size 17 while the later case has LRG sizes 81 and 97. Despite being exceptions, these optimal LRGs turn out to match the small $R_{ce}({\nu}_{max})$ in Table~\ref{table:spectral_stat}~\footnote{$R_{ce}({\nu}_{max})$ is 0.053 for LRG size 17 in the former case, and remain small at 0.040 and 0.075 respectively for LRG sizes 81 and 97 in the latter case}. 

% For the DeepLab and Dan models, the feature truncation on the LRG size 65 efficiently decreases the FPI upon the PASCAL and DeepGlobe datasets. In contrast, the truncation partly fails to reduce the FPI on the Cityscapes dataset for both models. Moreover, the feature truncated on the LRG size 33 instead increases FPI for the experiment of Dan on the Cityscapes dataset.
% These results are actually consistent with our analysis on $R_{ce}({\nu}_{max})$ shown in Table~\ref{table:spectral_stat}, where the relative loss rate of CE, affected by the LRG, on the Cityscapes dataset is significantly larger than the loss rate on the other datasets. 
Moreover, these optimal LRG sizes seem to be less sensitive to PR, which indicates the performance drop dues to feature truncation is not affected by network pruning. Such a correlation further motivates us to investigate the correlation between the performance drop and the $R_{ce}({\nu}_{max})$, which might help to estimate the optimal LRG all at once without enumerating all possible LRGs and PR. Here we evaluate the relative IoU drop caused by feature truncation in Table~\ref{table:feat_trunc} and illustrate its correlation between $R_{ce}({\nu}_{max})$ in Fig.~\ref{Fig:feature_truncation_iou_R}.
Clearly from Fig.~\ref{Fig:feature_truncation_iou_R:sep}, the positive correlation demonstrates that $R_{ce}({\nu}_{max})$ for all datasets in our experiments. We further prove that this correlation is indeed positive in appendix~\ref{ssec:method_trunc_ioU}. 
Besides, we observe significant mIoU-drop on Cityscapes is well explained by the theoretical estimation $R_{ce}({\nu}_{max})$ in Table~\ref{table:spectral_stat}, where $R_{ce}({\nu}_{max})$ depict the relative loss rate of CE evaluated that positively correlates to relative mIoU-drop as discussed in section~\ref{ssec:method_spectral_ioU}. 
On the other hand, Fig.~\ref{Fig:feature_truncation_iou_R:joint} shows the results upon all datasets jointly and demonstrates a similar positive correlation even for all models and datasets. Furthermore, an apparent linear correlation is observed in the regime with smaller $R_{ce}({\nu}_{max})$, \ie $R_{ce}({\nu}_{max}) < 0.2$. A $R^2$ value of 0.91 and a slope of 0.6725 are obtained by fitting a line for the data in this regime despite that the correlation between IoU and $R_{ce}({\nu}_{max})$ is positive while non-linear. 
The high $R^2$ value indicates that such a non-linear correlation can be well approximated by linear functions thus enabling a simple estimation of IoU-drop by $R_{ce}({\nu}_{max})$.
With a proper threshold of $R_{ce}({\nu}_{max})$, depending on the required model performance, the fitted line in Fig.~\ref{Fig:feature_truncation_iou_R} becomes an efficient estimator for model performance in practice.

In summary, we conclude that the integration of feature truncation and SFP can efficiently reduce the computational cost when $R_{ce}({\nu}_{max})$ is small as demonstrated on the PASCAL, DeepGlobe, and Cityscapes datasets. 
For all setups of SFP, the optimal cost reduction of SSNNs can be achieved by the feature truncation with the optimal LRG size, which is determined by minimizing the proposed FPI that measures simultaneously the cost reduction and the performance drop of SSNNs. 
Moreover, we demonstrate that the performance drop of SSNNs positively correlates to $R_{ce}({\nu}_{max})$ as clearly shown in Fig.~\ref{Fig:feature_truncation_iou_R}. 
Hence, one can easily determine the optimal LRG sizes without evaluating the performance of all possible LRG sizes by combining the FLOPs computed in Table~\ref{table:feat_trunc_flops:flops}, which is independent of datasets and thus can be pre-computed, and the correlation in Fig.~\ref{Fig:feature_truncation_iou_R}, which is general to all three datasets and might generalize well to real-world datasets. 
As mentioned in section~\ref{sec:intro}, SSNNs predict the segmentation maps upon the LRG to save computational costs. Our framework serves as an analysis tool in estimating the efficient LRG size as well as the effective feature size in decoders in saving the cost. 
This further allows SSNNs to dynamically adapt the LRG size for various domains and can be generalized to arbitrary features in CNNs.

\subsection{Application on Block-wise annotation}\label{ssec:exp_blk_annot}
In section~\ref{ssec:exp_spec_ce_grad}, we determine the efficient LRG for the segmentation maps via analyzing $R_{ce}({\nu}_{max})$. In this section, we apply these LRGs to the groundtruth annotations. The resulting block-wise annotation can be considered a weak annotation. We demonstrate that the performance of the SSNNs trained with these block-wise annotations can also be estimated by $R_{ce}({\nu}_{max})$.
We perform the experiment that trains DeepLab v3+ and Dan with the block-wise annotation at various band limit ${\nu}_{max}$ (from 256 to 8) and evaluates the $R_{ce}({\nu}_{max})$ based on the original pixel-wise annotation. Examples of the block-wise annotation and the prediction of the two models upon the PASCAL, DeepGlobe, and Cityscapes datasets are illustrated in Fig.~\ref{Fig:block_annotation}. Note that the block-wise annotation at ${\nu}_{max}=256$ is actually equivalent to the original pixel-wise groundtruth. The experimental results are summarized in Table~\ref{table:blk_annot}. For each ${\nu}_{max}$, we evaluate mIoU score and mIoU drop. Particularly, mIoU-drop is the reduction rate of mIoU with respect to the one band limit ${\nu}_{max}=256$; this drop actually corresponds to the decrements of IoU score caused by the LRG on the annotation.
\begin{figure}[h]
    \centering
    \begin{subfigure}[b]{.9\columnwidth}
        \centering
        \resizebox{1.0\columnwidth}{!}{
        \begin{tabular}[b]{@{}c@{}@{}c@{}@{}c@{}@{}c@{}@{}c@{}@{}c@{}}
            \hline
             ${\nu}_{max}$ & 256 & 32 & 16 & 8 \\
             \hline
            \multicolumn{5}{c}{PASCAL} \\ \hline  \includegraphics[width=.15\columnwidth]{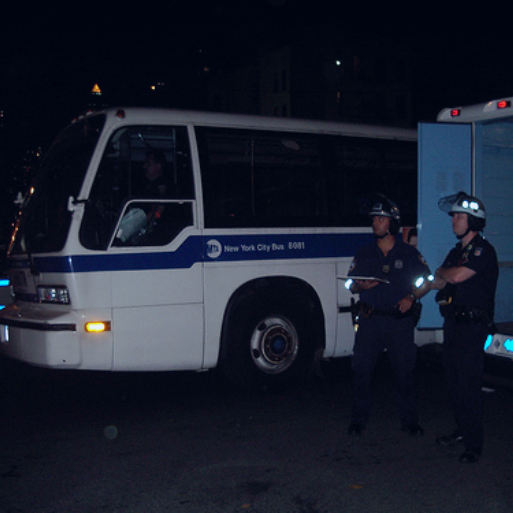} &
            \includegraphics[width=.15\columnwidth]{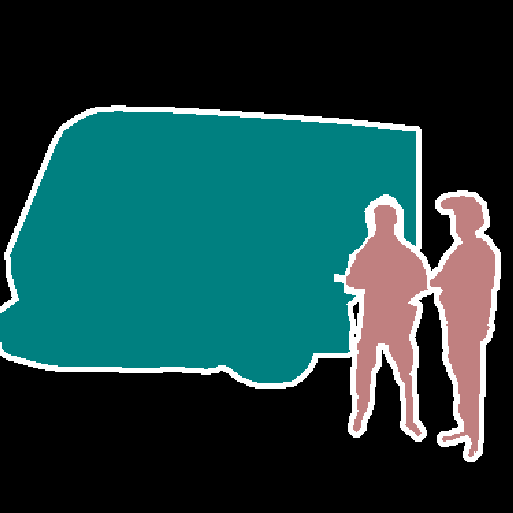} &
            \includegraphics[width=.15\columnwidth]{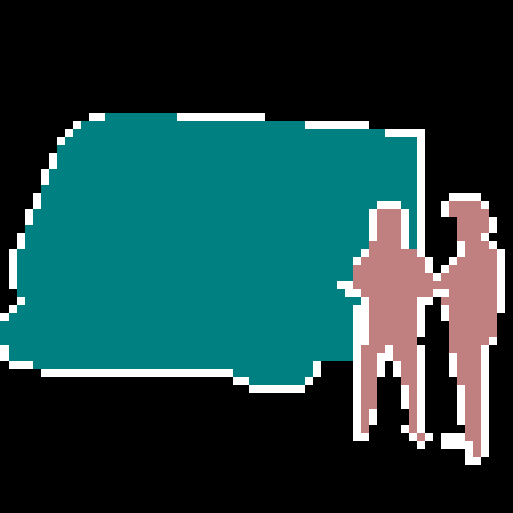} &
            \includegraphics[width=.15\columnwidth]{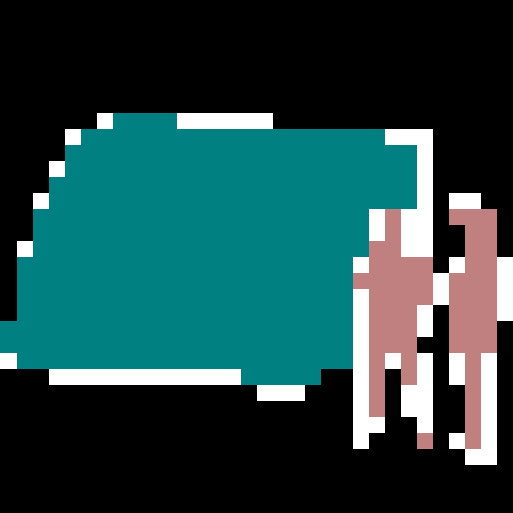} &
            \includegraphics[width=.15\columnwidth]{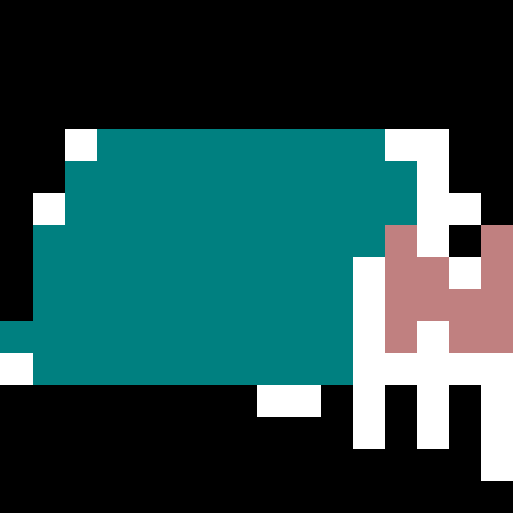} \\
            \centered{DeepLab v3+} &
            \centered{\includegraphics[width=.15\columnwidth]{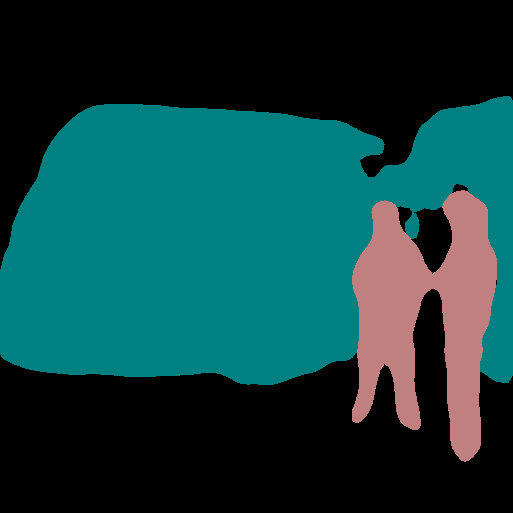}} &
            \centered{\includegraphics[width=.15\columnwidth]{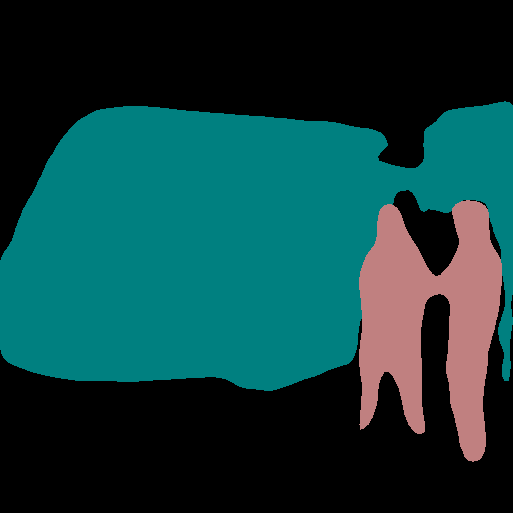}} &
            \centered{\includegraphics[width=.15\columnwidth]{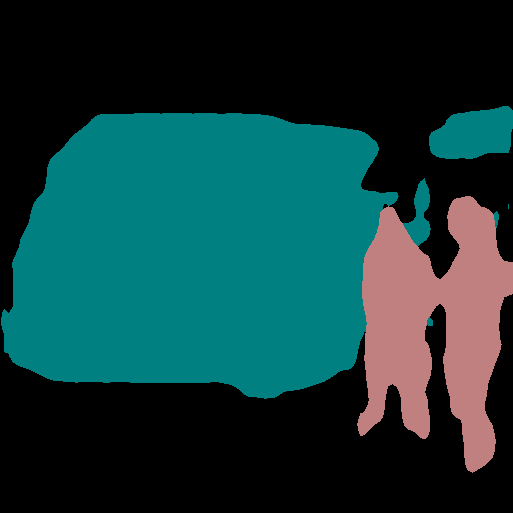}} &
            \centered{\includegraphics[width=.15\columnwidth]{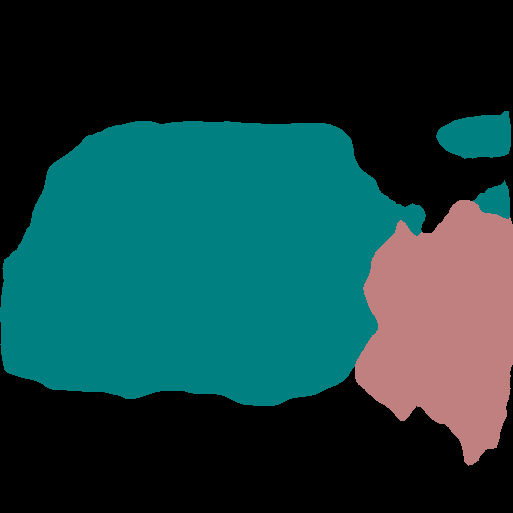}} \\
            \centered{Dan} &
            \centered{\includegraphics[width=.15\columnwidth]{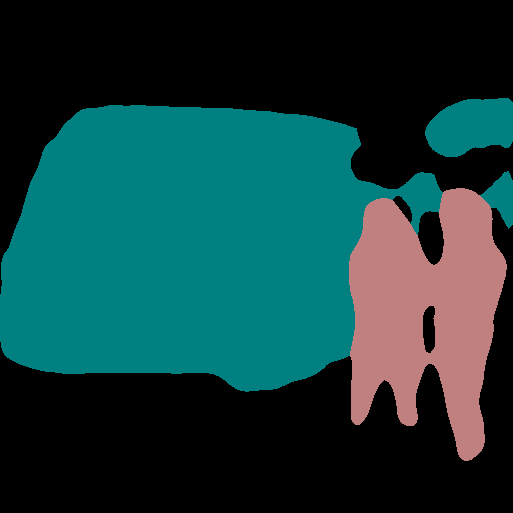}} &
            \centered{\includegraphics[width=.15\columnwidth]{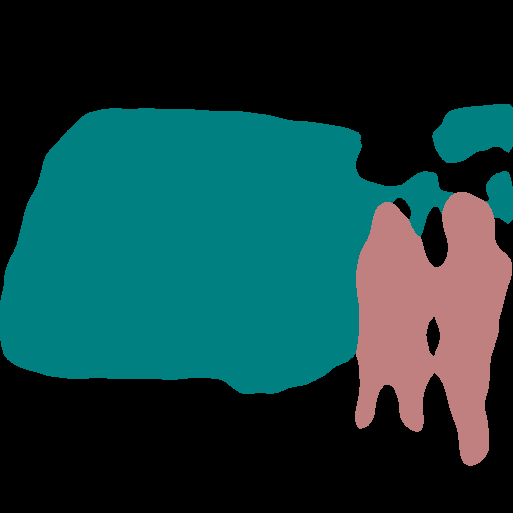}} &
            \centered{\includegraphics[width=.15\columnwidth]{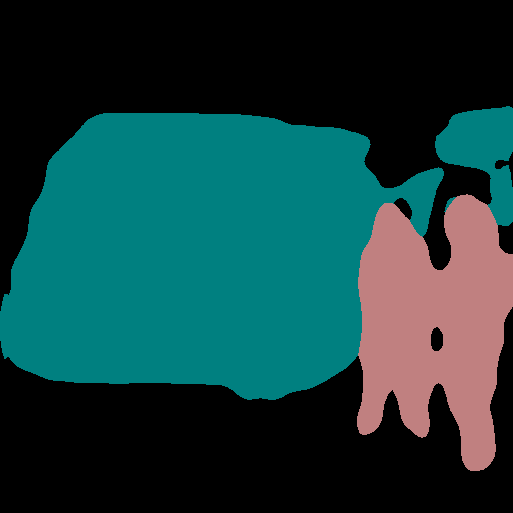}} &
            \centered{\includegraphics[width=.15\columnwidth]{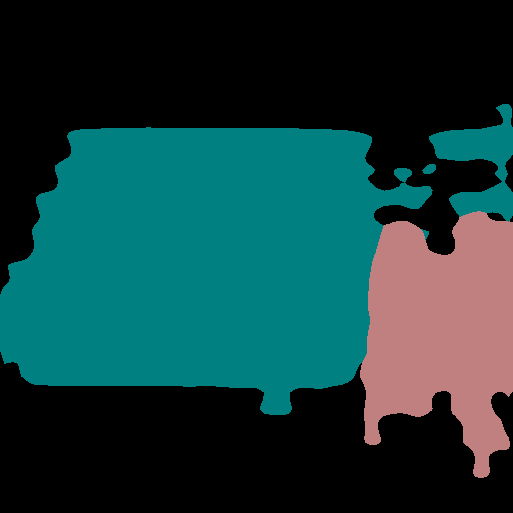}} \\ \hline
            \multicolumn{5}{c}{DeepGlobe} \\ \hline
            \includegraphics[width=.15\columnwidth]{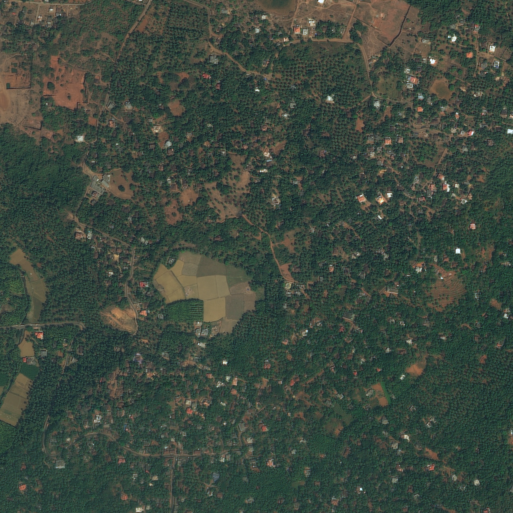} &
            \includegraphics[width=.15\columnwidth]{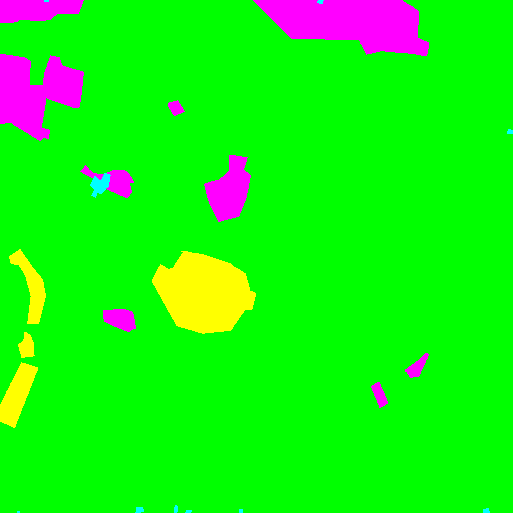} &
            \includegraphics[width=.15\columnwidth]{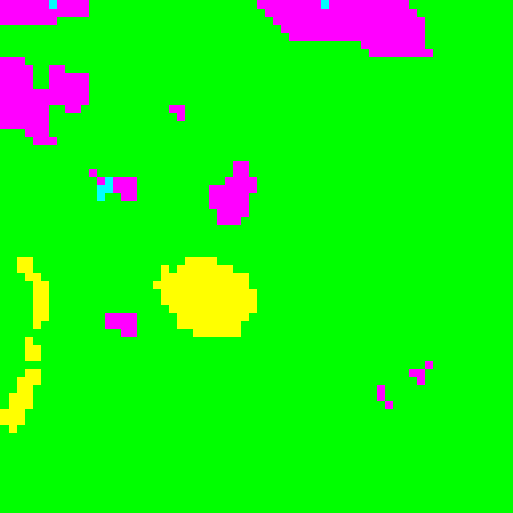} &
            \includegraphics[width=.15\columnwidth]{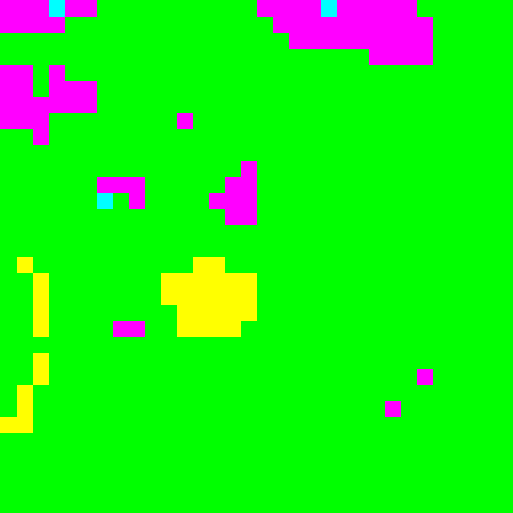} &
            \includegraphics[width=.15\columnwidth]{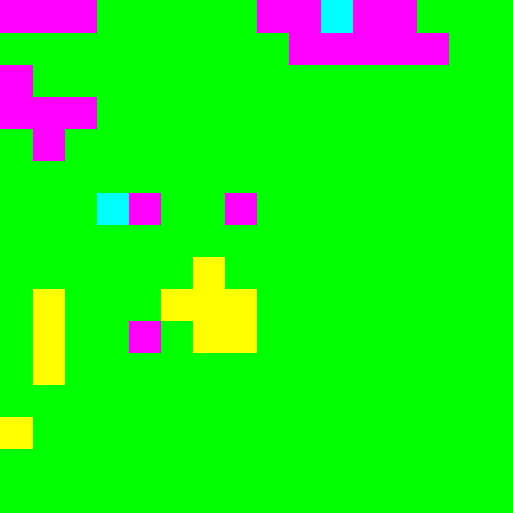} \\  
			\centered{DeepLab v3+} &
            \centered{\includegraphics[width=.15\columnwidth]{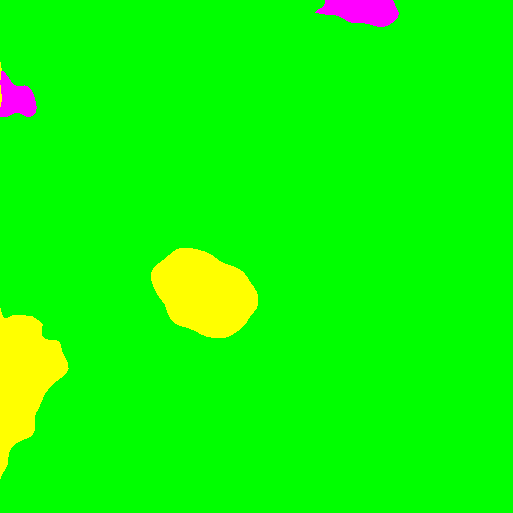}} &
            \centered{\includegraphics[width=.15\columnwidth]{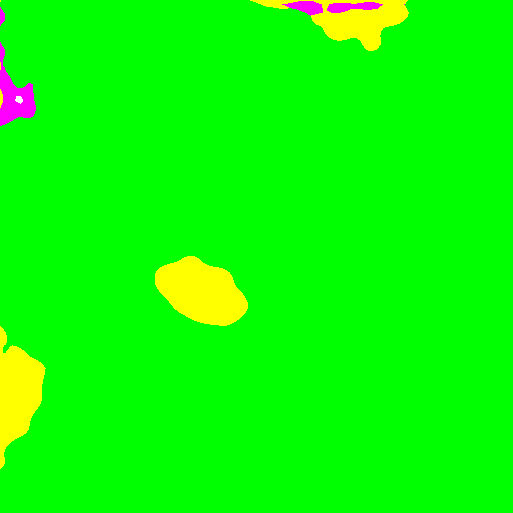}} &
            \centered{\includegraphics[width=.15\columnwidth]{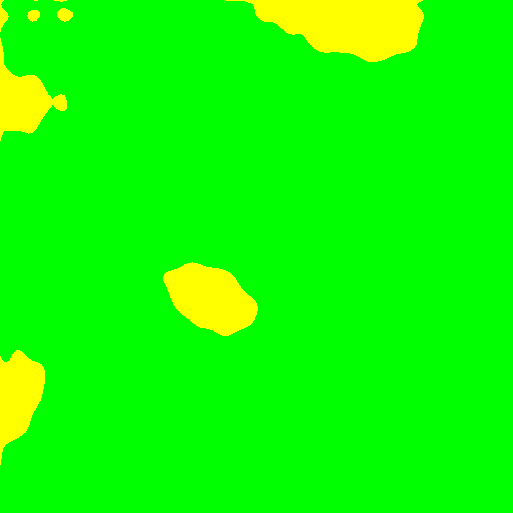}} &
            \centered{\includegraphics[width=.15\columnwidth]{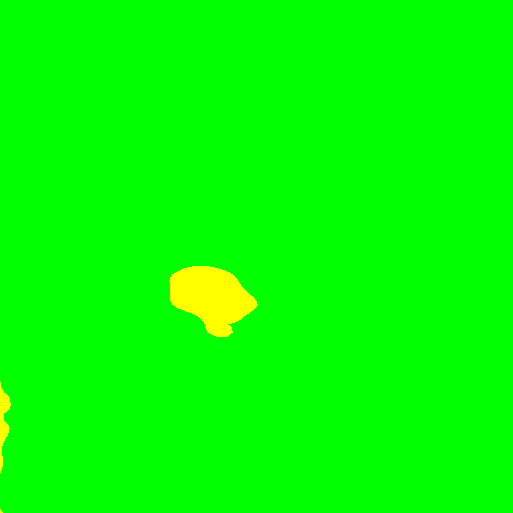}} \\
			\centered{Dan} &
            \centered{\includegraphics[width=.15\columnwidth]{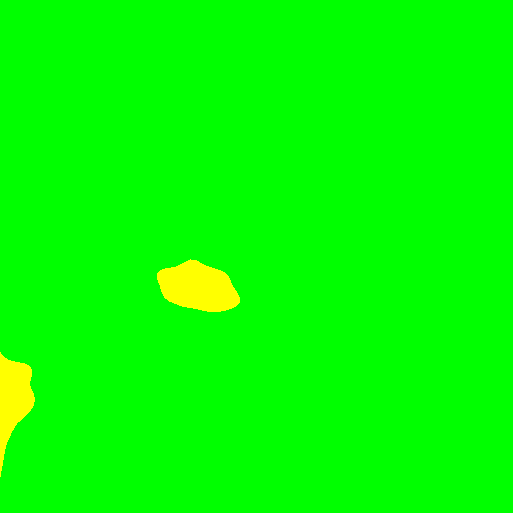}} &
            \centered{\includegraphics[width=.15\columnwidth]{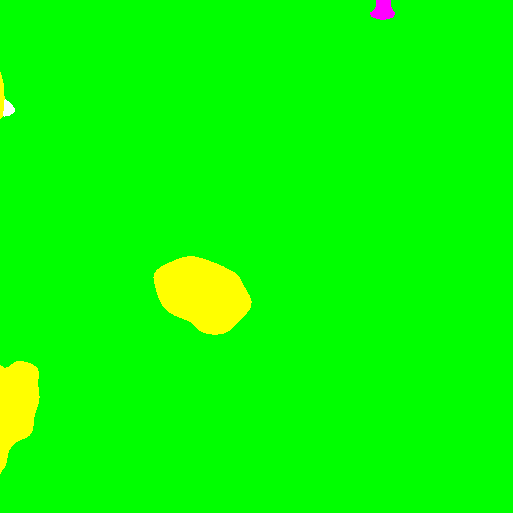}} &
            \centered{\includegraphics[width=.15\columnwidth]{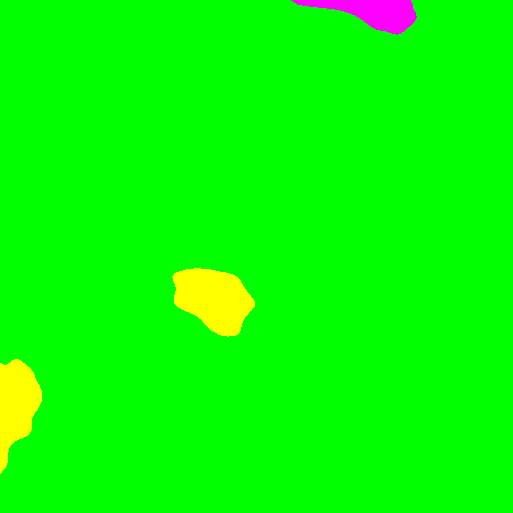}} &
            \centered{\includegraphics[width=.15\columnwidth]{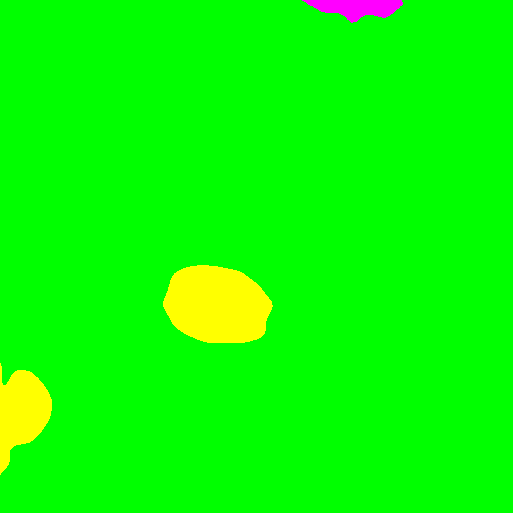}} \\ \hline
            \multicolumn{5}{c}{Cityscapes} \\ \hline
            \includegraphics[width=.15\columnwidth]{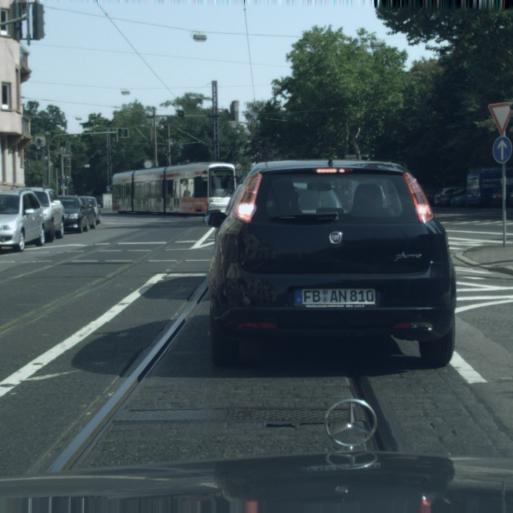} &
            \includegraphics[width=.15\columnwidth]{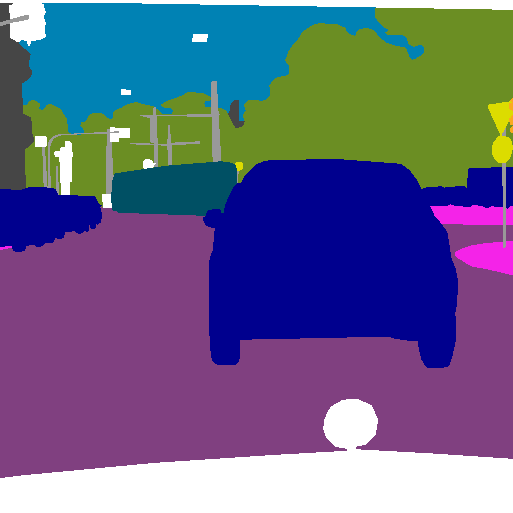} &
            \includegraphics[width=.15\columnwidth]{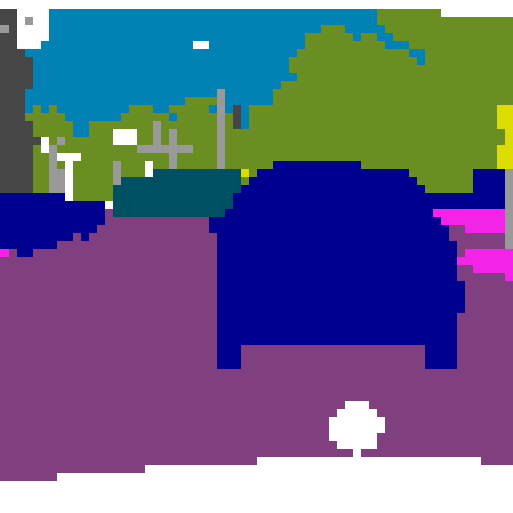} &
            \includegraphics[width=.15\columnwidth]{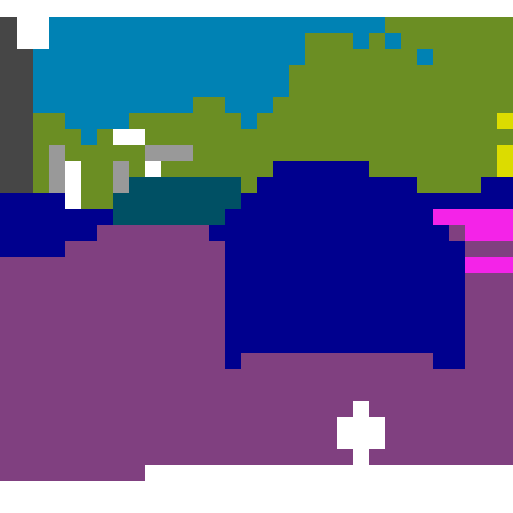} &
            \includegraphics[width=.15\columnwidth]{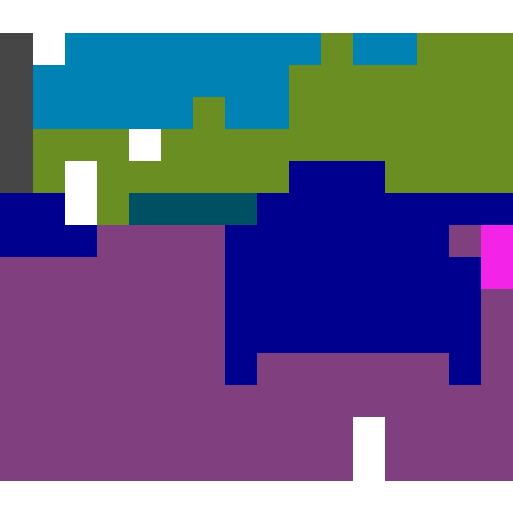} \\  
			\centered{DeepLab v3+} &
            \centered{\includegraphics[width=.15\columnwidth]{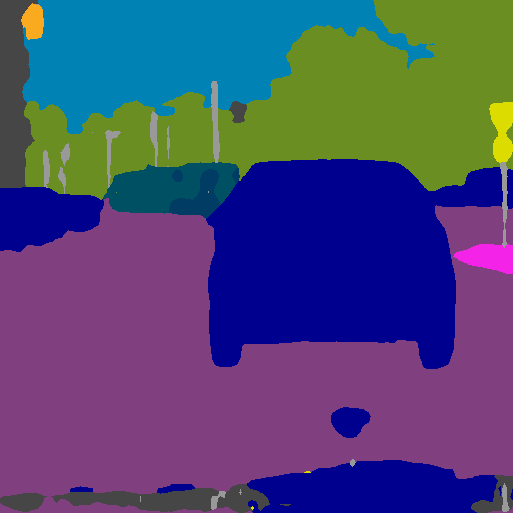}} &
            \centered{\includegraphics[width=.15\columnwidth]{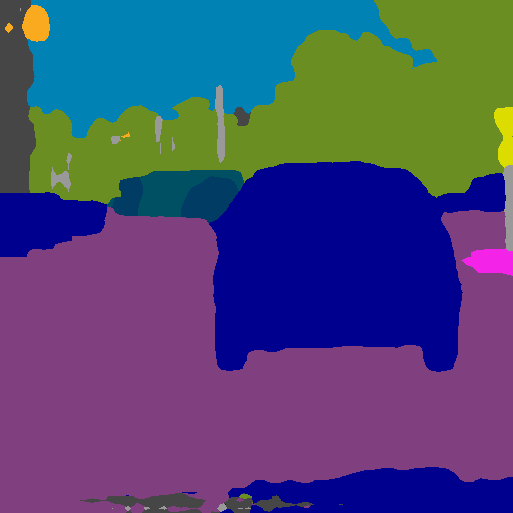}} &
            \centered{\includegraphics[width=.15\columnwidth]{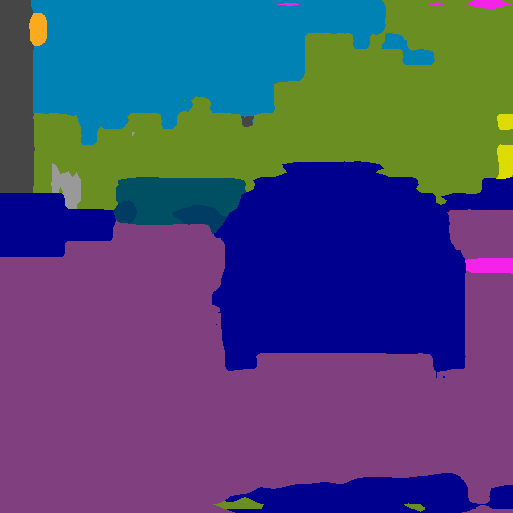}} &
            \centered{\includegraphics[width=.15\columnwidth]{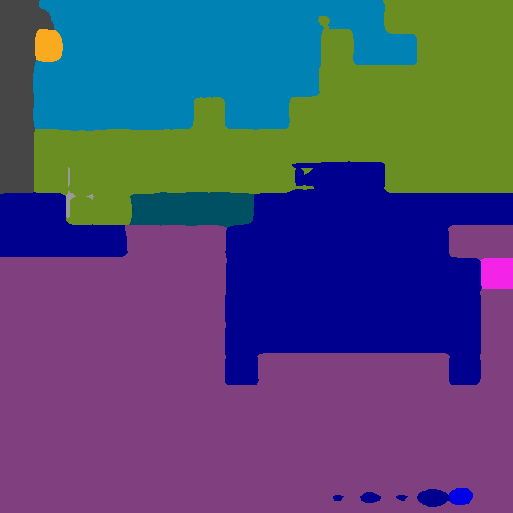}} \\
			\centered{Dan} &
            \centered{\includegraphics[width=.15\columnwidth]{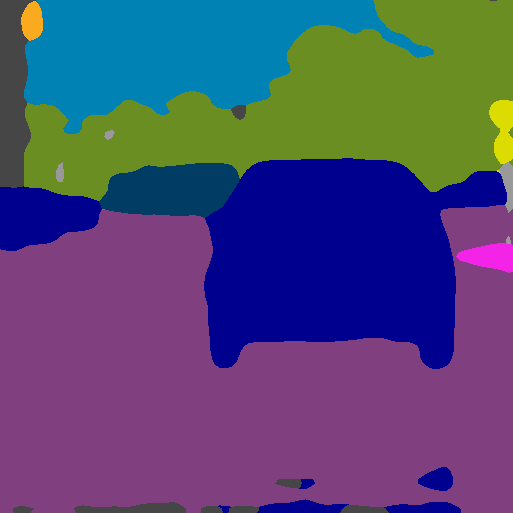}} &
            \centered{\includegraphics[width=.15\columnwidth]{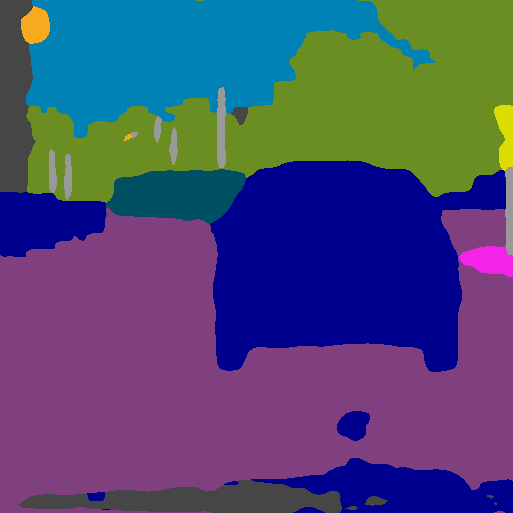}} &
            \centered{\includegraphics[width=.15\columnwidth]{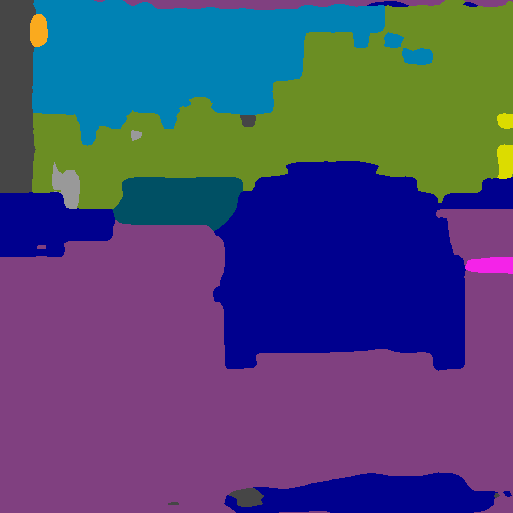}} &
            \centered{\includegraphics[width=.15\columnwidth]{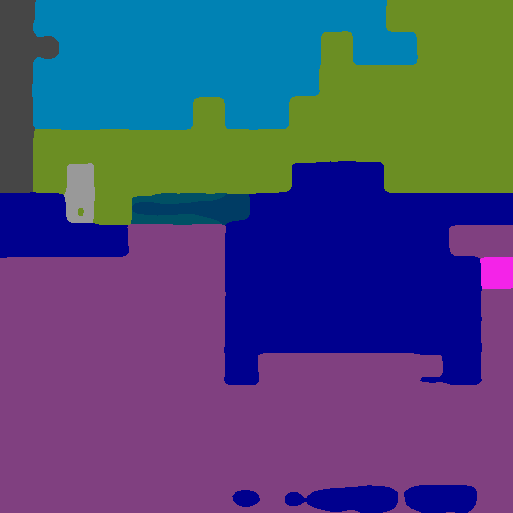}}
        \end{tabular}
        }
    \end{subfigure}
    \caption{Examples of the block-wise annotation and the prediction of DeepLab v3+ and Dan. For each dataset, we present the block-wise groundtruth annotations and the predictions of these models. The corresponding band limits ${\nu}_{max}$ are denoted at the top of annotation.}
    \label{Fig:block_annotation}
\end{figure}
\begin{table}[h]
\centering
\begin{subtable}[]{0.7\linewidth} %dc
    \resizebox{1.0\columnwidth}{!}{
    % \begin{tabular}{ccccc}
    % \multicolumn{1}{c|}{${\nu}_{max}$} & \multicolumn{1}{c|}{8} & \multicolumn{1}{c|}{16} & \multicolumn{1}{c|}{32} & 256 \\ \hline
    % \multicolumn{5}{c}{DeepLab v3+} \\ \hline
    % \multicolumn{1}{c|}{PASCAL} & \multicolumn{1}{c|}{67.4\%} & \multicolumn{1}{c|}{74.2\%} & \multicolumn{1}{c|}{77.1\%} & {78.5\%} \\
    % \multicolumn{1}{c|}{DeepGlobe} & \multicolumn{1}{c|}{52.5\%} & \multicolumn{1}{c|}{53.8\%} & \multicolumn{1}{c|}{54.8\%} & {55.0\%} \\
    % \multicolumn{1}{c|}{Cityscapes} & \multicolumn{1}{c|}{38.4\%} & \multicolumn{1}{c|}{50.5\%} & \multicolumn{1}{c|}{58.9\%} & {67.8\%} \\ \hline
    % \multicolumn{5}{c}{Dan} \\ \hline
    % \multicolumn{1}{c|}{PASCAL} & \multicolumn{1}{c|}{67.6\%} & \multicolumn{1}{c|}{74.0\%} & \multicolumn{1}{c|}{76.5\%} & 77.6\% \\
    % \multicolumn{1}{c|}{DeepGlobe} & \multicolumn{1}{c|}{50.1\%} & \multicolumn{1}{c|}{52.2\%} & \multicolumn{1}{c|}{53.4\%} & 53.6\% \\
    % \multicolumn{1}{c|}{Cityscapes} & \multicolumn{1}{c|}{38.4\%} & \multicolumn{1}{c|}{50.4\%} & \multicolumn{1}{c|}{58.3\%} & 66.4\%
    % \end{tabular}
    \begin{tabular}{ccccc}
    \multicolumn{1}{c|}{${\nu}_{max}$} & \multicolumn{1}{c|}{256} & \multicolumn{1}{c|}{32} & \multicolumn{1}{c|}{16} & 8 \\ \hline
    \multicolumn{5}{c}{DeepLab v3+} \\ \hline
    \multicolumn{1}{c|}{PASCAL} & \multicolumn{1}{c|}{78.5\%} & \multicolumn{1}{c|}{77.1\%} & \multicolumn{1}{c|}{74.2\%} & 67.4\% \\
    \multicolumn{1}{c|}{DeepGlobe} & \multicolumn{1}{c|}{55.0\%} & \multicolumn{1}{c|}{54.8\%} & \multicolumn{1}{c|}{53.8\%} & 52.5\% \\
    \multicolumn{1}{c|}{CityScapes} & \multicolumn{1}{c|}{67.8\%} & \multicolumn{1}{c|}{58.9\%} & \multicolumn{1}{c|}{50.5\%} & 38.4\% \\ \hline
    \multicolumn{5}{c}{DAN} \\ \hline
    \multicolumn{1}{c|}{PASCAL} & \multicolumn{1}{c|}{77.6\%} & \multicolumn{1}{c|}{76.5\%} & \multicolumn{1}{c|}{74.0\%} & 67.6\% \\
    \multicolumn{1}{c|}{DeepGlobe} & \multicolumn{1}{c|}{53.6\%} & \multicolumn{1}{c|}{53.4\%} & \multicolumn{1}{c|}{52.2\%} & 50.1\% \\
    \multicolumn{1}{c|}{CityScapes} & \multicolumn{1}{c|}{66.4\%} & \multicolumn{1}{c|}{58.3\%} & \multicolumn{1}{c|}{50.4\%} & 38.4\%
    \end{tabular}    
    }
    \caption{mIoU score.}
    \label{table:blk_annot:iou}
\end{subtable}

\begin{subtable}[]{0.7\linewidth} %dc
    \resizebox{1.0\columnwidth}{!}{
    % \begin{tabular}{ccccc}
    % \multicolumn{1}{c|}{${\nu}_{max}$} & \multicolumn{1}{c|}{8} & \multicolumn{1}{c|}{16} & \multicolumn{1}{c|}{32} & 256 \\ \hline
    % \multicolumn{5}{c}{DeepLab v3+} \\ \hline
    % \multicolumn{1}{c|}{PASCAL} & \multicolumn{1}{c|}{14.1\%} & \multicolumn{1}{c|}{5.4\%} & \multicolumn{1}{c|}{1.8\%} & 0.0\% \\
    % \multicolumn{1}{c|}{DeepGlobe} & \multicolumn{1}{c|}{4.6\%} & \multicolumn{1}{c|}{2.3\%} & \multicolumn{1}{c|}{0.3\%} & 0.0\% \\
    % \multicolumn{1}{c|}{Cityscapes} & \multicolumn{1}{c|}{43.4\%} & \multicolumn{1}{c|}{25.4\%} & \multicolumn{1}{c|}{13.1\%} & 0.0\% \\ \hline
    % \multicolumn{5}{c}{Dan} \\ \hline
    % \multicolumn{1}{c|}{PASCAL} & \multicolumn{1}{c|}{12.9\%} & \multicolumn{1}{c|}{4.6\%} & \multicolumn{1}{c|}{1.4\%} & 0.0\% \\
    % \multicolumn{1}{c|}{DeepGlobe} & \multicolumn{1}{c|}{6.5\%} & \multicolumn{1}{c|}{2.6\%} & \multicolumn{1}{c|}{0.3\%} & 0.0\% \\
    % \multicolumn{1}{c|}{Cityscapes} & \multicolumn{1}{c|}{42.1\%} & \multicolumn{1}{c|}{24.2\%} & \multicolumn{1}{c|}{12.2\%} & 0.0\%
    % \end{tabular}
    \begin{tabular}{ccccc}
    \multicolumn{1}{c|}{v} & \multicolumn{1}{c|}{256} & \multicolumn{1}{c|}{32} & \multicolumn{1}{c|}{16} & 8 \\ \hline
    \multicolumn{5}{c}{DeepLab v3+} \\ \hline
    \multicolumn{1}{c|}{PASCAL} & \multicolumn{1}{c|}{0.0\%} & \multicolumn{1}{c|}{1.8\%} & \multicolumn{1}{c|}{5.4\%} & 14.1\% \\
    \multicolumn{1}{c|}{DeepGlobe} & \multicolumn{1}{c|}{0.0\%} & \multicolumn{1}{c|}{0.3\%} & \multicolumn{1}{c|}{2.3\%} & 4.6\% \\
    \multicolumn{1}{c|}{CityScapes} & \multicolumn{1}{c|}{0.0\%} & \multicolumn{1}{c|}{13.1\%} & \multicolumn{1}{c|}{25.4\%} & 43.4\% \\ \hline
    \multicolumn{5}{c}{DAN} \\ \hline
    \multicolumn{1}{c|}{PASCAL} & \multicolumn{1}{c|}{0.0\%} & \multicolumn{1}{c|}{1.4\%} & \multicolumn{1}{c|}{4.6\%} & 12.9\% \\
    \multicolumn{1}{c|}{DeepGlobe} & \multicolumn{1}{c|}{0.0\%} & \multicolumn{1}{c|}{0.3\%} & \multicolumn{1}{c|}{2.6\%} & 6.5\% \\
    \multicolumn{1}{c|}{CityScapes} & \multicolumn{1}{c|}{0.0\%} & \multicolumn{1}{c|}{12.2\%} & \multicolumn{1}{c|}{24.2\%} & 42.1\%
    \end{tabular}    
    }
    \caption{relative mIoU-drop.}
    \label{table:blk_annot:iou_drop}
\end{subtable}
% \begin{subtable}[]{0.7\linewidth} %dc
%     \resizebox{1.0\columnwidth}{!}{
%     \begin{tabular}{ccccc}
%     \multicolumn{1}{c|}{${\nu}_{max}$} & \multicolumn{1}{c|}{8} & \multicolumn{1}{c|}{16} & \multicolumn{1}{c|}{32} & 256 \\ \hline
%     \multicolumn{5}{c}{DeepLab v3+} \\ \hline
%     \multicolumn{1}{c|}{PASCAL} & \multicolumn{1}{c|}{13.2\%} & \multicolumn{1}{c|}{5.1\%} & \multicolumn{1}{c|}{1.4\%} & 0.0\% \\
%     \multicolumn{1}{c|}{DeepGlobe} & \multicolumn{1}{c|}{4.6\%} & \multicolumn{1}{c|}{0.7\%} & \multicolumn{1}{c|}{0.6\%} & 0.0\% \\
%     \multicolumn{1}{c|}{Cityscapes} & \multicolumn{1}{c|}{43.2\%} & \multicolumn{1}{c|}{26.3\%} & \multicolumn{1}{c|}{12.5\%} & 0.0\% \\ \hline
%     \multicolumn{5}{c}{Dan} \\ \hline
%     \multicolumn{1}{c|}{PASCAL} & \multicolumn{1}{c|}{12.7\%} & \multicolumn{1}{c|}{5.2\%} & \multicolumn{1}{c|}{1.3\%} & 0.0\% \\
%     \multicolumn{1}{c|}{DeepGlobe} & \multicolumn{1}{c|}{6.7\%} & \multicolumn{1}{c|}{2.6\%} & \multicolumn{1}{c|}{0.7\%} & 0.0\% \\
%     \multicolumn{1}{c|}{Cityscapes} & \multicolumn{1}{c|}{42.2\%} & \multicolumn{1}{c|}{24.5\%} & \multicolumn{1}{c|}{11.8\%} & 0.0\%
%     \end{tabular}
%     }
%     \caption{relative mIoU-drop.}
%     \label{table:blk_annot:iou_drop}
% \end{subtable}
\caption{Experimental results of using our proposed block-wise annotation for learning semantic segmentation.
%based on PASCAL, DeepGlobe and Cityscapes datasets.
}
\label{table:blk_annot}
\end{table}
As the band limit ${\nu}_{max}$ goes lower, mIoU score goes smaller and a positive mIoU drop is observed. Here we also observe a significant larger amount of mIoU drop on the Cityscapes dataset comparing to those on the PASCAL and DeepGlobe datasets, which consists to the trend of $R_{ce}({\nu}_{max})$ in Table~\ref{table:spectral_stat} that $R_{ce}({\nu}_{max})$ on the Cityscapes dataset are significantly larger than the other datasets.
% A similar trends are observed from the other experiments on DeepLab v3+ and those on Dan.
Fig.~\ref{Fig:block_annotation_iou_R} illustrates the correlation between relative mIoU drops and $R_{ce}({\nu}_{max})$ for all experiments. The positive correlation between mIoU drop and $R_{ce}({\nu}_{max})$ agrees the correlation between CE and IoU score discussed in appendix~\ref{ssec:method_spectral_ioU}. Our studies show that the performance of the semantic segmentation network trained with the block-wise annotation strongly correlates to $R_{ce}({\nu}_{max})$. As a result, one can estimate the performance of the semantic segmentation network trained with the block-wise annotation by simply evaluating $R_{ce}({\nu}_{max})$ without thoroughly performing the experiments over all band limits.
\begin{figure}[h]
    \centering
    \includegraphics[width=0.8\columnwidth]{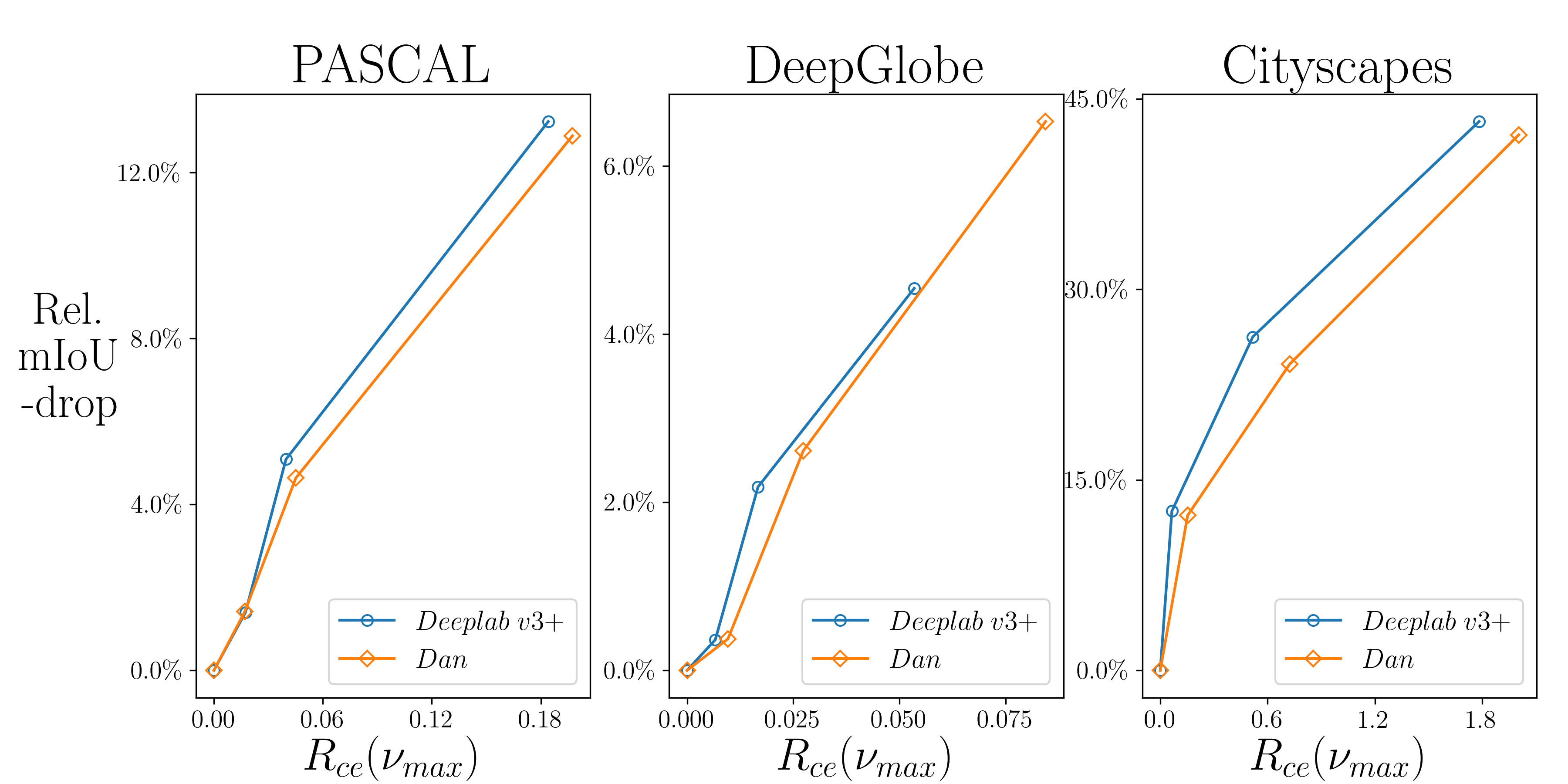} %dc
    \caption{The correlation between relative IoU-drop and the $R_{ce}({\nu}_{max})$ for block-wise annotation. 
    % The results are evaluated based on PASCAL, DeepGlobe and Cityscapes datasets.
    }
    \label{Fig:block_annotation_iou_R}
\end{figure}

In summary, the proposed spectral analysis enables the advanced analysis of weak annotation in the frequency domain. Our studies reveal the correlation between the segmentation performance and the LRG of segmentation maps. Based on our analysis and experiments, the block-wise annotation can be considered as a weak annotation when the block size is chosen according to the LRG size in the segmentation maps. Notably, these LRGs actually correspond to the coarse contour of instances in the segmentation maps, which are greatly utilized in the existing weak annotation. We provide the theoretical justification of the weak annotations by using our spectral analysis. Further research should be undertaken to investigate the spectral analysis upon the existing weak annotations~\cite{papandreou2015weakly,khoreva2017simple} in the future.

\section{Conclusion}\label{sec:conclusion}
Our proposed spectral analysis for semantic segmentation network correlate CE, IoU score and gradient back-propagation in the spectrum point of view. 
We first explicitly decompose CE and demonstrate the CE is mainly contributed by the low-frequency component of the segmentation maps, which associates with the features in CNNs at the same frequency. Furthermore, we proposed $R(\nu_{max})$ to estimate the efficacy of the LRG for segmentation maps. 
We test our theory on two applications: feature truncation and block annotation. Our results show that combination of the feature truncation and the network pruning can save computational cost significantly with small accuracy lost. In addition, the block annotation can potentially save more in labeling cost, since the network trained using the block-wise annotation in an efficient LRG performs close to the original network. The results from our experiments agree with our theoretical predictions based on $R(\nu_{max})$.
Lastly, despite the theoretical analysis and validation in this work, it remains unclear that how to determine the efficient band limit $\nu_{max}$ of the LRG for various datasets. It would be our future interests to estimate $\nu_{max}$ from the spectrum of groundtruth annotation.

\clearpage
\appendix
\section{Appendix}\label{sec:appendix}
\subsection{Related Work}\label{ssec:related work}
\subsubsection{Semantic Segmentation Neural Network}\label{ssec:relate:ssnn}
Among semantic segmentation neural networks (SSNN), Long~\etal first propose Fully Convolutional Neural Network (FCN)~\cite{long2015fully} that predicts the dense segmentation map by utilizing the skip-architecture, where the features of different granularities in the encoder are up-sampled and integrated in the decoder, yet still faces the challenge of acquiring accurate object boundaries in the segmentation map. 
The similar idea can be also observed in the U-Net \cite{ronneberger2015u}, which further adds dense skip-connections between the corresponding down-sampling and up-sampling modules of the same feature dimensions, in results the boundary localization is improved but not fully resolved yet. 
Other than skip connections, Chen~\etal propose the DeepLab models~ \cite{chen2014semantic,chen2017rethinking,chen2018encoder} that integrate the atrous spatial pyramid pooling module (ASPP), which utilizes the dilated convolutional layer composed of the filters at multiple sampling rates thus having the contextual information at the various spatial resolution, to boost the edge-response at object boundaries. Kou~\etal further propose Deep Aggregation Net (DAN) that utilize an aggregation decoder and progressively combines encoder features for final prediction to resolve the land cover segmentation across image scales. 
Besides of these SSNNs, extra modules such as dense conditional random field (dense CRF)~\cite{chen2014semantic,krahenbuhl2011efficient} and PointRend~\cite{kirillov2019pointrend} can be further applied to boost the edge-response near object boundaries while induce extra computational cost.
It is clear that improving edge-response near object boundaries becomes a main challenge of semantic segmentation while the cost of SSNNs grows dues to the dense decoder feature and post processing modules. This work investigate the spectral analysis and computation cost of DeepLab v3+ and DAN. We briefly review the cost of these SSNNs in the next section.

\subsubsection{Network Pruning}\label{ssec:relate:pruning}
Network pruning~\cite{liu2018rethinking,he2019asymptotic,Molchanov_2019_CVPR,blalock2020state,zhao2019variational,luo2017thinet,karnin1990simple,han2015learning} is proposed to reduce the cost of inference by removing the redundant network parameters. 
The redundant parameters are determined when either their contribution to output~\cite{Molchanov_2019_CVPR,luo2017thinet,zhao2019variational} or their norm~\cite{he2019asymptotic} are negligible. 
Noting that most of these pruning method are by hard pruning, \ie remove some weight values of filters~\cite{han2015learning} or completely remove the whole filters ~\cite{luo2017thinet}, while potentially degrading the capacity of networks. In contrast, He~\etal~\cite{he2019asymptotic} propose the soft pruning method that dynamically set redundant parameters to zero while keep the network capacity. This enables the compressed network to have a larger optimization space and make it easier for the model to learn from the training data, and achieve higher accuracy. 

Despite the success of these methods for accelerating network, we would like to point out that the existing pruning methods are solely investigated upon image classification~\cite{krizhevsky2009learning,russakovsky2015imagenet} instead of other task, such as semantic segmentation or image generation. We further investigate the application of pruning methods on semantic segmentation in this work. Noting the the segmentation networks typically have huge parameters in encoder while negligible parameters in decoder~\cite{chen2018encoder,kuo2018dan,long2015fully}. However, the computational costs of decoders are often comparable to those of encoders since it up-samples the features for the dense segmentation map and results in large features for computation. For example, the encoder of DeepLab v3+~\cite{chen2018encoder} has 95.6 billion FLOPs (floating-point operations) and 60.1 million parameters. In contrast, its decoder has 43.4 billion FLOPs while only has 1.3 million parameters. Similarly, the encoder of DAN~\cite{kuo2018dan} has 95.6 billion FLOPs with 60.1 million parameters while the decoder has 11.1 billion FLOPs with only 0.4 million parameters. The proposed feature truncation in section~\ref{ssec:exp_feat_trunc} is thus expected to effectively reduce the computational cost. Moreover, one can combine feature truncation with the typical network pruning method to reduce the computation cost in two different aspects, \ie the feature size and the redundant parameters.

\subsubsection{Spectral Analysis}\label{ssec:relate:spectral}
The existing works of spectral analysis demonstrate that the network tends to learn the low-frequency component of target signal in the regression of the uniformly distributed data with various frequencies~\cite{rahaman2018spectral,ronen2019convergence,luo2019theory,yang2019fine,xu2019frequency}. Such tendency is known as spectral bias~\cite{rahaman2018spectral} or Frequency Principle\cite{xu2019frequency}. 
More specifically, these works found that the network tend to learn low-frequency signal in the earlier training stage. Ronen~\etal~\cite{ronen2019convergence} further provide the theoretical explanation based normalized training data that is uniformly distributed on a hypersphere. Under same assumption of data distribution, Yang and Salman~\cite{yang2019fine} further investigate the eigen function of neural tangent kernel (NTK)~\cite{jacot2018neural} and demonstrate that the eigenvalue of NTK decrease as the frequency increases. This provide further theoretical insight and justify the spectral bias that the learning of networks converge faster for low-frequency signals. 

So far, the existing works investigate the spectral bias solely under the normalized training data with uniform distribution over frequency regime. This work further extends spectral analysis to semantic segmentation, where the target data is non-uniformly distributed. Furthermore, these works mostly study the convergence speed for each frequency regime while this work focus on the learned distribution of networks at final training stage. This helps us to estimate the capacity of models, in the sense of frequency, under the spectral bias in semantic segmentation.

\subsection{Fourier transform of spatial integral}
\begin{lemma}\label{lemma:ft_prod}
Given two functional $Y(t)$ and $B(t)$ in spatial domain $t$, the overlapping integral ${\int} Y(t){B}(t)dt$ can be transformed into the frequency domain $\nu$ as 
\begin{equation}
    \begin{aligned}
        {\int} Y(t){B}(t)\,dt = {\int} y(z)\,{b}( - z)\,dz
    \end{aligned}
    % \label{Eq:spectral_pxprod}
\end{equation}
where $y(\nu)=\mathcal{F}(Y(t))$ and $b=\mathcal{F}(B)$.
\end{lemma}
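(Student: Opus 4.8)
The identity is the standard multiplication (Parseval/Plancherel\nobreakdash-type) theorem, so the plan is to reduce it to two elementary facts: that integrating a function over the spatial domain equals evaluating its spectrum at the origin, and the convolution theorem. First I would note that for any (integrable) $F$ one has $\int F(t)\,dt = \mathcal{F}(F)(0)$, since the kernel $e^{-j\nu t}$ equals $1$ at $\nu=0$. Applying this with $F(t)=Y(t)B(t)$ reduces the claim to showing $\mathcal{F}(YB)(0)=\int y(z)\,b(-z)\,dz$. Then I would invoke the convolution theorem $\mathcal{F}(YB)=y\ast b$, i.e. $\mathcal{F}(YB)(\nu)=\int y(\xi)\,b(\nu-\xi)\,d\xi$; evaluating at $\nu=0$ gives exactly $\int y(\xi)\,b(-\xi)\,d\xi$, which is the right-hand side after renaming $\xi$ to $z$.

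Alternatively — and this is the route I would actually write out, since it stays at the level of elementary manipulations and matches the delta-function formalism used elsewhere in the paper — I would substitute the inverse-transform representations $Y(t)=\int y(\nu)\,e^{j\nu t}\,d\nu$ and $B(t)=\int b(z)\,e^{jzt}\,dz$ into $\int Y(t)B(t)\,dt$, interchange the order of integration by Fubini, and carry out the $t$-integral first using the identity $\int e^{j(\nu+z)t}\,dt \propto \delta(\nu+z)$ (the Dirac delta in the continuous setting; the Kronecker delta in the discrete/periodic DFT setting used for Theorem~\ref{theorem:ce}). The delta collapses one of the two frequency integrals and forces $z=-\nu$, leaving $\int y(\nu)\,b(-\nu)\,d\nu$, modulo the normalization constant fixed by the chosen Fourier convention.

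The only genuine subtlety — the ``hard part'' — is the justification for swapping the order of integration and the formal use of the delta-function identity. These are legitimate under mild decay/integrability hypotheses on $Y$ and $B$ (e.g. $Y,B\in L^1\cap L^2$), or rigorously in the distributional sense; and for segmentation logits and annotations, which are bounded and supported on a finite image grid, the discrete analogue holds with no convergence issues at all. I would therefore state the lemma for the discrete Fourier transform (so it applies verbatim when the integrals in Theorem~\ref{theorem:ce} are read as finite sums over the grid) and simply remark that the continuous version follows under the standard integrability assumptions.
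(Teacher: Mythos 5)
Your proposal is correct and follows essentially the same route as the paper: the paper also applies the convolution theorem to write $Y(t)B(t)=\mathcal{F}^{-1}(y\otimes b)(t)$, integrates over $t$, and uses the orthogonality identity $\int e^{2j\pi\nu t}\,dt = D_0(\nu)$ to collapse the frequency integral to $(y\otimes b)(0)=\int y(z)\,b(-z)\,dz$, which is exactly your ``evaluate the spectrum at zero via the delta function'' argument. Your added remarks on Fubini and the discrete versus continuous setting go beyond what the paper writes, but they do not change the method.
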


\begin{proof}
By the convolution lemma, integral ${\int} Y(t){B}(t)dt$ can be written as
\begin{equation}
    \begin{aligned}
        {\int} Y(t){B}(t)\,dt & ={\int} {{\cal F}^{-1}}(y(\nu) \otimes b(\nu))\,dt
    \end{aligned}
    \label{Eq:spectral_pxprod0}
\end{equation}
; where $ \otimes $ denotes the convolution operation as $y(\nu) \otimes b(\nu) = {\int} y(z)\,b(\nu  - z)\,dz$; 
% ${\mathcal F}(Y(t)) = {\int} Y(t) e^{-2j\pi\nu t}dt$ is the Fourier transform operator and
${{\mathcal F}^{-1}}(y(\nu)) = {\int} y(\nu) e^{2j\pi\nu t}d\nu$ is the inverse Fourier transform operator; $j = \sqrt{-1}$. Eq.~\ref{Eq:spectral_pxprod0} can now be written as
\begin{equation}
    \begin{aligned}
        {\int} Y(t){B}(t)\,dt & = {\int}{\int} y(\nu) \otimes b(\nu)e^{2j\pi\nu t}\,d\nu \,dt
        \\& = {\int} {\int} {\int} e^{2j\pi \nu t}\,y(z)\,{b}(\nu  - z)\,dt\,d\nu \,dz
    \end{aligned}
    \label{Eq:spectral_pxprod1}
\end{equation}
By the orthogonality of Fourier basis, we have ${\int} e^{2j\pi \nu t}dt = D_{0}(\nu)$, where $D_{0}(\nu)$ is the Dirac delta function:
\begin{equation}
    \begin{aligned}
        D_{0}(\nu) = 
\begin{cases}
    \infty,& \text{if } \nu=0 \\
    0,              & \text{otherwise}
\end{cases}        
    \end{aligned}
    \label{Eq:delta}
\end{equation}
and its integral property is $\int y(\nu)D_{0}(\nu)d\nu=y(0)$. 
Hence, Eq.~\ref{Eq:spectral_pxprod1} is given as 
\begin{equation}
    \begin{aligned}
        {\int} Y(t){B}(t)\,dt & = {\int} {\int} y(z)\,{b}(\nu  - z)\,D_{0}(\nu)\,d\nu \,dz \\
        & = {\int} y(z)\,{b}( - z)\,dz
    \end{aligned}
    % \label{Eq:spectral_pxprod}
\end{equation}
\end{proof}

\begin{lemma}\label{lemma:ft_sum}
Given functional $Y$ in spatial domain $t$, the integral ${\int} Y(t)dt$ can be transformed into the frequency domain $\nu$ as 
\begin{equation}
    \begin{aligned}
        {\int} Y(t)dt & = y(0);
    \end{aligned}
    \label{Eq:spectral_pxsum}
\end{equation}
where $y(\nu)=\mathcal{F}(Y(t))$.
\end{lemma}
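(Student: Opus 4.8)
The plan is to prove Lemma~\ref{lemma:ft_sum} directly from the definition of the Fourier transform and its inverse, exactly as used already in the proof of Lemma~\ref{lemma:ft_prod}. Recall the convention fixed there: $\mathcal{F}^{-1}(y(\nu)) = \int y(\nu) e^{2j\pi\nu t}\,d\nu$, so that $Y(t) = \int y(\nu) e^{2j\pi\nu t}\,d\nu$. First I would substitute this expression for $Y(t)$ into the spatial integral $\int Y(t)\,dt$ and exchange the order of integration, writing $\int Y(t)\,dt = \int \int y(\nu) e^{2j\pi\nu t}\,d\nu\,dt = \int y(\nu)\left(\int e^{2j\pi\nu t}\,dt\right)d\nu$.

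Next I would invoke the orthogonality of the Fourier basis, which is the same fact already used in the preceding lemma: $\int e^{2j\pi\nu t}\,dt = D_{0}(\nu)$, the Dirac delta (see Eq.~\ref{Eq:delta}). Plugging this in gives $\int Y(t)\,dt = \int y(\nu) D_{0}(\nu)\,d\nu$, and then the sifting/integral property of the Dirac delta, $\int y(\nu) D_{0}(\nu)\,d\nu = y(0)$, immediately yields $\int Y(t)\,dt = y(0)$, which is the claim.

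I expect no real obstacle here: the statement is essentially the $\nu=0$ evaluation of the Fourier transform, $y(0) = \int Y(t) e^{-2j\pi\cdot 0\cdot t}\,dt = \int Y(t)\,dt$, so one could alternatively give the one-line proof directly from the forward transform definition. The only point requiring any care is consistency of the Fourier normalization constants and sign conventions with those implicitly fixed in Lemma~\ref{lemma:ft_prod}; as long as I reuse the same $\mathcal{F}$ and $\mathcal{F}^{-1}$ and the same orthogonality relation $\int e^{2j\pi\nu t}\,dt = D_0(\nu)$, the argument is self-contained. I would present the version that routes through $\mathcal{F}^{-1}$ and the delta function, for uniformity with the surrounding proofs.
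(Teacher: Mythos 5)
Your proposal is correct and follows essentially the same route as the paper's proof: write $Y(t)=\mathcal{F}^{-1}(y(\nu))$, swap the order of integration, apply $\int e^{2j\pi\nu t}\,dt = D_0(\nu)$, and use the sifting property to get $y(0)$. The one-line alternative you mention (evaluating the forward transform at $\nu=0$) is also valid, but the version you chose matches the paper exactly.
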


\begin{proof}
The proof follows similar process as for lemma~\ref{lemma:ft_prod}, as follows.
\begin{equation}
    \begin{aligned}
        {\int} Y(t)dt & ={\int} {{\cal F}^{-1}}(y(\nu))dt 
        \\& = {\int}{\int} y(\nu)e^{2j\pi \nu t}d\nu dt
        \\& = {\int} y(\nu){D_{0}(\nu)}d\nu
        \\& = y(0)
    \end{aligned}
\end{equation}
\end{proof}

\subsection{Proof of Theorem~\ref{theorem:ce}}\label{ssec:proof_spectral_ce}
\ce*
\begin{proof}
Given $Y$ and $B$, $\mathcal{L}_{CE}$ the cross-entropy is
\begin{equation}
    \begin{aligned}
         \mathcal{L}_{CE} & = -  { \sum _{c}} {\int} B(t,c) {log} (   { \frac {e^{Y(t,c)}}{{ \sum _{c}} e^{Y(t,c)}}}) dt 
        %  \\ & =  -   { \sum _{c}} {\int} B(t,c)(Y(t,c) - {log}({\sum _{c}} e^{Y(t,c)}))dt  
        \\ & = - { \sum _{c}} {\int} B(t,c)(Y(t,c) - {Y_{p}}(t))dt,
    \end{aligned}
    % \label{Eq:ce}
\end{equation}
where ${Y_{p}}(t)={log}({ \sum _{c}} e^{Y(t,c)})$. 
For all $Y(t)\in  Y(t,c)$ and ${B}(t)\in B(t,c)$, the integral ${\int} Y(t){B}(t)dt$ can be transformed to the frequency domain $\nu$ as follows. (See lemma~\ref{lemma:ft_prod} of appendix)
\begin{equation}
    \begin{aligned}
        {\int} Y(t){B}(t)\,dt = {\int} y(z)\,{b}( - z)\,dz
    \end{aligned}
\end{equation}
where $y(\nu)$ and $b(\nu)$ are the spectrum of the segmentation logits and that of the groundtruth annotations, respectively. The $\mathcal{L}_{CE}$ in Eq.~\ref{Eq:ce} is hence given by
\begin{equation}
    \begin{aligned}
         \mathcal{L}_{CE} & = - { \sum _{c}} {\int} B(t,c)(Y(t,c) - {Y_{p}}(t))dt
              \\ & = { \sum _{c}} {\int} {b}(-\nu, c)({y_{p}}(\nu) - y(\nu, c))d\nu.
    \end{aligned}
    \label{Eq:spectral_ce_final}
\end{equation}
The discrete integral of  Eq.~\ref{Eq:spectral_ce_final} gives us the  decomposition of the $\mathcal{L}_{CE}$  over frequency domain $\nu$ as following
\begin{equation}
    \begin{aligned}
        \mathcal{L}_{CE} & = { \sum _{\nu}} { \sum_{c} }{b}(-\nu, c)({y_{p}}(\nu) - y(\nu, c))
          \\ & = { \sum _{\nu}} \mathcal{L}_{ce}(\nu),
    \end{aligned}
    \label{Eq:spec_ce_final_discrete}
\end{equation}
where 
\begin{equation}
    \begin{aligned}
        \mathcal{L}_{ce}(\nu) = { \sum_{c} }{b}(-\nu, c)({y_{p}}(\nu) - y(\nu, c)).
    \end{aligned}
    % \label{Eq:spec_ce_component}
\end{equation}
\end{proof}

\subsection{Spectral Analysis of Intersection-over-Union Score}\label{ssec:method_spectral_ioU}
Given the segmentation logits $Y$ and the groundtruth annotation $B$, the intersection-over-union (IoU) score is typically defined as $\frac{|B \cap S|}{ |B \cup S|}$, where $S$ is the segmentation output $S(t,c)={ \frac {e^{Y(t,c)}}{{ \sum _{c}} e^{Y(t,c)}}}$. 
It is common to train the network with CE and evaluate the network performance based on IoU scores. This section aims to analyze the formalism of IoU score in frequency domain and shed some light to the reason why IoU scores can be increased when the CE is decreased.

In order to analyze the IoU score in the frequency domain, we extend the above definition to the continuous space as follows: 
\begin{equation}
    \begin{aligned}
    IoU( {S,B} ) & = \frac{{\int B(t)S(t)} \,dt}{{\int B(t) + S(t)\,dt - \int B(t)S(t)\,dt}}
    \\ & = \frac{1}{{\frac{{\int B(t) + S(t)\,dt}}{{\int B(t)S(t)\,dt}} - 1}}
    \end{aligned}
    \label{Eq:iou}
\end{equation}
where $t$ denotes pixel indexes. Eq.~\ref{Eq:iou} holds for each object class $c$. Here we skip $c$ for simplicity. Notably, this definition is equivalent to the origin definition of IoU score for the binarized segmentation maps. The components in Eq.~\ref{Eq:iou} can be written as follows (see lemma~\ref{lemma:ft_prod} and lemma~\ref{lemma:ft_sum} of appendix), 
\begin{equation}
    \begin{aligned}
        \int B(t)S(t)\,dt & = \int s(\nu)b(- \nu)d\nu, \text{~and}~ \\
        {\int} B(t)+S(t)dt &= b(0)+s(0),
    \end{aligned}
\end{equation}
where $s(\nu)={\cal F}(S(t))$. As a result, IoU score can be written as
\begin{equation}
    \begin{aligned}
        & IoU({s,b}) =  \frac{1}{{\frac{{ {s(0) + b(0)} }}{{{\int} s(\nu)b(- \nu)d\nu}} - 1}}
    \end{aligned}
    \label{Eq:spectral_iou}
\end{equation}
and it is composed of two terms: ${s(0) + b(0)}$ and ${\int} s(\nu)b(- \nu)d\nu = \int B(t)S(t)\,dt$. 
It can be seen that the IoU score can not be explicitly decomposed as the case for CE in Eq.~\ref{Eq:spec_ce_final_discrete} due to the non-linearity of Eq~\ref{Eq:spectral_iou}. On the other hand, noting that the latter term, \ie ${\int} s(\nu)b(- \nu)d\nu = \int B(t)S(t)\,dt$, positively correlates to ${\int} B(t){log}(S(t))dt$ the component of CE in Eq.~\ref{Eq:ce} since the $log$ function is monotonically increasing. 
In addition, ${\int} B(t){log}(S(t))dt$ the component of CE can be approximated as 
\begin{equation}
    \begin{aligned}
        {\int} B(t){log}(S(t))dt &= {\int} B(t)(S(t)-1)dt
        \\ &= {\int} s(\nu)b(- \nu)d\nu - b(0)
    \end{aligned}
    \label{Eq:spectral_iou_approx}
\end{equation}
by the Taylor expansion of the $log$ function. In this case, the component of CE only deviates from ${\int} s(\nu)b(- \nu)d\nu$ by $b(0)$ that is independent of $s(\nu)$. Hence, minimal $\mathcal{L}_{CE}$ maximizes $\int B(t)S(t)\,dt$ as well as IoU score. In addition, it immediately follows that IoU is mainly contributed by low-frequency components as $\mathcal{L}_{CE}$. 

\subsection{Spectral Analysis of Relative IoU-drop}\label{ssec:method_trunc_ioU}
Following the discussion in section~\ref{ssec:method_spectral_ioU}, we now discuss the IoU loss caused by using the LRG for prediction. Furthermore, we aim to investigate the correlation between IoU loss and the $R_{ce}({\nu}_{max})$ in Eq.~\ref{Eq:r_max}.
Following the spectral analysis in Eq.~\ref{Eq:spectral_iou}, let truncated IoU be 
\begin{equation}
    \begin{aligned}
        & \widehat{IoU}({\nu}_{max}) =  \frac{1}{{\frac{{ {s(0) + b(0)} }}{{{\int}^{{\nu}_{max}}_0 s(\nu)b(- \nu)d\nu}} - 1}}
    \end{aligned}
    \label{Eq:spectral_iou_bandlimit}
\end{equation}
where ${\nu}_{max}$ is the band limit defined in Eq.~\ref{Eq:r_max} as well as in Eq.~\ref{Eq:spectral_ce_bandlimit}.
Plugging in Eq.~\ref{Eq:spectral_iou} and Eq.~\ref{Eq:spectral_iou_bandlimit}, the IoU loss caused by LRG is thus given as
\begin{equation}
    \begin{aligned}
        \Delta{IoU} &= IoU - \widehat{IoU}({\nu}_{max}) 
        \\ &=  \frac{1}{{\frac{{ {s(0) + b(0)} }}{{{\int} s(\nu)b(- \nu)d\nu}} - 1}} - \frac{1}{{\frac{{ {s(0) + b(0)} }}{{{\int}^{{\nu}_{max}}_0 s(\nu)b(- \nu)d\nu}} - 1}},
    \end{aligned}
    % \label{Eq:spectral_iou_bandlimit_difference}
\end{equation}
where $IoU$ is $IoU(s,b)$ in Eq.~\ref{Eq:spectral_iou}. 
For simplicity, let $c = s(0) + b(0)$, $p_1 = {{{\int} s(\nu)b(- \nu)d\nu}}$, and $p_2 = {{{\int}^{{\nu}_{max}}_0 s(\nu)b(- \nu)d\nu}}$. The equation can then be simplified as
\begin{equation}
    \begin{aligned}
        \Delta{IoU} &= \frac{1}{\frac{c}{p_1} - 1} - \frac{1}{\frac{c}{p_2} - 1}
        \\ &= \frac{p_1}{c-p_1} - \frac{p_2}{c-p_2}
        \\ &= \frac{c}{(c-p_1)(c-p_2)} (p_1-p_2)
        \\ &= \frac{1}{\frac{c}{p_1} - 1} \frac{c}{c - p_2} \frac{p_1-p_2}{p_1}
        \\ &= IoU \frac{c}{c - p_2} \frac{p_1-p_2}{p_1}        
        \\ \frac{\Delta{IoU}}{IoU} &= \frac{c}{c - p_2} \frac{p_1-p_2}{p_1},
    \end{aligned}
    \label{Eq:spectral_iou_simplified}
\end{equation}
where $\frac{\Delta{IoU}}{IoU}$ is the relative IoU-drop. To simplify the discussion, let us consider only the one class performance. Namely, let
\begin{equation}
    \begin{aligned}
        \mathcal{L}_{CE} &= {\int} B(t){log}(S(t))dt 
        \\& = {\int} s(\nu)b(- \nu)d\nu - b(0)
        \\& = p_1 - b(0)
        \\ p_1 &= \mathcal{L}_{CE} + b(0)
    \end{aligned}
    \label{Eq:spectral_ce_approx_p1}
\end{equation}
Similarly, 
\begin{equation}
    \begin{aligned}
        \widehat{\mathcal{L}}_{ce}({\nu}_{max}) & = {\int}^{{\nu}_{max}}_0 s(\nu)b(- \nu)d\nu - b(0)
        \\& = p_2 - b(0)
        \\ p_2 &= \widehat{\mathcal{L}}_{ce}({\nu}_{max}) + b(0)
    \end{aligned}
    \label{Eq:spectral_ce_approx_p2}
\end{equation}
by the approximation in Eq.~\ref{Eq:spectral_iou_approx}. 
Without loss of generality, let us assume $b(0) = 0$. Plugging Eq.~\ref{Eq:spectral_ce_approx_p1} and Eq.~\ref{Eq:spectral_ce_approx_p2} into Eq.~\ref{Eq:spectral_iou_simplified},
\begin{equation}
    \begin{aligned}
        \frac{\Delta{IoU}}{IoU} &= |\frac{\Delta{IoU}}{IoU}|
        \\ &= |\frac{c}{c - p_2} \frac{p_1-p_2}{p_1}|
        \\ &= |\frac{1}{1 - \frac{\widehat{\mathcal{L}}_{ce}({\nu}_{max})}{s(0)}} \frac{ \mathcal{L}_{CE} - \widehat{\mathcal{L}}_{ce}({\nu}_{max})}{\mathcal{L}_{CE}}|
        \\ &= |\frac{1}{1 - \frac{\widehat{\mathcal{L}}_{ce}({\nu}_{max})}{s(0)}}| R_{ce}({\nu}_{max})
    \end{aligned}
    \label{Eq:spectral_ce_approx_sup}
\end{equation}
This demonstrates the explicit correlation between $\frac{\Delta{IoU}}{IoU}$ the relative IoU-drop and $R_{ce}({\nu}_{max})$, which has a positive slope if the approximation in Eq.~\ref{Eq:spectral_iou_approx} holds. 

\subsection{Spectral Analysis of Boundary Intersection-over-Union Score}\label{ssec:method_spectral_bioU}
Following the notation in section~\ref{ssec:method_spectral_ioU}, the boundary 
intersection-over-union (Boundary IoU)~\cite{cheng2021boundary} score is defined as 
\begin{equation}
    \begin{aligned}
        boundary\,IoU = \frac{|(B \cap B_d) \cap (S \cap S_d)|}{ |(B \cap B_d) \cup (S \cap S_d)|},
    \end{aligned}
\end{equation}
where $S_d$ and $B_d$ denote the pixels in the boundary region of $S$ and $B$, respectively; $d$ is the width of boundary region. 
Compared to IoU score, such evaluation metric is shown to be sensitive to the boundary especially for the large object. In addition to its sensitivity to object boundary, this section reveals its theoretical insight and demonstrate that it's mainly contributed by the low-frequency component of segmentation map. 

Without loss of generality, we analyze the 1 dimensional case of boundary IoU. 
We consider the binary segmentation map as follows,
\begin{equation}
    \begin{aligned}
        S &= H(t - t_{s0}) - H(t - t_{s1}), t_{s0} < t_{s1} \\
        B &= H(t - t_{b0}) - H(t - t_{b1}), t_{b0} < t_{b1}
    \end{aligned}
\end{equation}
where $H$ is the Heaviside function; $t_{s0}$ and $t_{s1}$ are the boundary pixels of $S$; $t_{b0}$ and $t_{s1}$ are the boundary pixels of $B$; 
we model the boundary region of segmentation map by two gaussian function for each boundary edge. Namely, 
\begin{equation}
    \begin{aligned}
        S_{\partial\Omega} &= S_d \cap S =e^{- \frac {(t - t_{s0})^{2}}{2\sigma^2}} + e^{ - \frac {(t -
        t_{s1})^{2}}{2\sigma^2}} \\
        B_{\partial\Omega} &= B_d \cap B = e^{- \frac {(t - t_{b0})^{2}}{2\sigma^2}} + e^{ - \frac {(t - t_{b1})^{2}}{2\sigma^2}}
    \end{aligned}
    \label{Eq:Omega}
\end{equation}
where $\sigma = \sigma(d)$ is the width of gaussian associating with the $d$ in $S_d$ and $B_d$ mentioned above. We have their Fourier transform as
\begin{equation}
    \begin{aligned}
        s_{\partial\Omega} &= (e^{- j\,t_{s0}\,\nu} + e^{- j\,t_{s1}\,\nu})\,e^{ - \frac {\nu ^{2}\,\sigma ^{2}}{2}} \\
        b_{\partial\Omega} &= (e^{- j\,t_{b0}\,\nu} + e^{- j\,t_{b1}\,\nu})\,e^{- \frac {\nu ^{2}\,\sigma ^{2}}{2}}
    \end{aligned}
    \label{Eq:omega}
\end{equation}

Following similar deduction as in Eq.~\ref{Eq:iou} and Eq.~\ref{Eq:spectral_iou}, we have
\begin{equation}
    \begin{aligned}
        boundary\,IoU({s,b}) &= \frac{1}{{\frac{{ {s_{\partial\Omega}(0) + b_{\partial\Omega}(0)} }}{{{\int} s_{\partial\Omega}(\nu)b_{\partial\Omega}(- \nu)d\nu}} - 1}},
    \end{aligned}
    \label{Eq:spectral_biou}
\end{equation}
where 
\begin{equation}
    \begin{aligned}
        &{{\int} s_{\partial\Omega}(\nu)b_{\partial\Omega}(- \nu)d\nu} = \frac {\sqrt{\pi }}{2\sigma} ( \\
        & e^{ - \frac {( - t_{s0} + t_{b0})^{2}}{4\sigma ^{2}}}\mathrm{erf}(\nu \sigma - j{\frac {
        (t_{b0} - t_{s0})}{2\sigma }} ) \\
        + & e^{- \frac {( - t_{s0} + t_{b1})^{2}}{4\sigma ^{2}}}\mathrm{erf}(\nu \sigma - j{\frac {(t_{b1} - t_{s0})}{2\sigma }} ) \\
        + & e^{- \frac {( - t_{s1} + t_{b0})^{2}}{4\sigma ^{2}}}\mathrm{erf}(\nu \sigma - j{ \frac {(t_{b0} - t_{s1})}{2\sigma }} ) \\
        + & e^{- \frac {( - t_{s1} + t_{b1})^{2}}{4\sigma ^{2}}}\mathrm{erf}(\nu \sigma - j{ \frac {(t_{b1} - t_{s1})}{2\sigma }} ))
    \end{aligned}
    \label{Eq:spectral_overla_integral}
\end{equation}
by plugging the Eq.~\ref{Eq:omega}; $\mathrm{erf}$ is the error function.
Similar to Eq.~\ref{Eq:spectral_iou}, Eq.~\ref{Eq:spectral_biou} consists of the zero-frequency part, \ie ${s_{\partial\Omega}(0) + b_{\partial\Omega}(0)}$, and the non-zero frequency part, \ie Eq.~\ref{Eq:spectral_overla_integral}. We focus on analyzing the non-zero frequency part to further reveal the sensitivity of boundary IoU with respect to these frequency regime. Eq.~\ref{Eq:spectral_overla_integral} can be further approximated as 
\begin{equation}
    \begin{aligned}
        &{{\int} s_{\partial\Omega}(\nu)b_{\partial\Omega}(- \nu)d\nu} = \frac {\sqrt{\pi }}{2\sigma}\mathrm{erf}(\nu \sigma)( 
        e^{ - \frac {( - t_{s0} + t_{b0})^{2}}{4\sigma ^{2}}}
        \\ & + e^{- \frac {( - t_{s0} + t_{b1})^{2}}{4\sigma ^{2}}} 
        + e^{- \frac {( - t_{s1} + t_{b0})^{2}}{4\sigma ^{2}}}
        + e^{- \frac {( - t_{s1} + t_{b1})^{2}}{4\sigma ^{2}}})
    \end{aligned}
    \label{Eq:spectral_overla_integral_approx}
\end{equation}
by using the expansion of erf function
\begin{equation}
    \begin{aligned}
        & \mathrm{erf}(\nu \sigma - jC) \simeq \mathrm{erf}(\nu \sigma) +
        \\ & \frac{e^{-\nu^2 \sigma^2}}{2\pi \nu \sigma}(1-cos(2\nu \sigma C)+j\,sin(2\nu \sigma C))
        \\ & \simeq \mathrm{erf}(\nu \sigma)
    \end{aligned}
    \label{Eq:erf_approx}
\end{equation}
It follows immediately from Eq.~\ref{Eq:spectral_overla_integral_approx} that ${{\int} s_{\partial\Omega}(\nu)b_{\partial\Omega}(- \nu)d\nu}$ is mainly contributed by low-frequency regime dues to erf function. 
% More specifically, erf function converge to 0.997 when $\nu \sigma = 3$. 
This implies that boundary IoU is also mainly contributed by low-frequency regime while being sensitive to the object boundary.

\subsection{Gradient propagation for a convolution layer}\label{ssec:sgd_conv}
Consider a convolution layer consists of the convolutional kernel $K(t)$ and the soft-plus activation function $\sigma(z(t))=log(1+e^{z(t)})$; $t$ is the spatial location. Let $X$ denote the input, the output of convolution layer is written as 
\begin{equation}
\begin{aligned}
    Z(t) &= K(t) \otimes X(t) \\
    Y(t) &=\sigma(Z(t))
\end{aligned}
\end{equation}

\begin{lemma}\label{lemma:conv_grad}
Assuming $K(t)$ is small and $|X(t)| < 1$, the spectral gradient can be approximated as
\begin{equation}
    \begin{aligned}
         \frac{\partial y(\nu_i)}{\partial x(\nu_j)} \simeq \frac{1}{2}k( \nu_j){{\delta}_{\nu_j}(\nu_i)},
    \end{aligned}
\end{equation}
where $z(\nu), k(\nu)$, $x(\nu)$ and ${\delta}_{\nu_j}(\nu_i)$ are ${\cal F}(Z(t)), {\cal F}(K(t))$, ${\cal F}(X(t))$ and the Kronecker delta function, respectively.
\end{lemma}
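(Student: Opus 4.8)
The plan is to combine the convolution theorem with a first-order linearization of the soft-plus activation, the linearization being legitimate precisely because the hypotheses ``$K(t)$ small'' and ``$|X(t)|<1$'' force $Z(t)=K(t)\otimes X(t)$ to be uniformly small. First I would pass the convolution to the frequency domain: since convolution becomes pointwise multiplication, $z(\nu)=k(\nu)\,x(\nu)$, and hence $\partial z(\nu_i)/\partial x(\nu_j)=k(\nu_i)\,\delta_{\nu_j}(\nu_i)=k(\nu_j)\,\delta_{\nu_j}(\nu_i)$, the last equality holding because the Kronecker delta forces $\nu_i=\nu_j$ wherever it is nonzero. This already produces the claimed diagonal structure, and the kernel enters $Y$ only through $Z$ and nowhere else.

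Next I would expand the soft-plus about the origin. Since $\sigma(z)=\log(1+e^{z})$ satisfies $\sigma(0)=\log 2$ and $\sigma'(0)=\tfrac12$, Taylor expansion gives $\sigma(Z(t))=\log 2+\tfrac12 Z(t)+O\!\left(Z(t)^2\right)$, and under the stated assumptions $|Z(t)|\le\|K\|_1\|X\|_\infty\ll 1$, so the quadratic and higher terms are negligible relative to the linear one. Taking the Fourier transform termwise, the constant $\log 2$ contributes only at the zero frequency and is independent of $x$, the linear term contributes $\tfrac12 z(\nu)$, and the $O(Z^2)$ remainder transforms into a self-convolution $z\otimes z$ of the spectrum (plus higher convolution powers), which does spread frequency content but is of second order in the kernel magnitude and is therefore dropped. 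Hence $y(\nu_i)\simeq \log 2\cdot D_0(\nu_i)+\tfrac12 z(\nu_i)$.

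Finally, applying the chain rule and discarding the $x$-independent term yields $\partial y(\nu_i)/\partial x(\nu_j)\simeq \tfrac12\,\partial z(\nu_i)/\partial x(\nu_j)=\tfrac12 k(\nu_j)\,\delta_{\nu_j}(\nu_i)$, which is the assertion. The main obstacle --- indeed essentially the only nontrivial point --- is controlling the truncation error: the nonlinearity genuinely couples all frequencies through the convolution powers appearing in $\mathcal{F}(Z^n)$, so the delta-function conclusion is an approximation valid only to leading order in the kernel magnitude; making the ``$\simeq$'' quantitative would amount to bounding $\|z\otimes z\|$ by $\|z\|^{2}=O(\|K\|^{2}\|X\|^{2})$ and comparing it with the first-order term $\tfrac12 z=O(\|K\|\|X\|)$. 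A secondary bookkeeping point is to fix the Fourier normalization so that the convolution theorem carries no stray factor and the coefficient comes out exactly as $\sigma'(0)=\tfrac12$.
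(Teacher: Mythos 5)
Your proposal is correct and follows essentially the same route as the paper's proof: convolution theorem to get the diagonal $k(\nu_j)\delta_{\nu_j}(\nu_i)$ factor, Taylor expansion of the soft-plus about $Z=0$ justified by the smallness of $K$ and boundedness of $X$, and the chain rule. The only cosmetic difference is that the paper retains the quadratic term $\tfrac18 Z^2$ through the gradient computation (yielding an explicit off-diagonal correction $\tfrac14 z(\nu_i-\nu_j)$) before discarding it, whereas you drop the $O(Z^2)$ self-convolution at the expansion stage --- the same approximation made at a slightly earlier point.
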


\begin{proof}
The spectral gradient of a convolution layer consists of the spectral gradient for the convolution operator and that for the activation function. We will show two gradient and combine it in the end of derivation.

For the convolution operator, it can be written as in the frequency domain ${z}(\nu) = k(\nu)x(\nu)$, where $z(\nu), k(\nu)$, and $x(\nu)$ are ${\cal F}(Z(t)), {\cal F}(K(t))$, and ${\cal F}(X(t))$, respectively. Without loss of generality, in the discrete frequency domain, the gradient of $z$ under a specific frequency $\nu_i$ with respect to the $x$ under frequency $\nu_j$ is defined as
\begin{equation}
    \begin{aligned}
        & \frac{\partial z(\nu_i)}{\partial x(\nu_j)} = \frac{\partial }{{\partial x(\nu_j)}}( {k(\nu_i)x(\nu_i)}) = k(\nu_i){{\delta}_{\nu_j}(\nu_i)},
    \end{aligned}
    \label{Eq:conv_spec_grad}
\end{equation}
where ${\delta}_{\nu_j}(\nu_i)$ is the Kronecker delta function.
\begin{equation}
    \begin{aligned}
        {\delta}_{\nu_j}(\nu_i) = 
\begin{cases}
    1, & \text{if } \nu_i=\nu_j \\
    0, & \text{otherwise}
\end{cases}        
    \end{aligned}
    % \label{Eq:kron_delta}
\end{equation}

For the soft-plus function, it can be first expressed as Taylor series 
    \begin{equation}
    \begin{aligned}
        \sigma(Z(t)) &= \log ( {1 + {e^{Z(t)}}}) 
        \\&= \log (2) + \frac{1}{2}Z(t) + \frac{1}{8}{Z(t)^2} + O(Z(t)^4),
    \end{aligned}
\end{equation}
in which $Z(t)$ is small since the kernel $K(t)$ is small and $|X(t)| < 1$ by the assumption.
% Notably, the numeric scale of feature and kernel are usually limited to a small value for the numeric stability of networks. This rationalize the assumption of small $X(t)$ and $K(t)$.
Hence, $O(Z(t)^4)$ becomes negligible. 
The Fourier transform of $\sigma(Z(t))$ is thus given as
\begin{equation}
    \begin{aligned}
        y(\nu) & = {\cal F}(\sigma( Z(t))) 
        \\ & \simeq {\cal F}( {\log ( 2 ) + \frac{1}{2}Z(t) + \frac{1}{8}{Z(t)^2}} )
        \\ & = 2\pi log(2) \delta_{0}(\nu)+\frac{1}{2} z(\nu)+\frac{1}{8} z(\nu)\otimes z(\nu),
    \end{aligned}
\end{equation}
and its spectral gradient is 
  \begin{equation}
    \begin{aligned}
        \frac{\partial y(\nu_i)}{\partial z(\nu_j)} & \simeq \frac{\partial }{\partial{z(\nu_j)}}( 2\pi log(2) \delta_{0}(\nu_i) 
        \\ & + \frac{1}{2} z(\nu_i)+\frac{1}{8} z(\nu_i)\otimes z(\nu_i) )
        \\ & = \frac{\partial }{\partial z(\nu_j)}(\frac{1}{2}z( {{\nu _i}} ) + \frac{1}{8} \sum_{r=0}^{n-1} z( {{\nu _i - \nu _r}} )z( {{\nu _r}} ))
        \\ & = \frac{1}{2}{{\delta}_{\nu _j}(\nu _i)} + \frac{1}{4}z( \nu_i - \nu_j),
    \end{aligned}
    \label{Eq:act_spec_grad}
\end{equation}
where $r$ is a dummy variable for the convolution and $n$ is the spectrum size of features. By Eq.~\ref{Eq:conv_spec_grad} and Eq.~\ref{Eq:act_spec_grad}, the spectral gradient of a convolutional layer in Eq.~\ref{Eq:conv_layer} is then written as
\begin{equation}
    \begin{aligned}
         \frac{\partial y(\nu_i)}{\partial x(\nu_j)} & = \sum _{q=0}^n \frac{{\partial y(\nu_i)}}{{\partial z(\nu_q )}}\frac{{\partial z(\nu_q)}}{{\partial x(\nu_j)}}
        \\ & \simeq \sum _{q=0}^n ((\frac{1}{2}{{\delta}_{\nu_q}(\nu_i)} + \frac{1}{4}z(\nu _i - \nu_q ))k( \nu_q){{\delta}_{\nu_j}(\nu_q)}) 
        \\ & = k( \nu_j)[\frac{1}{2}{{\delta}_{\nu_j}(\nu_i)} + \frac{1}{4}z(\nu_i-\nu_j)],
    \end{aligned}
    \label{Eq:convlayer_spec_grad_supp}
\end{equation}
where $i, j, \text{and~} q$ are the frequency indices.
Since $Z(t)$ is small as argued above, the corresponding spectrum $z(\nu)$ should also be small. We can therefore neglect the second term of Eq.~\ref{Eq:convlayer_spec_grad_supp},~\ie $\frac{1}{4}z(\nu_i-\nu_j)$, and approximate Eq.~\ref{Eq:convlayer_spec_grad_supp} as
\begin{equation}
    \begin{aligned}
        \frac{\partial y(\nu_i)}{\partial x(\nu_j)} \simeq \frac{1}{2}k( \nu_j){{\delta}_{\nu_j}(\nu_i)}
    \end{aligned}
\end{equation}
\end{proof}

\subsection{Gradient propagation for the frequency component of CE}\label{ssec:sgd_ce}
\begin{lemma}\label{lemma:ce_grad}
Given a convolutional layer that satisfies the assumption of lemma~\ref{lemma:conv_grad}. Let $x(\nu)$ denote the spectrum of input feature. 
For each semantic class $c$ in segmentation maps, let $k(\nu, c)$ and $y(\nu,c)$ denote the spectrum of kernel and that of the segmentation output, respectively. 
The spectral gradient for the frequency component of CE, $\mathcal{L}_{ce}(\nu_i)$ is 
\begin{equation}
    \begin{aligned}
        \frac{{\partial \mathcal{L}_{ce}(\nu_i)}}{{\partial x(\nu_j)}} 
         \simeq & \sum _c \frac{1}{2}k(\nu_j,c)[D_{0}(\nu_i) s(-\nu_j, c) - \delta_{\nu_j}(\nu_i)b(-\nu_i,c)],
    \end{aligned}
\end{equation}
where $\delta_{\nu_j}(\nu_i)$ is the Kronecker delta function and $D_{0}(\nu_i)$ is the Dirac delta function.
\end{lemma}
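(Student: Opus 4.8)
The plan is to combine the chain rule with the convolutional spectral gradient of Lemma~\ref{lemma:conv_grad} and the closed form of $\mathcal{L}_{ce}(\nu)$ from Theorem~\ref{theorem:ce}. Writing $\mathcal{L}_{ce}(\nu_i) = \sum_c b(-\nu_i,c)\bigl(y_p(\nu_i) - y(\nu_i,c)\bigr)$, the groundtruth spectrum $b$ is a fixed constant, so the only dependence on the input feature enters through the logit spectra $y(\cdot,c)$ and, via the log-sum-exp, through $y_p$. First I would expand, over the discrete frequency grid,
\[
\frac{\partial \mathcal{L}_{ce}(\nu_i)}{\partial x(\nu_j)} = \sum_c \sum_{\nu_q} \frac{\partial \mathcal{L}_{ce}(\nu_i)}{\partial y(\nu_q,c)}\,\frac{\partial y(\nu_q,c)}{\partial x(\nu_j)},
\]
and substitute $\partial y(\nu_q,c)/\partial x(\nu_j) \simeq \tfrac12 k(\nu_j,c)\,\delta_{\nu_j}(\nu_q)$ from Lemma~\ref{lemma:conv_grad}; the Kronecker delta collapses the inner sum to $\nu_q=\nu_j$, leaving $\tfrac12\sum_c k(\nu_j,c)\,\partial \mathcal{L}_{ce}(\nu_i)/\partial y(\nu_j,c)$.

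It then remains to evaluate $\partial \mathcal{L}_{ce}(\nu_i)/\partial y(\nu_j,c)$. The term $-\sum_{c'} b(-\nu_i,c')\,y(\nu_i,c')$ contributes $-b(-\nu_i,c)\,\delta_{\nu_j}(\nu_i)$ at once. For the $y_p$ term I would pass through the spatial domain: since $Y_p(t) = \log\sum_{c'} e^{Y(t,c')}$ has the softmax derivative $\partial Y_p(t)/\partial Y(t,c) = S(t,c)$, and since $Y(t,c) = \sum_\nu y(\nu,c) e^{2j\pi\nu t}$ and $y_p(\nu_i) = \sum_t Y_p(t) e^{-2j\pi\nu_i t}$, the chain rule through $Y$ gives $\partial y_p(\nu_i)/\partial y(\nu_j,c) = \sum_t S(t,c)\,e^{-2j\pi(\nu_i-\nu_j)t} = s(\nu_i-\nu_j,c)$, a shift of the segmentation-output spectrum. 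Hence $\partial \mathcal{L}_{ce}(\nu_i)/\partial y(\nu_j,c) = \bigl(\sum_{c'} b(-\nu_i,c')\bigr)\, s(\nu_i-\nu_j,c) - b(-\nu_i,c)\,\delta_{\nu_j}(\nu_i)$.

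The last ingredient is that a one-hot annotation satisfies $\sum_{c'} B(t,c')\equiv 1$, so $\sum_{c'} b(\nu,c') = \mathcal{F}(1)(\nu) = D_0(\nu)$, the Dirac delta, and by evenness $\sum_{c'} b(-\nu_i,c') = D_0(\nu_i)$. Plugging this in and invoking the sifting property of $D_0$ (it is supported at $\nu_i=0$, where $s(\nu_i-\nu_j,c)=s(-\nu_j,c)$) converts the first term into $D_0(\nu_i)\,s(-\nu_j,c)$; multiplying by the prefactor $\tfrac12 k(\nu_j,c)$ and summing over $c$ gives exactly the claimed formula. The hard part will be the $y_p$ Jacobian step: one must track the forward and inverse Fourier maps carefully so that the softmax derivative $S(t,c)$ produces the correct frequency shift $s(\nu_i-\nu_j,c)$ rather than a convolution, after which the remaining work is bookkeeping together with the two delta-function identities. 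I would also note explicitly that the approximation symbol ``$\simeq$'' is inherited solely from Lemma~\ref{lemma:conv_grad} (the discarded $\tfrac14 z(\nu_i-\nu_j)$ term), no further linearization being needed here.
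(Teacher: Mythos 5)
Your proposal is correct and follows essentially the same route as the paper's proof: chain rule through the logit spectra, collapsing the inner frequency sum via Lemma~\ref{lemma:conv_grad}, computing $\partial y_p(\nu_i)/\partial y(\nu_j,c)=s(\nu_i-\nu_j,c)$ by passing through the spatial domain with the softmax derivative, and invoking $\sum_{c'}b(-\nu_i,c')=D_0(\nu_i)$. Your explicit remark that the support of $D_0$ converts $s(\nu_i-\nu_j,c)$ into $s(-\nu_j,c)$ is a detail the paper leaves implicit, but the argument is otherwise identical.
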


\begin{proof}
By lemma~\ref{lemma:conv_grad} and Eq.~\ref{Eq:spec_ce_component}, the spectral gradient $\frac{{\partial \mathcal{L}_{ce}(\nu_i)}}{{\partial x(\nu_j)}}$ is given as
\begin{equation}
    \begin{aligned}
        \frac{{\partial \mathcal{L}_{ce}(\nu_i)}}{{\partial x(\nu_j)}} 
        & = \sum_c \sum_q \frac{{\partial \mathcal{L}_{ce}(\nu_i)}}{{\partial y(\nu_q,c)}}\frac{{\partial y(\nu_q,c)}}{{\partial x(\nu_j)}}
        \\ & \simeq \sum_c \sum_q \frac{1}{2} \frac{{\partial \mathcal{L}_{ce}(\nu_i)}}{{\partial y(\nu_q,c)}} k( \nu_j,c){{\delta}_{\nu_j}(\nu_q)}
        \\ & = \sum_c \frac{1}{2} \frac{{\partial \mathcal{L}_{ce}(\nu_i)}}{{\partial y(\nu_j,c)}} k(\nu_j,c)
        \\ & = \sum_c \frac{1}{2}k(\nu_j,c) \frac
        {\partial    
        {~\sum _{\widetilde{c}} b(-\nu_i,\widetilde{c})({y_p}(\nu_i) - y(\nu_i,\widetilde{c}))}}
        {\partial y(\nu_j,c)} 
        \\ & = \sum _c \frac{1}{2}k(\nu_j,c)         [(\frac{\partial {y_p}(\nu_i)}{\partial y(\nu_j,c)} \sum \limits_{\widetilde{c}} b(-\nu_i,\widetilde{c})) \\ & -  (\sum \limits_{\widetilde{c}} \frac {\partial y(\nu_i,\widetilde{c})} {\partial y(\nu_j,c)}b(-\nu_i,\widetilde{c}))],
    \end{aligned}
\label{Eq:ce_spec_grad1}
\end{equation}
in which
\begin{equation}
    \begin{aligned}
        \frac{\partial y_p(\nu_i)}{\partial y(\nu_j,c)} 
        & = \int \frac{\partial y_p(\nu_i)}{\partial Y(t,c)}\frac{\partial Y(t,c)}{\partial y(\nu_j,c)} dt
        \\ & = \int \frac{\partial {\mathcal{F}}(Y_p(t))}{\partial Y(t,c)}\frac{\partial {\mathcal{F}}^{-1}(y(\nu,c))}{\partial y(\nu_j,c)} dt
        \\ & = \int \frac{\partial \int log({\sum _c} e^{Y(t,c)}) e^{-2j\pi\nu_i t}dt}{\partial Y(t,c)}
        \\ &\frac{\partial {\int y(\nu,c) e^{2j\pi\nu t}d\nu}}{\partial y(\nu_j,c)} dt
        % \\ & = \int \frac{e^{Y(t,c)}}{{\sum _c} e^{Y(t,c)}} e^{-2j\pi\nu_i t}dt~e^{2j\pi\nu_j t} 
        \\ & = \int \frac{e^{Y(t,c)}}{{\sum _c} e^{Y(t,c)}} e^{-2j\pi(\nu_i-\nu_j) t}dt
        \\ & = \int S(t,c) e^{-2j\pi(\nu_i-\nu_j) t}dt
        = s(\nu_i-\nu_j, c),
    \end{aligned}
\label{Eq:amax_spec_grad}
\end{equation}
where $S(t,c)$ is the segmentation output after performing softmax on logits $Y$ and $s(\nu,c)$ is the spectrum of the segmentation output. 
Further, we have 
\begin{equation}
    \begin{aligned}
        \sum _{\widetilde{c}} b(-\nu_i,\widetilde{c}) = D_{0}(\nu_i)
    \end{aligned}
    \label{Eq:sum_b}
\end{equation}
by the Fourier transform of $\sum _{\widetilde{c}} B(t,\widetilde{c}) = 1$, \ie the fact that $B$ denote the probability distribution over semantic classes and should sum to one for each pixel.
Substituting Eq.~\ref{Eq:amax_spec_grad} and Eq.~\ref{Eq:sum_b} into Eq.~\ref{Eq:ce_spec_grad1}, we have the overall spectral gradient as 
\begin{equation}
    \begin{aligned}
        \frac{{\partial \mathcal{L}_{ce}(\nu_i)}}{{\partial x(\nu_j)}} 
         \simeq & \sum _c \frac{1}{2}k(\nu_j,c)
         \\ & [D_{0}(\nu_i) s(-\nu_j, c) - \delta_{\nu_j}(\nu_i)b(-\nu_i,c)]
    \end{aligned}
\end{equation}
\end{proof}

\subsection{Implementation details}\label{ssec:implement}
% \subsection{Datasets}\label{ssec:dataset}
\noindent\textbf{Datasets.}
We examine the experiments upon the following three semantic segmentation datasets: PASCAL semantic segmentation benchmark \cite{everingham2015pascal}, DeepGlobe land-cover classification challenge \cite{demir2018deepglobe} and Cityscapes pixel-level semantic labeling task \cite{cordts2016cityscapes} (denoted as PASCAL, DeepGlobe and Cityscapes respectively). The PASCAL dataset contains 21 categories, 1464 training images, and 1449 validation images; the dataset further augmented by the extra annotations from \cite{BharathICCV2011}. The DeepGlobe dataset contains 7 categories, 803 training images, which are split into 701 and 102 images for training and validation, respectively. The Cityscapes dataset contains 19 categories, 2975 training images, and 500 validation images.

% \subsection{Segmentation networks and implementation details}\label{ssec:implement}
\noindent\textbf{Segmentation networks and implementation details.} 
In our experiment, we utilize the standard segmentation networks including DeepLab v3+~\cite{chen2018encoder} and Deep Aggregation Net (Dan)~\cite{kuo2018dan}. We adopt the ResNet-101~\cite{he2016deep} pre-trained on ImageNet-1k~\cite{russakovsky2015imagenet} as the backbone of these networks. These networks are trained by the following training policies: For all datasets, the images are randomly cropped to 513$\times$513 pixels; the training batch size are 8. For the PASCAL dataset, the network is trained with initial learning rate 0.0007 and 100 epochs; for DeepGlobe dataset, the network is trained with initial learning rate 0.007 and 600 epochs; for Cityscapes dataset, the network is trained with initial learning rate 0.001 and 200 epochs. For evaluation, the images are cropped to 513$\times$513 pixels for all datasets for consistent image size in spectral analysis.

% \subsection{Evaluation of Spectral Gradient}
% \begin{figure}[ht]
%     \centering
%     % \includegraphics[width=0.6\columnwidth]{figures/spectral_gradient_unit.png} %dc
%     \includegraphics[width=0.45\columnwidth]{figures/spectral_gradient_unit.png}
%     \caption{The evaluation of spectral gradient ${{\partial y(\nu_i)}}/{{\partial x(\nu_j)}}$. The spectral gradient for the convolution, ReLU, and bilinear up-sampling operations are denoted as Conv, ReLU, Upsample, respectively.}
%     \label{Fig:spec_grad}
% \end{figure}

\subsection{Experimental Data of Feature Truncation}\label{ssec:feat_detail}

\begin{table*}[h]
\centering
\begin{subtable}[h]{0.565\textwidth} %dc
    \resizebox{1.0\columnwidth}{!}{
    \begin{tabular}{ccccccc}
    \hline
    \multicolumn{1}{c|}{LRG} & \multicolumn{2}{c|}{PASCAL} & \multicolumn{2}{c|}{DeepGlobe} & \multicolumn{2}{c}{CityScapse} \\ \cline{2-7} 
    \multicolumn{1}{c|}{size} & mIoU & \multicolumn{1}{c|}{relative mIoU-drop} & mIoU & \multicolumn{1}{c|}{relative mIoU-drop} & mIoU & relative mIoU-drop \\ \hline
    \multicolumn{7}{c}{Baseline} \\ \hline
    \multicolumn{1}{c|}{129} & 78.5\% & \multicolumn{1}{c|}{0.0\%} & 54.3\% & \multicolumn{1}{c|}{0.0\%} & 67.8\% & 0.0\% \\
    \multicolumn{1}{c|}{97} & 78.4\% & \multicolumn{1}{c|}{0.1\%} & 54.2\% & \multicolumn{1}{c|}{0.2\%} & 65.8\% & 2.9\% \\
    \multicolumn{1}{c|}{81} & 78.3\% & \multicolumn{1}{c|}{0.3\%} & 54.1\% & \multicolumn{1}{c|}{0.3\%} & 64.5\% & 4.8\% \\
    \multicolumn{1}{c|}{65} & 78.1\% & \multicolumn{1}{c|}{0.5\%} & 54.0\% & \multicolumn{1}{c|}{0.5\%} & 62.7\% & 7.4\% \\
    \multicolumn{1}{c|}{49} & 77.4\% & \multicolumn{1}{c|}{1.3\%} & 53.8\% & \multicolumn{1}{c|}{0.9\%} & 59.4\% & 12.4\% \\
    \multicolumn{1}{c|}{33} & 75.6\% & \multicolumn{1}{c|}{3.6\%} & 53.4\% & \multicolumn{1}{c|}{1.7\%} & 53.3\% & 21.3\% \\
    \multicolumn{1}{c|}{17} & 68.6\% & \multicolumn{1}{c|}{12.5\%} & 52.0\% & \multicolumn{1}{c|}{4.1\%} & 38.6\% & 43.0\% \\ \hline
    \multicolumn{7}{c}{20\% PR for encoder} \\ \hline
    \multicolumn{1}{c|}{129} & 76.6\% & \multicolumn{1}{c|}{0.0\%} & 53.7\% & \multicolumn{1}{c|}{0.0\%} & 67.2\% & 0.0\% \\
    \multicolumn{1}{c|}{97} & 76.3\% & \multicolumn{1}{c|}{0.4\%} & 53.6\% & \multicolumn{1}{c|}{0.1\%} & 65.3\% & 2.8\% \\
    \multicolumn{1}{c|}{81} & 76.1\% & \multicolumn{1}{c|}{0.6\%} & 53.5\% & \multicolumn{1}{c|}{0.3\%} & 64.0\% & 4.7\% \\
    \multicolumn{1}{c|}{65} & 76.0\% & \multicolumn{1}{c|}{0.8\%} & 53.4\% & \multicolumn{1}{c|}{0.6\%} & 62.4\% & 7.1\% \\
    \multicolumn{1}{c|}{49} & 75.0\% & \multicolumn{1}{c|}{2.1\%} & 53.2\% & \multicolumn{1}{c|}{1.0\%} & 59.1\% & 12.1\% \\
    \multicolumn{1}{c|}{33} & 73.2\% & \multicolumn{1}{c|}{4.4\%} & 52.7\% & \multicolumn{1}{c|}{1.9\%} & 53.4\% & 20.5\% \\
    \multicolumn{1}{c|}{17} & 64.1\% & \multicolumn{1}{c|}{16.3\%} & 51.4\% & \multicolumn{1}{c|}{4.2\%} & 38.7\% & 42.4\% \\ \hline
    \multicolumn{7}{c}{40\% PR for encoder} \\ \hline
    \multicolumn{1}{c|}{129} & 74.4\% & \multicolumn{1}{c|}{0.0\%} & 53.0\% & \multicolumn{1}{c|}{0.0\%} & 66.1\% & 0.0\% \\
    \multicolumn{1}{c|}{97} & 74.2\% & \multicolumn{1}{c|}{0.2\%} & 52.9\% & \multicolumn{1}{c|}{0.1\%} & 64.3\% & 2.8\% \\
    \multicolumn{1}{c|}{81} & 74.1\% & \multicolumn{1}{c|}{0.4\%} & 52.8\% & \multicolumn{1}{c|}{0.3\%} & 63.0\% & 4.6\% \\
    \multicolumn{1}{c|}{65} & 73.9\% & \multicolumn{1}{c|}{0.7\%} & 52.7\% & \multicolumn{1}{c|}{0.6\%} & 61.5\% & 7.0\% \\
    \multicolumn{1}{c|}{49} & 73.4\% & \multicolumn{1}{c|}{1.4\%} & 52.5\% & \multicolumn{1}{c|}{1.0\%} & 58.2\% & 11.9\% \\
    \multicolumn{1}{c|}{33} & 72.1\% & \multicolumn{1}{c|}{3.1\%} & 52.0\% & \multicolumn{1}{c|}{1.9\%} & 52.8\% & 20.1\% \\
    \multicolumn{1}{c|}{17} & 66.8\% & \multicolumn{1}{c|}{10.2\%} & 50.8\% & \multicolumn{1}{c|}{4.1\%} & 39.8\% & 39.8\% \\ \hline
    \multicolumn{7}{c}{60\% PR for encoder} \\ \hline
    \multicolumn{1}{c|}{129} & 65.1\% & \multicolumn{1}{c|}{0.0\%} & 48.9\% & \multicolumn{1}{c|}{0.0\%} & 58.6\% & 0.0\% \\
    \multicolumn{1}{c|}{97} & 65.0\% & \multicolumn{1}{c|}{0.1\%} & 48.8\% & \multicolumn{1}{c|}{0.2\%} & 57.2\% & 2.4\% \\
    \multicolumn{1}{c|}{81} & 64.9\% & \multicolumn{1}{c|}{0.3\%} & 48.7\% & \multicolumn{1}{c|}{0.3\%} & 56.3\% & 4.0\% \\
    \multicolumn{1}{c|}{65} & 64.7\% & \multicolumn{1}{c|}{0.6\%} & 48.6\% & \multicolumn{1}{c|}{0.6\%} & 54.8\% & 6.5\% \\
    \multicolumn{1}{c|}{49} & 64.4\% & \multicolumn{1}{c|}{1.1\%} & 48.4\% & \multicolumn{1}{c|}{1.0\%} & 52.1\% & 11.0\% \\
    \multicolumn{1}{c|}{33} & 63.3\% & \multicolumn{1}{c|}{2.8\%} & 47.9\% & \multicolumn{1}{c|}{2.0\%} & 47.4\% & 19.1\% \\
    \multicolumn{1}{c|}{17} & 58.9\% & \multicolumn{1}{c|}{9.5\%} & 46.6\% & \multicolumn{1}{c|}{4.7\%} & 36.3\% & 38.0\%
    \end{tabular}    
    }
    \caption{DeepLab v3+}
    \label{table:feat_trunc:deeplab}
\end{subtable}

\begin{subtable}[h]{0.565\textwidth} %dc
    \resizebox{1.0\columnwidth}{!}{
    \begin{tabular}{ccccccc}
    \multicolumn{1}{c|}{LRG} & \multicolumn{2}{c|}{PASCAL} & \multicolumn{2}{c|}{DeepGlobe} & \multicolumn{2}{c}{CityScapse} \\ \cline{2-7} 
    \multicolumn{1}{c|}{size} & mIoU & \multicolumn{1}{c|}{relative mIoU-drop} & mIoU & \multicolumn{1}{c|}{relative mIoU-drop} & mIoU & relative mIoU-drop \\ \hline
    \multicolumn{7}{c}{Baseline} \\ \hline
    \multicolumn{1}{c|}{129} & 77.6\% & \multicolumn{1}{c|}{0.0\%} & 53.6\% & \multicolumn{1}{c|}{0.0\%} & 66.4\% & 0.0\% \\
    \multicolumn{1}{c|}{97} & 77.3\% & \multicolumn{1}{c|}{0.3\%} & 53.5\% & \multicolumn{1}{c|}{0.2\%} & 64.6\% & 2.7\% \\
    \multicolumn{1}{c|}{81} & 77.1\% & \multicolumn{1}{c|}{0.6\%} & 53.4\% & \multicolumn{1}{c|}{0.3\%} & 63.2\% & 4.8\% \\
    \multicolumn{1}{c|}{65} & 76.7\% & \multicolumn{1}{c|}{1.1\%} & 53.3\% & \multicolumn{1}{c|}{0.5\%} & 61.0\% & 8.1\% \\
    \multicolumn{1}{c|}{49} & 75.8\% & \multicolumn{1}{c|}{2.2\%} & 53.2\% & \multicolumn{1}{c|}{0.8\%} & 57.2\% & 13.9\% \\
    \multicolumn{1}{c|}{33} & 73.4\% & \multicolumn{1}{c|}{5.4\%} & 52.7\% & \multicolumn{1}{c|}{1.7\%} & 50.0\% & 24.8\% \\
    \multicolumn{1}{c|}{17} & 62.2\% & \multicolumn{1}{c|}{19.8\%} & 50.7\% & \multicolumn{1}{c|}{5.5\%} & 29.6\% & 55.4\% \\ \hline
    \multicolumn{7}{c}{20\% PR for encoder} \\ \hline
    \multicolumn{1}{c|}{129} & 76.8\% & \multicolumn{1}{c|}{0.0\%} & 53.6\% & \multicolumn{1}{c|}{0.0\%} & 65.8\% & 0.0\% \\
    \multicolumn{1}{c|}{97} & 76.5\% & \multicolumn{1}{c|}{0.4\%} & 53.4\% & \multicolumn{1}{c|}{0.3\%} & 64.1\% & 2.7\% \\
    \multicolumn{1}{c|}{81} & 76.3\% & \multicolumn{1}{c|}{0.6\%} & 53.3\% & \multicolumn{1}{c|}{0.5\%} & 62.6\% & 4.8\% \\
    \multicolumn{1}{c|}{65} & 75.9\% & \multicolumn{1}{c|}{1.2\%} & 53.2\% & \multicolumn{1}{c|}{0.7\%} & 60.3\% & 8.4\% \\
    \multicolumn{1}{c|}{49} & 75.0\% & \multicolumn{1}{c|}{2.3\%} & 52.9\% & \multicolumn{1}{c|}{1.3\%} & 56.7\% & 13.8\% \\
    \multicolumn{1}{c|}{33} & 72.8\% & \multicolumn{1}{c|}{5.2\%} & 52.4\% & \multicolumn{1}{c|}{2.2\%} & 49.0\% & 25.5\% \\
    \multicolumn{1}{c|}{17} & 61.7\% & \multicolumn{1}{c|}{19.6\%} & 50.1\% & \multicolumn{1}{c|}{6.6\%} & 28.3\% & 57.0\% \\ \hline
    \multicolumn{7}{c}{40\% PR for encoder} \\ \hline
    \multicolumn{1}{c|}{129} & 74.6\% & \multicolumn{1}{c|}{0.0\%} & 52.3\% & \multicolumn{1}{c|}{0.0\%} & 65.1\% & 0.0\% \\
    \multicolumn{1}{c|}{97} & 74.4\% & \multicolumn{1}{c|}{0.3\%} & 52.1\% & \multicolumn{1}{c|}{0.3\%} & 63.4\% & 2.6\% \\
    \multicolumn{1}{c|}{81} & 74.2\% & \multicolumn{1}{c|}{0.6\%} & 52.0\% & \multicolumn{1}{c|}{0.5\%} & 62.0\% & 4.7\% \\
    \multicolumn{1}{c|}{65} & 73.8\% & \multicolumn{1}{c|}{1.1\%} & 51.9\% & \multicolumn{1}{c|}{0.8\%} & 59.8\% & 8.1\% \\
    \multicolumn{1}{c|}{49} & 73.1\% & \multicolumn{1}{c|}{2.0\%} & 51.5\% & \multicolumn{1}{c|}{1.5\%} & 56.2\% & 13.6\% \\
    \multicolumn{1}{c|}{33} & 71.3\% & \multicolumn{1}{c|}{4.4\%} & 50.7\% & \multicolumn{1}{c|}{3.1\%} & 48.9\% & 24.9\% \\
    \multicolumn{1}{c|}{17} & 62.0\% & \multicolumn{1}{c|}{16.9\%} & 48.3\% & \multicolumn{1}{c|}{7.6\%} & 29.0\% & 55.4\% \\ \hline
    \multicolumn{7}{c}{60\% PR for encoder} \\ \hline
    \multicolumn{1}{c|}{129} & 65.6\% & \multicolumn{1}{c|}{0.0\%} & 49.5\% & \multicolumn{1}{c|}{0.0\%} & 57.0\% & 0.0\% \\
    \multicolumn{1}{c|}{97} & 65.4\% & \multicolumn{1}{c|}{0.3\%} & 49.5\% & \multicolumn{1}{c|}{0.0\%} & 55.5\% & 2.7\% \\
    \multicolumn{1}{c|}{81} & 65.3\% & \multicolumn{1}{c|}{0.5\%} & 49.5\% & \multicolumn{1}{c|}{0.1\%} & 54.2\% & 4.9\% \\
    \multicolumn{1}{c|}{65} & 65.1\% & \multicolumn{1}{c|}{0.8\%} & 49.4\% & \multicolumn{1}{c|}{0.2\%} & 52.4\% & 8.1\% \\
    \multicolumn{1}{c|}{49} & 64.5\% & \multicolumn{1}{c|}{1.6\%} & 49.3\% & \multicolumn{1}{c|}{0.5\%} & 49.0\% & 14.0\% \\
    \multicolumn{1}{c|}{33} & 63.0\% & \multicolumn{1}{c|}{4.0\%} & 48.9\% & \multicolumn{1}{c|}{1.2\%} & 42.8\% & 24.9\% \\
    \multicolumn{1}{c|}{17} & 54.9\% & \multicolumn{1}{c|}{16.3\%} & 47.9\% & \multicolumn{1}{c|}{3.3\%} & 26.7\% & 53.2\%
    \end{tabular}    
    }
    \caption{Dan}
    \label{table:feat_trunc:dan}
\end{subtable}

\caption{Results for feature truncation and network pruning on DeepLab v3+ and Dan. 
    We summarize the results of using 4 setups of network pruning: "Baseline" denote the experiment without SFP while "X PR for encoder" denotes that with SFP where X = (20\%, 40\%, and 60\%) are the pruning rates. 
    For each setup of network pruning, we further evaluate the results with 7 LRG sizes for feature truncation, \ie (129,97,81,65,49,33, and 17).}
\label{table:feat_trunc}
\end{table*}

\begin{figure*}[h]
\captionsetup[subfigure]{}
\centering
\begin{subfigure}[ht]{2\columnwidth}
    \centering
    \includegraphics[width=1.\columnwidth]{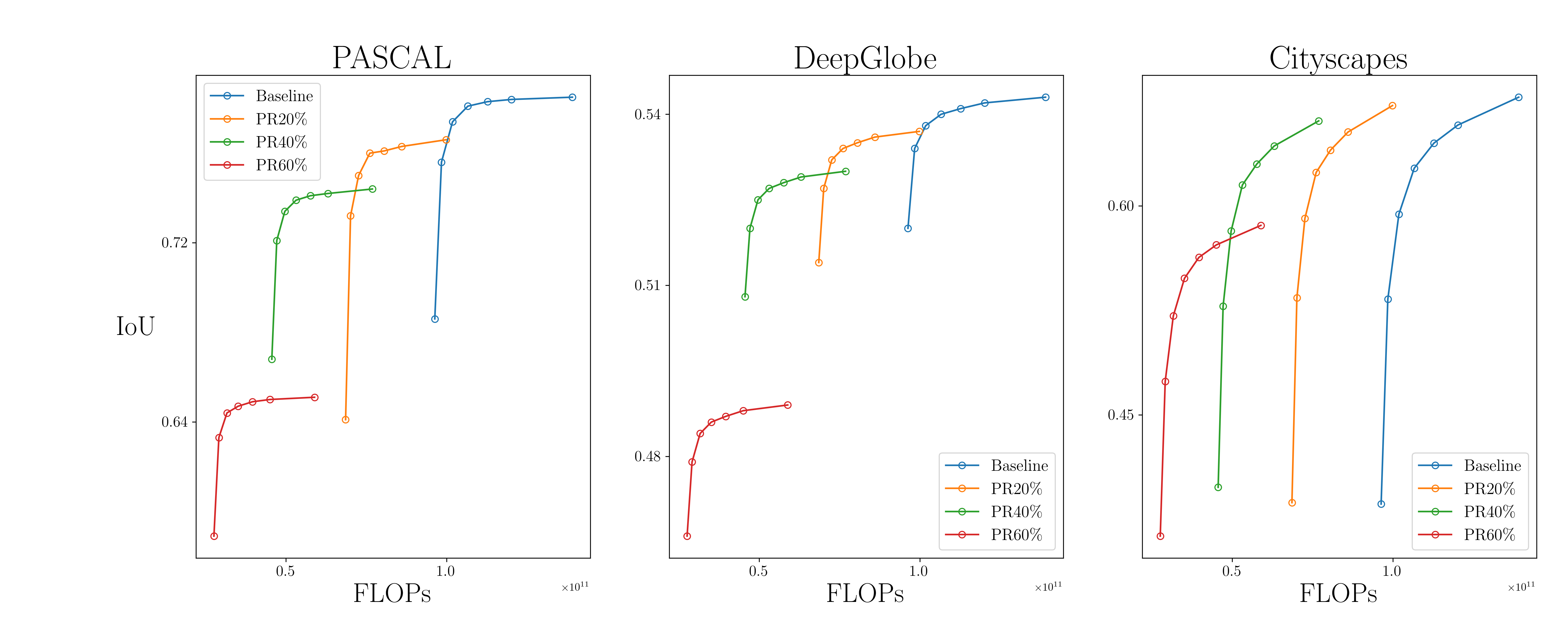}  
    \caption{DeepLab v3+}
    \label{Fig:iou_flops_deeplab}
\end{subfigure}
\\
\begin{subfigure}[ht]{2\columnwidth}
    \centering
    \includegraphics[width=1.\columnwidth]{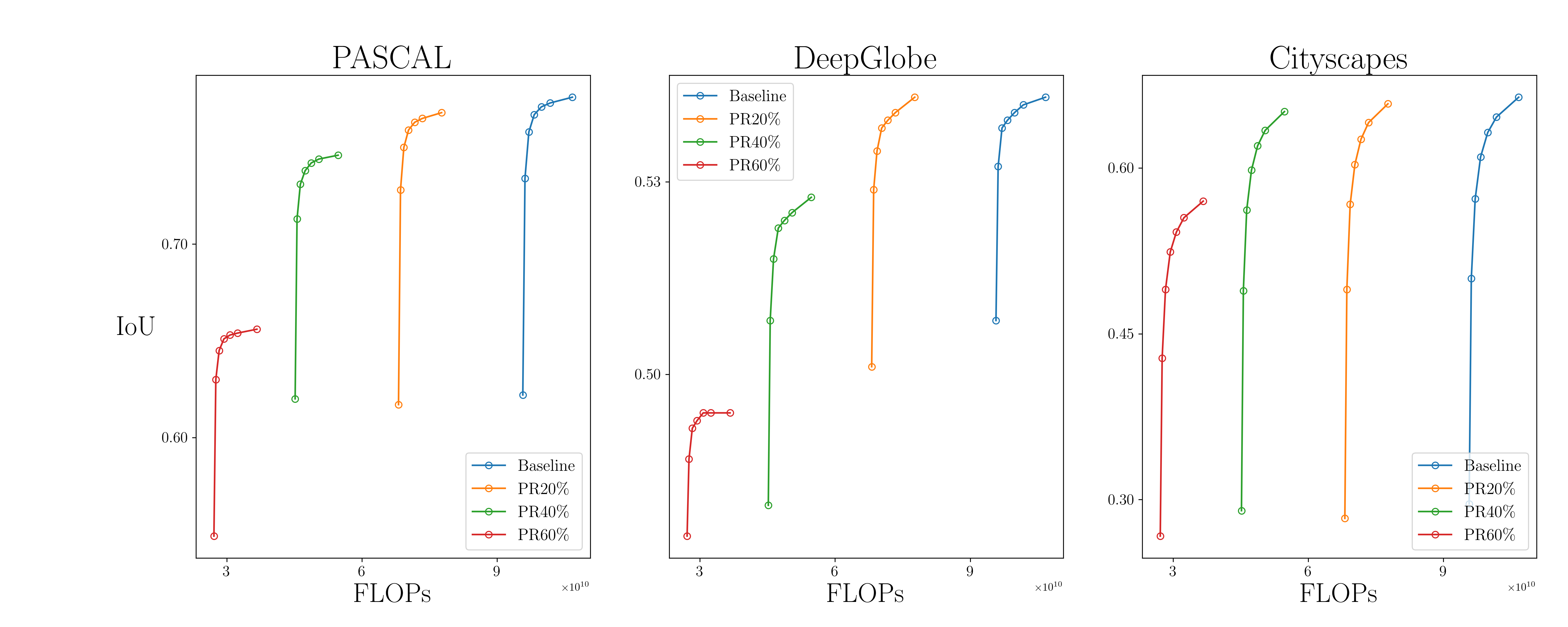}
    \caption{Dan}
    \label{Fig:iou_flops_dan}
\end{subfigure}
\caption{ 
IoU vs. FLOPs for DeepLab v3+ and Dan with various prune rates of SFP and LRG sizes for the feature truncation. Each line consists of 7 data with LRG sizes 129,97,81,65,49,33, and 17; the larger the LRG size, the higher the FLOPs.}
\label{Fig:iou_flops}
\end{figure*}

Evaluated DeepLab v3+ and Dan upon the PASCAL, DeepGlobe and Cityscapes datasets, table~\ref{table:feat_trunc} summarize the IoU score for each setup of feature truncation. Furthermore, Fig.~\ref{Fig:iou_flops} illustrate the plot of IoU score with respect to corresponding FLOPs (cf. table~\ref{table:feat_trunc_flops:flops}). 
For the experiment upon each dataset, the "mIoU" and "relative mIoU-drop" are evaluated, where "mIoU" is the mean IoU score over all semantic classes; "relative mIoU-drop" is the relative deduction rate of mIoU with respect to that of the model with same SFP setup and LRG size 129. 
Apparently from the tables, the mIoU decreases as either the PR increases or as the LRG size decreases. 
Closer inspection of these tables show that the relative mIoU-drop of the experiments on the Cityscapes dataset are significantly larger than that on the PASCAL and DeepGlobe datasets. Taking the "Baseline" model of DeepLab v3+ with LRG size 65 as an example, the relative mIoU-drop are 0.5\% and 0.7\% for PASCAL and DeepGlobe datasets, respectively, while becomes 7.4\% for the Cityscapes datasets. The same trends holds for the experiments with other SFP setup and LRG size based on DeepLab v3+. Also, the similar results are observed in the experiment on Dan as shown in Table~\ref{table:feat_trunc:dan}. 
These significant mIoU-drop on Cityscapes is well explained by the theoretical estimation $R_{ce}({\nu}_{max})$ in Table~\ref{table:spectral_stat}, where $R_{ce}({\nu}_{max})$ depict the relative loss rate of CE evaluated that positively correlates to relative mIoU-drop as discussed in section~\ref{ssec:method_spectral_ioU}. 
% We further illustrate the correlation between the relative mIoU-drop (cf. Table~\ref{table:feat_trunc}) and $R_{ce}({\nu}_{max})$ (cf. Table~\ref{table:spectral_stat}) in Fig.~\ref{Fig:feature_truncation_iou_R}; where we plot the data with ${\nu}_{max}=[16,32,256]$. The positive correlation in the figure agrees our theoretical statement.

\section{Acknowledgement}\label{sec:acknow}
Authors acknowledge the support of the Ministry of Science and Technology of Taiwan (MOST110-2115-M-A49-003-MY2).

{\small
\bibliographystyle{aaai}
\bibliography{egbib}
}

\end{document}